\providecommand{\tabularnewline}{\\}
\newtheorem{thm}{Theorem}
\newtheorem{assum}{Assumption}
\newtheorem{lemma}{Lemma}
\newtheorem{definition}{Definition}
\newtheorem{corollary}{Corollary}
\newcommand{\RR}{\mathbb{R}}
\newcommand{\calL}{\mathcal{L}}
\newcommand{\eL}{\widehat{\mathcal{L}}_n}
\newcommand{\eR}{\widehat{\mathcal{R}}_n}
\newcommand{\supp}{\mathrm{supp}}
\newtheorem{remark}{Remark}
\def\eqref#1{equation~\ref{#1}}
\def\1{\bm{1}}
\DeclareMathAlphabet{\mathsfit}{\encodingdefault}{\sfdefault}{m}{sl}
\SetMathAlphabet{\mathsfit}{bold}{\encodingdefault}{\sfdefault}{bx}{n}
\DeclareMathOperator{\sign}{sign}
\newcommand{\yanwei}[1]{{\color{black}{#1}}}
\begin{document}
%
\title{Exploring Structural Sparsity of Deep Networks via Inverse Scale Spaces}
%
%
%
%

\author{Yanwei Fu, Chen Liu, Donghao Li, Zuyuan Zhong, Xinwei Sun, Jinshan Zeng, Yuan Yao$^\dag$ 
\IEEEcompsocitemizethanks{
\IEEEcompsocthanksitem $^\dag$ corresponding author whose Email address: \url{yuany@ust.hk}. 
\IEEEcompsocthanksitem Yanwei Fu, Xinwei Sun and Zuyuan Zhong are with School of Data Science,  Shanghai Key Lab of Intelligent Information Processing, Fudan University, and Fudan ISTBI—ZJNU Algorithm Centre for Brain-inspired Intelligence, Zhejiang Normal University, Jinhua, China. \url{yanweifu@fudan.edu.cn}.
\IEEEcompsocthanksitem Jinshan Zeng is with the School of Computer and Information Engineering, Jiangxi Normal University. jinshanzeng@jxnu.edu.cn. 
\IEEEcompsocthanksitem Chen Liu, Donghao Li and Yuan Yao are with Hong Kong University of Science and Technology. \IEEEcompsocthanksitem This work was supported in part by the National Natural Science Foundation of China Grants (62076067, 61977038), Thousand Talents Plan of Jiangxi Province Grant jxsq2019201124, National Natural Science Foundation of China / Research Grants Council Joint Research Scheme Grant HKUST635/20, Hong Kong Research Grant Council (HKRGC) Grant 16308321, 16303817, ITF UIM/390, as well as awards from Tencent AI Lab, Si Family Foundation, and Microsoft Research-Asia. This research made use of the computing resources of the X-GPU cluster supported by the 
HKRGC Collaborative Research Fund C6021-19EF. Chen Liu is supported by Hong Kong PhD Fellowship Scheme by HKRGC.
}
}

%
%

\markboth{Journal of \LaTeX\ Class Files,~Vol.~14, No.~8, August~2015}%
{Shell \MakeLowercase{\textit{et al.}}: Exploring Structural Sparsity of Deep Networks via Inverse Scale Spaces}
%



\IEEEtitleabstractindextext{%
\begin{abstract}
The great success of deep neural networks is built upon their over-parameterization, which smooths the optimization landscape without degrading the generalization ability. Despite the benefits of over-parameterization, a huge amount of parameters makes deep networks cumbersome in daily life applications. On the other hand, training neural networks without over-parameterization faces many practical problems, e.g., being trapped in the local optimal. Though techniques such as pruning and distillation are developed, they are expensive in fully training a dense network as backward selection methods; and there is still a void on systematically exploring forward selection methods for learning structural sparsity in deep networks. To fill in this gap,
this paper proposes a new approach based on differential inclusions of inverse scale spaces.
Specifically, our method can generate a family of models from simple to complex ones along the dynamics via coupling a pair of parameters, such that over-parameterized deep models and their structural sparsity can be explored simultaneously. This kind of differential inclusion scheme has a simple discretization, dubbed Deep structure splitting Linearized Bregman Iteration (\emph{DessiLBI}), whose global convergence in learning deep networks could be established under the Kurdyka-{\L}ojasiewicz framework.
Particularly, we explore several applications of DessiLBI, including finding sparse structures of networks directly via the coupled structure parameter and growing networks from simple to complex ones progressively. 
Experimental evidence shows that our method achieves comparable and even better performance than the competitive optimizers in exploring the sparse structure of several widely used backbones on the benchmark datasets. 
Remarkably, with early stopping, our method unveils ``winning tickets'' in early epochs: the effective sparse network structures with comparable test accuracy to fully trained over-parameterized models, that are further transferable to similar alternative tasks. Furthermore, our method is able to grow networks efficiently with adaptive filter configurations, demonstrating the good performance with much less computational cost. Codes and models can be downloaded at \url{https://github.com/DessiLBI2020/DessiLBI}.
\end{abstract}

\begin{IEEEkeywords}
Structural Sparsity, Inverse Scale Space, Linearized Bregman Iteration, Early Stopping, Network Pruning, Lottery Ticket Hypothesis, Growing Network
\end{IEEEkeywords}}

\maketitle

\IEEEdisplaynontitleabstractindextext

%
\IEEEpeerreviewmaketitle

\IEEEraisesectionheading{\section{Introduction}\label{sec:introduction}}

%
%
%
%
\IEEEPARstart{N}{owadays} deep neural networks have shown great expressive power in many research areas such as image recognition~\cite{Hinton-imagenet-2012}, object detection~\cite{zhang2016accelerating}, and point cloud estimation~\cite{deepsfm}.
Such power is attributed to an avalanche of network parameters learned by supervision on large-scale datasets, i.e., model over-parameterization. Typically, 
 the  total number of parameters is orders of the magnitude higher than the number of training samples.
And the over-parameterized neural networks  can be  trained with the loss functions by Stochastic Gradient Descent (SGD)~\cite{bottou2010large} or modified optimization methods with adaptive stepsize, e.g., Adam~\cite{kingma2014adam}, accompanied by early stopping. 

The over-parameterization can benefit the training process of deep neural networks (DNN), and not necessarily result in a bad generalization or overfitting~\cite{zhang2017understanding}, especially when some weight size
dependent complexities are controlled~\cite{bartlett1997valid, bartlett2017spectrally,golowich2018size, neyshabur2018role}.  Particularly, some recent empirical works show that model over-parameterization may 
help both optimization and generalization of networks, by simplifying
the optimization landscape of empirical risks toward locating global optima~\cite{mei2018mean, allen2019convergence, du2019gradient, venturi2018spurious}, and improving the generalization ability of deep neural networks for both discriminative~\cite{zhang2017understanding} and generative models~\cite{balaji2021understanding}.



However, compressive networks are desired in many real
world applications, e.g. robotics, self-driving cars, and augmented
reality.
For instance, the inference of large DNN models typically demands the support of GPUs, which are expensive for many real-world applications. Thus, it is  essential to produce compressive networks. For this purpose, the  classical way is to employ the norm-based regularization such as $L_1$ regularization~\cite{tibshirani1996regression} and enforce the sparsity on weights
toward the compact, and memory efficient networks.
This type of methods, unfortunately, may cause the decline of expressive power  as empirically validated in~\cite{collins2014memory}.
This
is because that the weights learned in neural networks are
highly correlated, and $L_1$ regularization on such weights violates
the incoherence or irrepresentable condition needed
for sparse model selection~\cite{donoho2001uncertainty,tropp2004greed,zhao2006model}, leading to spurious selections with
poor generalization. On the other hand, the 
 general type of regularization such as $L_2$ norm
typically takes the function of  low-pass filtering, sometimes  in the form of weight decay~\cite{loshchilov2017decoupled} or early stopping~\cite{yao2007early,wei2017early}. Sparsity has not been explicitly enforced on the models in this regularization, which may not produce a compressive model directly.
Alternatively,  Group Lasso~\cite{yuan2006model} has also been utilized for finding sparse structures in DNN~\cite{yoon2017combined}, and exerting good data locality with structured
sparsity~\cite{yoon2017combined}.


The difficulty of efficiently training a sparse network without over-parameterization results in the common practice in the community resorting to the \emph{backward selection}, i.e., starting
from training a big model using common task datasets like
ImageNet, and then conduct the  pruning~\cite{han2015learning,he2018soft,zhou2017incremental, jaderberg2014speeding}   or distilling~\cite{hinton2015distilling} such big models to small ones without sacrificing too much of the performance.
In particular, the recent Lottery Ticket Hypothesis (LTH) proposed in~\cite{frankle2018lottery} made the following key empirical observation: dense,
randomly-initialized, feed-forward networks contain small, sparse subnetworks, i.e., ``winning tickets'' structures, capable of being trained to comparable performance as the original network at a similar speed.  To find such winning tickets, LTH works in backward selection, relying on the methods of one-shot or iterative pruning, which however, demands expensive computations and rewinding from initializations in ~\cite{frankle2019lottery,morcos2019one}.  

Is there any alternative approach to find effective subnetworks without fully training a dense network? In this paper, we pursue the methodology in a reverse order, \emph{forward selection}, a sharp contrast to the backward selection methods above. 
Particularly, we design some dynamics that \emph{starts from simple yet interpretable models, delving into complex models progressively}, and simultaneously exploits over-parameterized models and structural sparsity. Our forward selection method enables finding important structural sparsity even before fully training a dense, over-parameterized model, avoiding the expensive computations in backward selection.




To achieve this goal, the \emph{Inverse Scale Space} (ISS) method in applied mathematics~\cite{BGOX06,osher2016diff} is introduced to training deep neural networks, for the first time up to our knowledge. The ``inverse scale space" method, was firstly proposed in~\cite{burger2005nonlinear} with Total-Variation sparsity for image reconstruction. The name comes from the fact that the features in the inverse scale space shown early in small scales are coarse-grained shapes, while fine details appeared later, in a reverse order of wavelet scale space where coarse-grained features appear in large scale spaces. Recently the ISS was shown as sparse regularization paths with statistical model selection consistency in high dimensional linear regression~\cite{osher2016diff} and generalized linear models~\cite{huang18_aistats}. Moreover, Huang \emph{et al.} \cite{huang16_nips,huang18_acha} further improved this by relaxing model selection consistency conditions using variable splitting.

Our inverse scale space dynamics of training neural networks can be described as differential inclusions, where important network parameters are learned at a faster speed than unimportant ones. Specifically, original network parameters are lifted to a coupled
pair, with one weight set $W$ of parameters following the
standard gradient descend to explore the over-parameterized
model space, while the other set of parameters $\Gamma$ learning
structure sparsity in an \emph{inverse scale space}.
The two sets of parameters are coupled in
an $L_2$ regularization. The ISS follows the gradient descent flow when the coupling regularization is weak, while reduces to a sparse mirror descent flow when the coupling is strong.
During the training process, the parameters $\Gamma$ plays the role of exploring the sparse structure of the model parameters in inverse scale space, where important structures are learned faster than unimportant ones. 

Such differential inclusion dynamics enjoy a simple discretization even in a highly non-convex setting of training deep neural networks, where we call such a discretization as \textbf{De}ep \textbf{s}tructure \textbf{s}pl\textbf{i}tting \textbf{L}inearized \textbf{B}regman \textbf{I}teration (DessiLBI). A proof is provided to guarantee the global convergence of DessiLBI under the Kurdyka-{\L}ojasiewicz framework. DessiLBI is a natural extension of SGD with sparse structure exploration in an inverse scale space. Critically, DessiLBI finds the important structure faster than unimportant ones, which enables a totally new way of exploring and exploiting the compact structure in DNNs. This paper presents the applications of DessiLBI in network sparsification, finding winning tickets, and growing networks. Particularly, we address: (1) how to find sparse network structures directly from our augmented variables $\Gamma$ computed by DessiLBI, where in particular, DessiLBI can help find the winning tickets without the expensive rewinding; and (2) an effective way to grow networks from a simple seed network to complex ones.


\noindent \textbf{Network sparsification:} DessiLBI may find sparse network structures that effective subnetworks can be rapidly learned via the structural sparsity parameter along the early iterative dynamics without fully training a dense network first. 
The support set of structural sparsity parameter learned in the early stage of this inverse scale space discloses important sparse subnetworks, including important weights, filters, and even layers.
After obtaining the important sparse structures, both fine-tuning and retraining can be selected as post-processing.
The priority of fine-tuning and retraining received wide discussions recently~\cite{ye2020good,rethinking_iclr}, while we conduct extensive experiments to compare their performances. Our experimental results illustrate that the sparse structure found by DessiLBI is relatively robust to post-processing.
Sparse structure found along the regularization path shows good performance on several widely-used network structures compared with their dense counterparts.
In addition, training with DessiLBI does no harm to, or even enhances the performance of the dense model.
As a result, the structural sparsity parameter may enable us to rapidly find sparse structure in early training epochs which saves plenty of training time and computational cost.

\noindent \textbf{Finding winning  tickets:} DessiLBI also demonstrates new inspiring performance on finding a winning ticket structure in LTH~\cite{frankle2018lottery}.
We conduct several experiments to explore the performance of winning ticket subnetworks found by our methods.
These experiments show that our method can find a winning ticket subnetwork at an early stage, while having similar or even better generalization ability if compared against fully trained dense models. 
Besides, experiments also show that the winning tickets obtained by our method generalize across different natural image datasets, exhibiting transferability as studied in~\cite{morcos2019one}. 

\noindent \textbf{Growing networks:} DessiLBI can result in an elegant way to grow network dynamically. Here we propose to use regularization paths in inverse scale spaces to construct a lite growing method. 
In detail, we start with a small seed network with only a few filters for each layer. 
During the exploration of inverse scale spaces, important parameters are selected at an early stage.
When the majority of the filters in one layer are selected, we assume that the complexity of this layer should be increased to enhance the model capacity, and more filters will be added to this layer.
The early stopping property of DessiLBI will greatly reduce computational cost while maintain the model performance. 


\noindent \textbf{Contributions.} We highlight the contributions in this paper.
 (1) The Inverse Scale Space method is, for the first time, applied to explore the structural sparsity of over-parameterized deep networks. DessiLBI can be interpreted as the discretization of solution paths of differential inclusion dynamics for the inverse scale spaces. (2) Global convergence of DessiLBI in such a nonconvex optimization is established based on the Kurdyka-Łojasiewicz framework, that the whole iterative sequence converges to a critical point of the empirical loss function from arbitrary initializations. (3) Stochastic variants of DessiLBI demonstrate comparable and even better performance than other training algorithms on ResNet-18 in large scale training such as ImageNet-2012, among other datasets, jointly exploring structural sparsity with interpretability. (4) Structural sparsity parameters in DessiLBI provide important information about subnetwork architecture with comparable or even better accuracies than dense models after retraining or finetuning – DessiLBI with early stopping can provide fast winning tickets without fully training dense, over-parameterized models.
(5) By using DessiLBI, we present two elegant ways to explore compact model: selecting important structures in the original network and expanding a seed network to ones with sufficient capacity.


\noindent \textbf{Extensions.} We explain the extension from our conference paper~\cite{fu2020dessilbi}. 
(1) Fundamentally, despite the essential idea is still the same as~\cite{fu2020dessilbi}, we equip DessiLBI with a new \emph{magnitude scaling update strategy}, that significantly alleviates the imbalance of magnitude scales across different layers in DNNs, as empirically validated in our experiments. (2) By using the  DessiLBI updated from our conference version, we further propose a series of ways to pruning the neural network including weight pruning, filter pruning and our novel layer pruning.
(3) We further study the properties of early stopping  and transferability of the winning tickets found by DessiLBI.
(4)  We propose an elegant way to dynamically grow a network via exploring the inverse scale space which needs much less training time and computational cost compared with other methods. 
(5) Extensive new experiments and ablation studies that are added in addition to our conference version, further reveal the insights and efficacy of our methods.


\section{Related Works}\label{sec:related}
Our DessiLBI is built upon the Linearized Bregman Iterations and has a tight relationship to the classical mirror descent algorithm and ADMM. Some other related topics of finding sparse networks are also discussed.

\subsection{Mirror Descent Algorithm}

\textbf{Mirror Descent Algorithm (MDA)} firstly proposed by \cite{NemYu83} to solve \textcolor{black}{constrained convex optimization $L^{\star}:=\min_{W\in K} \mathcal{L}(W)$} ($K$ is convex and compact), can be understood as a generalized projected gradient descent \cite{beck2003mirror} with respect to Bregman distance $B_{\Omega}(u_0,u_1) := \Omega(u_0) - \Omega(u_1) - \langle \nabla \Omega(u_1), u_0-u_1\rangle$ induced by a convex and differentiable function $\Omega(\cdot)$,
\begin{subequations}
\label{eq:mda} 
	\begin{align}
	V_{k+1} & = V_k - \alpha \nabla \mathcal{L}(W_k) \label{eq:mda-a}\\
	W_{k+1} & = \nabla \Omega^{\star}(V_{k+1}) \label{eq:mda-b},
	\end{align}
\end{subequations}
where the conjugate function of $\Omega(\cdot)$ is defined as
$\Omega^{\star}(V) := \sup_W \langle W,V \rangle - \Omega(W)$.

At the $k$ iteration, Equation~(\ref{eq:mda}) uses two steps to optimize
 $W_{k+1} = \arg\min_v \langle v, \alpha\mathcal{L}(W_k) \rangle + B_{\Omega}(v,W_k)$ 
 \cite{nemirovski2012tutorial} :  Eq (\ref{eq:mda-a}) implements the gradient descent on $V$ that is an element in dual space $V_k=\nabla \Omega(W_k)$; and 
 Eq (\ref{eq:mda-b}) projects it back to the primal space. As step size $\alpha \to 0$, 
 MDA has the following limit dynamics as ordinary differential equation (ODE) \cite{NemYu83}:
\begin{subequations}
\label{eq:mda-ode} 
	\begin{align}
\dot{V}_t & ={ - \nabla \mathcal{L}(W_t)} \label{eq:mda-ode-a},\\
	W_{t} & = \nabla \Omega^{\star}(V_{t}) \label{eq:mda-ode-b},
	\end{align}
\end{subequations} 
\yanwei{where $\dot{V}_t$ denotes the right derivative of $V_t$ at the time $t>0$.}

Convergence analysis with rates for convex loss has been well studied. Researchers also extend the analysis to stochastic version \cite{ghadimi2012optimal, nedic2014stochastic} and Nesterov acceleration scheme \cite{su2016differential,krichene2015accelerated}.
In deep learning, we have to deal with highly non-convex loss, recent work~\cite{azizan2019sto} has established the convergence to global optima for \emph{overparameterized} under two assumptions: (i) the initial point is close enough to the manifold of global optima; (ii) the $\Omega(\cdot)$ is strongly convex and differentiable. 

For non-differentiable $\Omega$ such as the Elastic Net penalty in compressed sensing and high dimensional statistics ($\Omega(W) = \Vert W \Vert_1 + \frac{1}{2\kappa} \Vert W \Vert_F^2$ \yanwei{with damping factor $\kappa >0$}), Equation (\ref{eq:mda}) is studied as the Linearized Bregman Iteration (LBI) in applied mathematics \cite{yin2008bregman,osher2016diff} that follows a discretized solution path of differential inclusions, to be discussed below. Such solution paths play a role of sparse regularization path where early stopped solutions are often better than the convergent ones when noise is present. In this paper, we investigate a varied form of LBI for the highly non-convex loss in deep learning models, exploiting the sparse paths, and establishing its convergence to a  Karush–Kuhn–Tucker (KKT) point for \emph{general} networks from \emph{arbitrary initializations}.  
Furthermore, our method is a
natural extension of SGD with sparse structure exploration. It reduces to the standard gradient
descent method when the coupling regularization is weak,
while reduces to a sparse mirror descent when the coupling
is strong.



\subsection{Linearized Bregman Iteration }
\textbf{Linearized Bregman Iteration~(LBI)}, was proposed in \cite{osher2005iterative, yin2008bregman} that firstly studies Eq.~(\ref{eq:mda}) when $\Omega(W)$ involves $\ell_1$ or total variation non-differentiable penalties met in compressed sensing and image denoising. Beyond convergence for convex loss~\cite{yin2008bregman,cai2009convergence}, Osher \textit{et al.}~\cite{osher2016diff} and Huang \textit{et al.}~\cite{huang18_aistats}, particularly showed that LBI is a discretization of differential inclusion dynamics whose solutions generate iterative sparse regularization paths, and established the statistical model selection consistency for high-dimensional generalized linear models. Moreover, Huang \textit{et al.} \cite{huang16_nips,huang18_acha} further improved this by proposing SplitLBI, incorporating into LBI a variable splitting strategy such that the restricted Hessian with respect to augmented variable ($\Gamma$ in Eq.~\ref{eq:slbi-iss}) is orthogonal. This can alleviate the multicollinearity problem when the features are highly correlated; and thus can relax the irrepresentable condition,\textit{ i.e.}, the necessary condition for Lasso to have model selection consistency \cite{tropp2004greed,zhao2006model,Wainwright09}. A variety of applications (e.g. \cite{sun2017gsplit,
MSplit-LBI,Xu19_pami,Xu19_neurips,Xu20_aaai,Xu21_robust}) have been found for this algorithm since its inception.

However, existing work on SplitLBI is restricted to convex problems in generalized linear modes. It remains unknown whether the algorithm can exploit the structural sparsity in highly non-convex deep networks. To fill in this gap, in this paper, we propose the Deep structure splitting LBI that simultaneously explores the overparameterized networks and the structural sparsity of parameters in such networks, which enables us to generate an iterative regularization path of deep models whose important sparse architectures are unveiled in early stopping. 


\subsection{Alternating Direction Method of Multipliers}
 \textbf{Alternating Direction Method of Multipliers (ADMM)} which also adopted variable splitting strategy, breaks original complex loss into smaller pieces with each one can be easily solved iteratively~\cite{wahlberg2012admm, boyd2011distributed}. Recent works~\cite{he20121} established the convergence result of ADMM in convex, stochastic and non-convex setting, respectively. Wang \textit{et al.} \cite{wang2014bregman} studied convergence analysis with respect to Bregman distance. Training neural networks by ADMM has been studied in \cite{Goldstein-ADMM-DNN2016,zeng2019_admm}.
 Recently, Wang \textit{et al.}~\cite{wang2019global} established the convergence of ADMM in a very general nonconvex setting, and Franca \textit{et al.} \cite{franca2018admm} derived the limit ODE dynamics of ADMM for convergent analysis. 
 
 However, one should distinguish the LBI dynamics from ADMM that LBI should be viewed as a discretization of differential inclusion of inverse scale space that generalizes a sparse regularization solution path from simple to complex models where early stopping helps find important sparse models; in a contrast, the ADMM, as an optimization algorithm for a given objective function, focuses on the convergent property of the iterations. 

\yanwei{
\subsection{Early Stopping Regularization of Gradient Method} }
\yanwei{To optimize the not differentiable target function, subgradient is typically utilized  as the generalization of the gradients in classical optimization textbooks~\cite{rockafellar2015convex,boyd2004convex}.
}


\yanwei{
The early stopping is commonly used as a regularization technique to avoid overfitting. It has been studied in mathematics as a regularization method in inverse problems \cite{EngHanNeu96}. In statistical machine learning,  early stopping has been studied in Boosting as gradient descent method, e.g. $L_2$-Boost \cite{buhlmann2003boosting} and Boosting in classification ~\cite{jiang2004process,ZhaYu02}. In particular $L_2$-Boost is generalized by ~\cite{yao2007early} to gradient descent learning of regression functions in Reproducing Kernel Hilbert Spaces (RKHS) with random designs, showing that early stopping is a polynomial regularization better than Tikhonov or Ridge regularization for avoiding the saturation issue of the latter.
A generalization of such kernel boosting algorithms to convex losses is given in \cite{wei2017early,wei2019early} using localized Rademacher complexities.
Nonetheless, these works do not take sparsity constraints into consideration as high dimensional statistics and our work here.
To handle sparse linear regressions, \cite{osher2016sparse}, establishes the the inverse scale space approach using differential inclusions, showing the model selection consistency for discovering causal variables by early stopping under the   Irrepresentable or Incoherence condition equivalent to that of LASSO \cite{zhao2006model,Wainwright09}. 
Such results are later extended by \cite{huang2018unified} to general convex losses including logistic regression and various graphical models. For general structural sparsity where parameters are sparse under a linear transformation,
\cite{Splitlbi,huang18_acha} proposes Split LBI and its limit differential inclusions, establishing their model selection consistency under weaker conditions than the incoherence condition of generalized LASSO.  
This last work lays down a foundation of current exploration of early stopping to find important structure in deep neural networks. 
}

\subsection{Finding Compact Networks}

\noindent \textbf{Pruning Networks.} In real world applications, limited computational resource makes compact models more demanding. In the manner of backward selection, pruning~\cite{lecun1990optimal} is one of the most direct ways of producing  a light model.
Network pruning can be roughly categorized as 
 weight and filter pruning, by whether removing some structural parameters such as convolutional filters. For unstructural weight pruning, Han \textit{et al.}~\cite{han2015learning} drops small weights of a well-trained dense network. The filter pruning methods take into consideration the network sparsity, memory footprint, and computational cost. Several works~\cite{he2018soft, Li_CVPR2017} attempt to train a network firstly and then prune the network according to specific metrics such as $L_2$ norm of weights. Centripetal-SGD~\cite{ding2019centripetal} groups the weight by their initialization and forces the weights inner one group to have the same values. LEGR~\cite{chin2020towards} studies the scale variation across layers and suggests that using genetic algorithm can find decent affine transformation, making $L_2$ normalization more reasonable. In contrast to these backward selection methods that are expensive in both memory and computational cost, forward selection methods have been widely used in traditional statistical machine learning.  Examples of such forward selection or parsimonious learning include boosting as functional gradient descent~\cite{Yuan2007On,nitanda2018functional} or coordinate descent method to solve LASSO~\cite{tibshirani1996regression}, regularization paths associated with $L_2$ (Tikhonov regularization or Ridge regression)~\cite{hoerl1970ridge} and/or $L_1$ (Lasso) penalties~\cite{hastie01statisticallearning}, etc. 
None of these methods has been systematically studied in deep learning. In \cite{ye2020good}, the authors investigated  a greedy forward selection method to sparsify filters. In a contrast, we investigate the inverse scale space method to network sparsification and our DessiLBI can select structural sparsity in weight level, filter level, and even layer level with improved efficiency.



\noindent \textbf{Lottery Ticket Hypothesis (LTH).} 
LTH~\cite{frankle2018lottery} states that one can find effective subnets in a well-trained dense model, i.e. winning tickets. When training a winning ticket subnetwork in isolation from the initialization of the dense network, its test accuracy can match the dense model test accuracy with at most the same number of iterations. By finding a winning ticket using backward-selection-based algorithms, one can obtain an extremely sparse network with comparable or even better generalization ability. 


 Frankle \textit{et al.}~\cite{frankle2019lottery} studied the LTH in the even deeper neural networks by rewinding, which means retrain from a very early stage of training (0.1\% to 7\%). LTH also appeared in reinforcement learning and natural language processing~\cite{Yu2020Playing,chen2020lottery,prasanna2020bert}. 
Additionally, winning ticket networks for filter pruning are also studied in several works~\cite{rethinking_iclr,fu2020dessilbi}.
The transferability is studied in~\cite{morcos2019one} that the winning tickets can be transferred across datasets as well as optimizers.
Recently it found that utilizing gradient information can unveil a winning ticket in a randomly initialized network without modifying the weight values~\cite{ramanujan2020s}. Furthermore, Malach \textit{et al.}~\cite{malach2020proving} proved that there exists a subnetwork with similar accuracy in a sufficiently over-parameterized neural network with bounded random weights. More theoretical analysis appeared in~\cite{frankle2020linear,orseau2020logarithmic,pensia2020optimal}. 
 
 Although LTH has been widely studied, generating winning tickets could be computationally expensive. The one shot pruning and iterative pruning proposed in~\cite{frankle2018lottery} are the de facto algorithms in winning ticket generation. One shot pruning utilizes a backward selection that firstly trains a network and prunes it according to weight magnitudes. Iterative pruning improves one shot pruning by conducting one shot pruning multiple times with a smaller pruning rate for each time which may cost unbearable GPU hours when finding extremely sparse winning tickets. 
 
 In this paper, our novel way to get winning tickets is built upon the forward selection, in the iterative procedure of DessiLBI. Critically, we show that by using early stopping, our DessiLBI can successfully discover the winning ticket structure without fully training a dense network. Empirically, we also show that our winning ticket structure has the nice property of transferability to the new dataset in the same domain of natural images. 

\noindent \textbf{Searching Network Structure.}
Plenty of recent efforts are made on searching for a good sparse network, rather than pruning. For example, Neural Architecture Search (NAS)~\cite{zoph2016neural,zoph2018learning,zhong2018practical,liu2018darts,cai2018proxylessnas} aims to search better network architectures automatically. 
The early works~\cite{zoph2016neural,real2017large} of NAS use reinforcement learning or evolutionary algorithms to search the whole network architectures, which need huge amount of computation cost. For example, Zoph \textit{et al.}~\cite{zoph2016neural} spend more than 2000 GPU days in searching a network architecture on CIFAR10 dataset. Many works try to reduce the searching cost. Instead of searching the whole network architectures, Zoph \textit{et al.}~\cite{zoph2018learning} and Zhong \textit{et al.}~\cite{zhong2018practical} search cells and stack the searched cells into the complete networks, which greatly reduces the search space. However, these improved methods still need nearly 100 GPU days~\cite{zhong2018practical}. There are other strategies to improve the efficiency of the searching process. ENAS~\cite{pham2018efficient} and one-shot NAS~\cite{bender2018understanding, guo2020single} share the parameters between child models, which significantly accelerates the searching process. DARTS~\cite{liu2018darts} proposes gradient based methods that just use one or several GPU days for searching. Different from these NAS works, this paper presents a lite network growing method, benefiting from the forward selection in training networks by DessiLBI. In our method, we jointly grow the network structures, and train the network parameters by using our DessiLBI. It thus saves significant computational cost, while still maintains reasonably good performance.

\begin{figure*}
	\centering{}\includegraphics[scale=0.33]{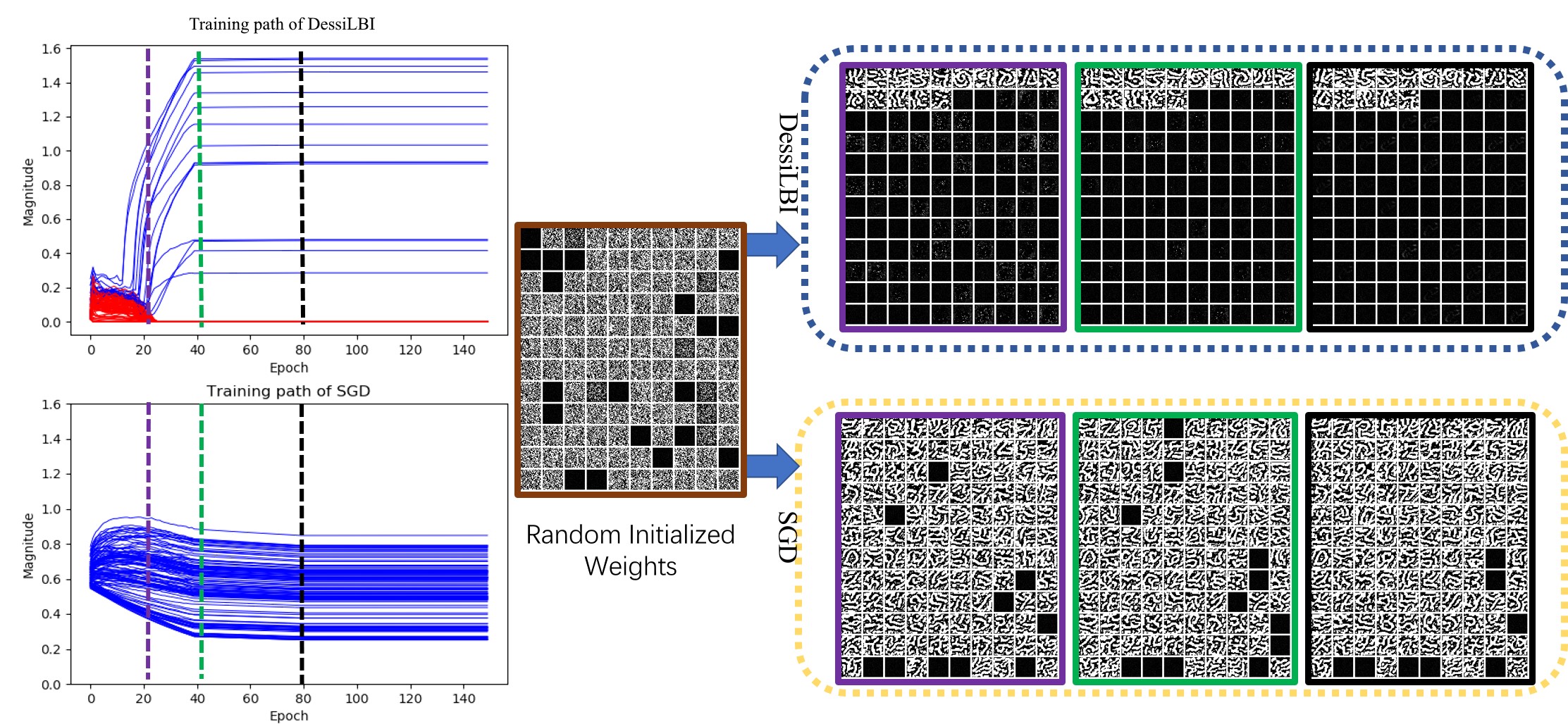}
	\caption{{\small{}{}Visualization of solution path and filter patterns in
			the third convolutional layer (i.e., conv.c5) of LetNet-5, trained
			on MNIST. The left figure shows the magnitude changes for each filter
			of the models trained by DessiLBI  and SGD, where $x$-axis and $y$-axis
			indicate the training epochs, and filter magnitudes ($\ell_{2}$-norm),
			respectively. The DessiLBI  path of filters selected in the support
			of $\Gamma$ are drawn in blue color, while the red color curves represent
			the filters that are not important and outside the support of $\Gamma$.
			We visualize the corresponding learned filters by \cite{erhan2009visualizing}
			at 20 (purple), 40 (green), and 80 (black) epochs, which are shown in
			the right figure with the corresponding color bounding boxes, }\emph{\small{}{}i.e.}{\small{}{},
			purple, green, and black, respectively. It shows that our DessiLBI  enjoys
			a sparse selection of filters without sacrificing accuracy (see Table~\ref{table:supervised_imagenet}).}}
			\vspace{-0.15in}	
	\label{mnist_visualization} 
\end{figure*}

\section{Inverse Scale Spaces and DessiLBI  \label{sec:methods}}

Supervised learning aims to learn a mapping 
\begin{equation}
    \Phi_{W}:\mathbf{\mathcal{X}}\to\mathbf{\mathcal{Y}} 
    \label{eq:mapping}
\end{equation}
from input space $\mathcal{X}$ to output space $\mathcal{Y}$, with
a parameter $W$ such as weights in neural networks, by minimizing
certain empirical loss function on training samples
\begin{equation}
    \eL(W)=\frac{1}{n}\sum_{i=1}^{n}\ell(y_{i},\Phi_{W}(x_{i})).
    \label{eq:empirical_data}
\end{equation}For example, a neural network of $l$-layer is defined as \begin{equation}
    \Phi_{W}(x)=\sigma_{l}\left(W^{l}\sigma_{l-1}\left(W^{l-1}\cdots\sigma_{1}\left(W^{1}x\right)\right)\right) \label{eq:network}
\end{equation}
\noindent where $W=\{W^{i}\}_{i=1}^{l}$, and $\sigma_{i}$ is the nonlinear activation
function of the $i$-th layer.

\subsection{Differential Inclusion of Inverse Scale Space}
\textbf{Differential Inclusion of Inverse Scale Space.} Consider the
following dynamics adapted to neural network training, 
\begin{subequations}\label{eq:slbi-iss} 
	\begin{align}
	\frac{\dot{W_{t}}}{\kappa} &
	=-\nabla_{W}\bar{\calL}\left(W_{t},\Gamma_{t}\right)\label{eq:slbi-iss-show-a}\\
	\dot{V_{t}} & =-\nabla_{\Gamma}\bar{\mathcal{L}}\left(W_{t},\Gamma_{t}\right)\label{eq:slbi-iss-show-b}\\
	V_{t} & \in \partial\bar{\Omega}(\Gamma_{t}) \label{eq:slbi-iss-show-c}
	\end{align}
\end{subequations} 
\yanwei{where $\dot{W_{t}}$ and $\dot{V_{t}}$ are the right derivatives of $W_{t}$ and $V_{t}$, individually}. The $V$ is a sub-gradient of 
\begin{equation}
\bar{\Omega}(\Gamma):=\Omega_{\lambda}(\Gamma)+\frac{1}{2\kappa}\|\Gamma\|^{2} \label{eq:omega_bar}
\end{equation}
\noindent where for some sparsity-enforced, often non-differentiable regularization, we have $\Omega_\lambda(\Gamma)=\lambda \Omega_1(\Gamma)$ ($\lambda\in\RR_{+}$) such as
Lasso or group Lasso penalties for $\Omega_1(\Gamma)$; $\kappa>0$ is a damping parameter such
that the solution path is continuous, and the augmented loss function
is 
\begin{equation}
\bar{\calL}\left(W,\Gamma\right)=\eL\left(W\right)+\frac{1}{2\nu}\|W-\Gamma\|_F^2,\label{eq:sparse_loss}
\end{equation}
with $\nu>0$ controlling the gap admitted between $W$ and $\Gamma$. Compared to the original loss function $\eL\left(W\right)$, our loss $\bar{\calL}\left(W,\Gamma\right)$ additionally uses the variable splitting strategy  by lifting the original neural network parameter $W$ to $(W,\Gamma)$ with $\Gamma$ modeling the structural sparsity of $W$. For simplicity, we assume $\bar{\calL}$ is differentiable with respect to $W$ here, otherwise the gradient in Eq. (\ref{eq:slbi-iss-show-a}) is understood as subgradient and the equation becomes an inclusion.

Differential inclusion system (Eq.~\ref{eq:slbi-iss}) is a coupling of gradient descent on $W$ with non-convex loss and mirror descent (LBI) of $\Gamma$ (Eq.~\ref{eq:mda-ode}) with non-differentiable sparse penalty. It may explore dense over-parameterized models $W_t$ in the proximity of structural parameter $\Gamma_t$ with gradient descent, while $\Gamma_t$ records important sparse model structures. 
Specifically, the solution path of $\Gamma_{t}$ exhibits the following property in the separation of scales: starting at the zero, important parameters of large scale will be learned fast, popping up to be nonzeros early, while unimportant parameters of small scale will be learned slowly, appearing to be nonzeros late. In fact, \yanwei{ Equation~\ref{eq:omega_bar} takes  the  $\Omega_\lambda(\Gamma)=\|\Gamma\|_{1}$ } and $\kappa\to\infty$ for simplicity, $V_{t}$ as the subgradient of $\bar{\Omega}_t$, undergoes a gradient descent
flow before reaching the $\ell_{\infty}$-unit box, which implies that $\Gamma_{t}=0$ in this stage. The earlier a component in $V_{t}$ reaches the $\ell_{\infty}$-unit box, the earlier a corresponding component in $\Gamma_{t}$ becomes nonzero and rapidly evolves toward a critical point of $\bar{\calL}$ under gradient flow. On the other hand, the $W_{t}$ follows the gradient descent with a standard $\ell_{2}$-regularization. Therefore, $W_{t}$ closely follows the dynamics of $\Gamma_{t}$ whose important parameters are selected.

Compared with directly enforcing a penalty function such as $\ell_{1}$ or $\ell_{2}$ regularization 
\begin{align}
\min_{W}\eR(W):=\eL\left(W\right)+ & \Omega_\lambda\left(W\right),\ \ \ \lambda\in\RR_{+}.\label{Eq:min-ERM}
\vspace{-0.15in}
\end{align}
dynamics Eq.~(\ref{eq:slbi-iss}) can relax the irrepresentable conditions for model selection by Lasso \cite{huang16_nips}, which can be violated for highly correlated weight parameters. The weight  $W$, instead of directly being imposed with
$\ell_{1}$-sparsity, adopts $\ell_{2}$-regularization in the proximity of the sparse path of $\Gamma$ that admits simultaneously exploring highly correlated parameters in over-parameterized models and sparse regularization.

The key insight lies in that differential inclusion of Eq.~(\ref{eq:slbi-iss-show-c}) drives the important features in $\Gamma_t$ that earlier reaches the $\ell_{\infty}$-unit box to be selected earlier. Hence, the importance of features is related to the ``time scale" of dynamic hitting time to the $\ell_{\infty}$ unit box, and such a time scale is inversely proportional to lasso regularization parameter $\lambda = 1/t$ \cite{osher2016diff}. Such a differential inclusion is firstly studied in \cite{BGOX06} with Total-Variation (TV) sparsity for image reconstruction, where important features in early dynamics are coarse-grained shapes with fine details appeared later. This is in contrast to wavelet scale space that coarse-grained features appear in large scale spaces, thus named ``inverse scale space''. In this paper, we shall see that Eq.~(\ref{eq:slbi-iss}) inherits such an inverse scale space property empirically even for the highly nonconvex neural network training. Figure~\ref{mnist_visualization} shows a LeNet trained on MNIST by the discretized dynamics, where important sparse filters are selected in early epochs while the popular SGD returns dense filters.

\subsection{Deep Structure Splitting LBI}
\textbf{Deep Structure Splitting Linearized Bregman Iteration.} 
Equation~(\ref{eq:slbi-iss}) admits
an extremely simple discrete approximation, using Euler forward
discretization of dynamics and called \emph{DessiLBI} in the sequel: 
\begin{subequations} 
	\begin{align}
	& W_{k+1}=W_{k}-\kappa\alpha_{k}\cdot\nabla_{W}\bar{\mathcal{L}}\left(W_{k},\Gamma_{k}\right),\label{Eq:SLBI-iterate1}\\
	& V_{k+1}=V_{k}-\alpha_{k}\cdot\nabla_{\Gamma}\bar{\mathcal{L}}\left(W_{k},\Gamma_{k}\right),\label{Eq:SLBI-iterate2}\\
	& \Gamma_{k+1}=\kappa\cdot\mathrm{Prox}_{\Omega_\lambda}\left(V_{k+1}\right),\label{Eq:SLBI-iterate3}
	\end{align}
\end{subequations} where \yanwei{$\alpha_{k}$ is the step size at the $k$ iteration};
$V_{0}=\Gamma_{0}=0$, $W_{0}$ can be small
random numbers such as Gaussian initialization.
\textcolor{black}{Here we add interpretation for $\alpha$ and $\kappa$ in Appendix.~\ref{interper}.}
For some complex networks, it can be initialized as common setting. The
proximal map in Eq. (\ref{Eq:SLBI-iterate3}) that controls the sparsity
of $\Gamma$,
\begin{align}
\mathrm{Prox}_{\Omega_\lambda}(V)=\arg\min_{\Gamma}\ \left\{ \frac{1}{2}\|\Gamma-V\|_{2}^{2}+\Omega_\lambda\left(\Gamma\right)\right\} ,\label{Eq:prox-operator}
\end{align}
Such an iterative procedure 
returns a sequence of sparse networks from simple to complex ones whose global
convergence condition to be shown below,
 while solving Eq. (\ref{Eq:min-ERM})
at various levels of $\lambda$ might not be tractable, \textcolor{black}{especially} for over-parameterized networks.

\noindent {\bf Structural Sparsity}. Our DessiLBI explores structural sparsity in fully connected and convolutional layers, \textcolor{black}{which can be unified in framework of group lasso penalty}, $\Omega_1(\Gamma)=\sum_{g}\Vert \Gamma^{g}\Vert_{2}$,
where $\Vert \Gamma^{g}\Vert_{2}=\sqrt{\sum_{i=1}^{\mid \Gamma^{g}\mid}\left(\Gamma_{i}^{g}\right)^{2}}$ and 
$\left|\Gamma^{g}\right|$ is the number of weights in $\Gamma^{g}$. Thus Eq.~(\ref{Eq:SLBI-iterate3}) has a closed form solution $\Gamma^{g}=\kappa\cdot\max\left(0,1-1/\Vert V^{g}\Vert_{2}\right)V^{g}$.  Typically,

\begin{itemize}
    \item[(a)] For a convolutional layer, $\Gamma^{g}=\Gamma^{g}(c_{in},c_{out},\mathtt{size})$
denote the convolutional filters where $\mathtt{size}$ denotes the
kernel size and $c_{in}$ and $c_{out}$ denote the numbers of input
channels and output channels, respectively. When we regard each group as each convolutional filter, $g=c_{out}$; otherwise for weight sparsity, $g$ can be every element in the filter that reduces to the Lasso.
    \item[(b)] For a fully connected layer, $\Gamma=\Gamma(c_{in},c_{out})$
where $c_{in}$ and $c_{out}$ denote the numbers of inputs and outputs
of the fully connected layer. Each group $g$ corresponds to each
element $(i,j)$, and the group Lasso penalty degenerates to the Lasso
penalty.
\end{itemize}

\section{Global Convergence of DessiLBI }
\label{sc:theory}
\vspace{-0.02in}

We present a theorem that guarantees the \emph{global convergence}
of DessiLBI, \emph{i.e.} from any initialization, the DessiLBI  sequence
converges to a critical point of $\bar{\calL}$. Our treatment extends the block coordinate descent (BCD) studied in \cite{Zeng2019}, with a crucial difference being the mirror descent involved in DessiLBI. Instead of the splitting loss in BCD, a new Lyapunov function is developed here to meet the Kurdyka-{\L}ojasiewicz property \cite{Lojasiewicz-KL1963}. \cite{xin2018rvsm} studied the convergence of variable splitting method for single hidden layer networks with Gaussian inputs.

Let $P:=(W,\Gamma)$. Following \cite{huang18_aistats}, the DessiLBI  algorithm in
Eq. (\ref{Eq:SLBI-iterate1}-\ref{Eq:SLBI-iterate3}) can be rewritten
as the following standard Linearized Bregman Iteration,
\begin{equation}
 P_{k+1}=\arg\min_{P}\left\{ \langle P-P_{k},\alpha\nabla\bar{\calL}(P_{k})\rangle+B_{\Psi}^{p_{k}}(P,P_{k})\right\} \label{Eq:SLBI-reformulation}
 \vspace{-0.1in}
\end{equation}
where 
\begin{align}
 \vspace{-0.1in}
\Psi(P) & =\Omega_\lambda(\Gamma)+\frac{1}{2\kappa}\|P\|_{2}^{2} \nonumber \\
& =\Omega_\lambda(\Gamma)+\frac{1}{2\kappa}\|W\|_{2}^{2}+\frac{1}{2\kappa}\|\Gamma\|_{2}^{2},\label{Eq:Phi-tilde}
\end{align}
$p_{k}\in\partial\Psi(P_{k})$, and $B_{\Psi}^{q}$ 
is the Bregman divergence associated with convex function $\Psi$,
defined by 
\begin{align}
B_{\Psi}^{q}(P,Q) & :=\Psi(P)-\Psi(Q)-\langle q,P-Q\rangle. \label{Eq:Bregman-divergence}
\end{align}
for some $q\in\partial\Psi(Q)$. Without the loss of generality, consider $\lambda=1$ in the sequel. One can establish the global convergence of DessiLBI  under the following assumptions.

\begin{assum} 
\vspace{-0.02in}
\label{Assumption} Suppose that: 
\begin{itemize}
    \item[(a)] $\eL(W)=\frac{1}{n}\sum_{i=1}^{n}\ell(y_{i},\Phi_{W}(x_{i}))$ is
	continuous differentiable and $\nabla\eL$ is Lipschitz continuous
	with a positive constant $Lip$; 
	\item[(b)] $\eL(W)$ has bounded level sets;
	\item[(c)] $\eL(W)$ is lower bounded (without loss of generality, we assume
	that the lower bound is $0$); 
	\item[(d)] $\Omega$ is a proper lower semi-continuous
	convex function and has locally bounded subgradients, that is, for
	every compact set ${\cal S}\subset\mathbb{R}^{n}$, there exists a
	constant $C>0$ such that for all $\Gamma\in{\cal S}$ and all $g\in\partial\Omega(\Gamma)$,
	there holds $\|g\|\leq C$;
	\item[(e)] the Lyapunov function 
	\begin{align}
	F(P,\tilde{g}):=\alpha\bar{\calL}(W,\Gamma)+B_{\Omega}^{\tilde{g}}(\Gamma,\tilde{\Gamma}),
	\label{Eq:Lyapunov-fun}
	\end{align}
	is a Kurdyka-{\L }ojasiewicz function~\cite{Zeng2019,Attouch2013} on any bounded set, where
	$B_{\Omega}^{\tilde{g}}(\Gamma,\tilde{\Gamma}):=\Omega(\Gamma)-\Omega(\tilde{\Gamma})-\langle\tilde{g},\Gamma-\tilde{\Gamma}\rangle$,
	$\tilde{\Gamma}\in\partial\Omega^{*}(\tilde{g})$, and $\Omega^{*}$
	is the conjugate of $\Omega$ defined as 
	\begin{align*}
	\Omega^{*}(g):=\sup_{U\in\mathbb{R}^{n}}\{\langle U,g\rangle-\Omega(U)\}.
	\end{align*}
\end{itemize}
\vspace{-0.05in}
\end{assum}

\begin{remark} 
	\vspace{-0.05in}
Assumption \ref{Assumption} (a)-(c) are regular in
	the analysis of nonconvex algorithm (see, \cite{Attouch2013} for
	instance), while Assumption \ref{Assumption} (d) is also mild including
	all Lipschitz continuous convex function over a compact set. Some
	typical examples satisfying Assumption \ref{Assumption}(d) are the
	$\ell_{1}$ norm, group $\ell_{1}$ norm, and every continuously differentiable
	penalties. By Eq. (\ref{Eq:Lyapunov-fun}) and the definition of conjugate,
	the Lyapunov function $F$ can be rewritten as follows, 
	\begin{align}
	F(W,\Gamma,g)=\alpha\bar{\calL}(W,\Gamma)+\Omega(\Gamma)+\Omega^{*}(g)-\langle\Gamma,g\rangle.\label{Eq:Lyapunov-fun-conjugate}
	\vspace{-0.1in}
	\end{align}
	\vspace{-0.05in}
\end{remark}
Now we are ready to present the main theorem. 
\begin{thm}{[}Global Convergence of DessiLBI{]} \label{Thm:conv-SLBI}
 Suppose that Assumption
	\ref{Assumption} holds. Let $\{(W_{k},\Gamma_{k})\}$ be the sequence
	generated by DessiLBI  (Eq. (\ref{Eq:SLBI-iterate1}-\ref{Eq:SLBI-iterate3}))
	with a finite initialization. If 
	\begin{align*}
	0<\alpha_{k}=\alpha<\frac{2}{\kappa(Lip+\nu^{-1})},
	\vspace{-0.1in}
	\end{align*}
	then $\{(W_{k},\Gamma_{k})\}$ converges to a critical point of $\bar{\mathcal{L}}$ defined in Eq. (\ref{eq:sparse_loss}),
	and $\{W^{k}\}$ converges to a critical point of $\eL(W)$. 
\end{thm}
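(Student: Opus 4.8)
The plan is to follow the now-standard three-step recipe for proving convergence of descent-type algorithms under the Kurdyka-{\L}ojasiewicz (KL) framework (as in \cite{Attouch2013,Zeng2019}), but applied to the Lyapunov function $F$ of Eq.~(\ref{Eq:Lyapunov-fun}) rather than to $\bar{\calL}$ itself, since the $\Gamma$-update is a mirror (Bregman) step rather than a plain gradient step. Throughout I would track the augmented triple $(W_k,\Gamma_k,g_k)$, where $g_k\in\partial\Omega(\Gamma_{k+1})$ is the bounded subgradient extracted from the proximal optimality condition of Eq.~(\ref{Eq:SLBI-iterate3}). Assumption~\ref{Assumption}(d) keeps $g_k$ bounded on the (bounded) set where the iterates live; this is the standard device for circumventing the possible unboundedness of the dual variable $V_k$ in Linearized Bregman iterations, and it is exactly the quantity entering the Bregman term $B_\Omega^{\tilde g}(\Gamma,\tilde\Gamma)$ of $F$.

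First, I would establish the \emph{sufficient decrease} property $F(P_{k+1},g_k)\le F(P_k,g_{k-1})-c_1\|P_{k+1}-P_k\|^2$ for some $c_1>0$. For the $W$-block this is the descent lemma applied to $\nabla_W\bar{\calL}$, whose Lipschitz constant is $Lip+\nu^{-1}$ because of the quadratic coupling $\tfrac{1}{2\nu}\|W-\Gamma\|_F^2$; this is precisely where the stepsize bound $\alpha<2/(\kappa(Lip+\nu^{-1}))$ enters, guaranteeing a strictly negative quadratic remainder. For the $\Gamma$-block I would use the three-point identity for Bregman divergences together with convexity of $\Omega$ (Assumption~\ref{Assumption}(d)) and the LBI reformulation of Eq.~(\ref{Eq:SLBI-reformulation}); the telescoping of the $B_\Omega$ terms is exactly what the Lyapunov function is designed to absorb. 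Summing the decrease inequality and invoking lower boundedness (Assumption~\ref{Assumption}(c)) yields $\sum_k\|P_{k+1}-P_k\|^2<\infty$, hence $\|P_{k+1}-P_k\|\to0$; combined with Assumption~\ref{Assumption}(b) this confines $\{(W_k,\Gamma_k,g_k)\}$ to a compact set.

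Second, I would prove a \emph{relative-error} (subgradient) bound: construct an explicit element $d_{k+1}\in\partial F(P_{k+1},g_k)$ and show $\|d_{k+1}\|\le c_2\|P_{k+1}-P_k\|$ (possibly also involving $\|P_k-P_{k-1}\|$), by differentiating $F$ in each argument, substituting the iteration relations, and controlling residuals through the Lipschitz continuity of $\nabla\bar{\calL}$; the $\tilde g$-argument contributes a term that vanishes by the definition $\tilde\Gamma\in\partial\Omega^*(\tilde g)$. Third, with sufficient decrease, the relative-error bound, continuity of $F$, and the KL property (Assumption~\ref{Assumption}(e)) in hand, the standard KL argument gives the finite-length property $\sum_k(\|P_{k+1}-P_k\|+\|g_{k+1}-g_k\|)<\infty$, so the augmented sequence is Cauchy and converges. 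Convergence of $g_k$ and $\Gamma_k$ forces $V_k$ to converge, whence $V_{k+1}-V_k=-\alpha\nabla_\Gamma\bar{\calL}(W_k,\Gamma_k)\to0$, i.e.\ $\nabla_\Gamma\bar{\calL}(W_*,\Gamma_*)=0$ and thus $W_*=\Gamma_*$; together with $W_{k+1}-W_k\to0$ giving $\nabla_W\bar{\calL}(W_*,\Gamma_*)=0$, this identifies $(W_*,\Gamma_*)$ as a critical point of $\bar{\calL}$ and yields $\nabla\eL(W_*)=0$.

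I expect the \textbf{main obstacle} to be the sufficient-decrease step, specifically fusing the plain-gradient descent of the $W$-block with the mirror-descent (Bregman) geometry of the $\Gamma$-block into a single monotone Lyapunov quantity. The difficulty is that $\bar{\calL}$ alone need not decrease along the mirror step; only its combination with $B_\Omega^{\tilde g}(\Gamma,\tilde\Gamma)$ telescopes correctly, and getting the constants to align so that the threshold $2/(\kappa(Lip+\nu^{-1}))$ emerges cleanly requires care. A secondary technical point is ensuring the dual variable $V_k$ does not destroy compactness, which is handled by passing to the bounded subgradient $g_k$ via Assumption~\ref{Assumption}(d).
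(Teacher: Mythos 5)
Your proposal is correct and follows essentially the same route as the paper's proof: both apply the Kurdyka-{\L}ojasiewicz framework of \cite{Attouch2013,Zeng2019} to the Lyapunov function $F$ of Eq.~(\ref{Eq:Lyapunov-fun}) on the augmented triple $(W_k,\Gamma_k,g_k)$, establishing sufficient descent (with the same constant $\rho=\kappa^{-1}-\alpha(Lip+\nu^{-1})/2$ forcing the stepsize threshold), a relative-error subgradient bound, continuity, and then finite length. The only cosmetic difference is the final limit identification, where you pass back through the dual variable $V_k$ and the iteration relations, while the paper reads off the critical-point equations of $F$ and uses conjugate duality to get $\bar g\in\partial\Omega(\bar\Gamma)$, $\bar W=\bar\Gamma$, and $\nabla\eL(\bar W)=0$; both are equivalent.
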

Applying to the neural networks, typical examples are summarized in
the following corollary.
\begin{corollary} \label{Corollary:DL} Let $\{(W_{k},{\Gamma}_{k},g_{k})\}$
	be a sequence generated by DessiLBI (\ref{Eq:SLBI-reform-iter1}-\ref{Eq:SLBI-reform-iter3})
	for neural network training 
	where (a) $\ell$ is any smooth definable loss function~\cite{Rockafellar1998}, such as the
	square loss $t^{2}$, exponential loss $e^{t}$, logistic loss
	$\log(1+e^{-t})$, and cross-entropy loss; (b) $\sigma_{i}$ is any
	smooth definable activation\textcolor{black}{~\cite{Rockafellar1998}}, such as linear activation $t$, sigmoid
	$\frac{1}{1+e^{-t}}$, hyperbolic tangent $\frac{e^{t}-e^{-t}}{e^{t}+e^{-t}}$,
	and softplus $\frac{1}{c}\log(1+e^{ct})$ for some $c>0$) as a smooth
	approximation of ReLU; (c) $\Omega$ is the group Lasso. 
	Then the sequence $\{W_{k}\}$ converges to a stationary point of
	$\eL(W)$ under the conditions of Theorem \ref{Thm:conv-SLBI}. \end{corollary}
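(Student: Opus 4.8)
The plan is to derive the corollary directly from Theorem~\ref{Thm:conv-SLBI} by checking that the stated choices --- smooth definable loss $\ell$, smooth definable activations $\sigma_i$, and group Lasso $\Omega$ --- fulfil every item of Assumption~\ref{Assumption}. Items (c) and (d) are immediate: the square, logistic, and cross-entropy losses are nonnegative and the exponential loss is bounded below, giving (c); and the remark following Assumption~\ref{Assumption} already records that the group $\ell_1$ norm is a convex, proper, lower semicontinuous function with locally bounded subgradients, giving (d). For the smoothness in (a), note that $\Phi_W$ in Eq.~(\ref{eq:network}) is a finite composition of affine maps with the $C^\infty$ activations, so $\eL$ in Eq.~(\ref{eq:empirical_data}) and the augmented loss $\bar{\calL}$ in Eq.~(\ref{eq:sparse_loss}) are smooth in $(W,\Gamma)$; the Lipschitz continuity of $\nabla\eL$ required by (a) then holds on any bounded set, in particular on the compact sublevel region to which the bounded level set hypothesis (b) confines the iterates, which is all the descent estimate of Theorem~\ref{Thm:conv-SLBI} actually uses.

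The heart of the argument is Assumption~\ref{Assumption}(e), the Kurdyka--{\L}ojasiewicz property of the Lyapunov function $F$ in Eq.~(\ref{Eq:Lyapunov-fun-conjugate}). I would establish this by showing that $F$ is \emph{definable} in an o-minimal structure, since every definable function satisfies the KL inequality on any bounded set. Fix the structure $\RR_{\exp}$, which is o-minimal by Wilkie's theorem. Each listed activation and loss is definable in $\RR_{\exp}$: polynomials and $\exp$ are definable by construction, the logistic and softplus functions are rational expressions in $\exp$, $\tanh$ likewise, and the logarithm (appearing in the logistic and cross-entropy losses) is definable as the inverse of $\exp$. Because definability is closed under finite sums, products, and compositions, the maps $W\mapsto\Phi_W(x_i)$, the per-sample losses, and hence $\eL(W)$ and $\bar{\calL}(W,\Gamma)$ are all definable; the group Lasso $\Omega$ is semialgebraic, hence definable as well.

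It remains to handle the conjugate $\Omega^{*}$ appearing in $F$. Here I would invoke the projection property of o-minimal structures: since $\Omega^{*}(g)=\sup_U\{\langle U,g\rangle-\Omega(U)\}$ is obtained by eliminating the variable $U$ from the definable relation describing the supremum, $\Omega^{*}$ is again definable. Consequently $F(W,\Gamma,g)=\alpha\bar{\calL}(W,\Gamma)+\Omega(\Gamma)+\Omega^{*}(g)-\langle\Gamma,g\rangle$ is a finite combination of definable functions, hence definable and therefore KL on bounded sets, which verifies (e). With all of Assumption~\ref{Assumption} in force, Theorem~\ref{Thm:conv-SLBI} yields convergence of $\{W_k\}$ to a critical (stationary) point of $\eL$. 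I expect the main obstacle to be precisely the definability bookkeeping --- in particular justifying that $\Omega^{*}$ remains definable after the supremum, and reconciling the globally stated Lipschitz hypothesis in (a) with the fact that deep-network gradients are only locally Lipschitz; both points are resolved by keeping every object inside the single fixed o-minimal structure $\RR_{\exp}$ and by exploiting that (b) localizes the entire analysis to a compact region.
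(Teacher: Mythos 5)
Your proposal is correct and follows the same overall strategy as the paper's proof: reduce the corollary to verifying Assumption~\ref{Assumption}(e), i.e.\ the Kurdyka--{\L}ojasiewicz property of the Lyapunov function $F$, by showing that $F$ is definable in an o-minimal structure containing both the exponential function and all semialgebraic sets, then invoke \cite[Theorem 1]{Kurdyka-KL1998}; items (a)--(d) are, as in the paper, dispatched as routine (the paper defers them to the proof of \cite[Proposition 1]{Zeng2019}, while you check them directly, which is slightly more self-contained). The one place where you genuinely diverge is the conjugate $\Omega^{*}$ appearing in $F$: the paper handles it by an explicit computation for the group Lasso, asserting that the conjugate is the maximum of the group $\ell_{2}$-norms (more precisely, it is the indicator function of the dual-norm unit ball $\{g:\max_{g}\|g_{g}\|_{2}\le 1\}$, which is semialgebraic), whereas you invoke the closure of definable sets under projection to conclude that the partial supremum $\Omega^{*}(g)=\sup_{U}\{\langle U,g\rangle-\Omega(U)\}$ of a definable function is again definable. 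Your route is more general --- it works for any definable penalty $\Omega$, not just the group Lasso --- and it sidesteps the need for an explicit conjugate formula, which is welcome here since the paper's stated formula is loose (the conjugate of a norm is the indicator of the dual ball, not the dual norm itself); the paper's route, in exchange, is more concrete and avoids the quantifier-elimination machinery. One caveat in your write-up: since $\Omega^{*}$ is extended-real-valued, the projection argument should be phrased in terms of the definability of its domain and of its graph over that domain, but this is a cosmetic repair, not a gap. Your fixing of the single structure $\RR_{\exp}$ via Wilkie's theorem matches the paper's appeal (fact (iii) in its appendix) to an o-minimal structure containing $\exp$ and all semialgebraic sets.
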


\section{\textcolor{black}{Implementation of DessiLBI}}
The DessiLBI is a natural extension of SGD with exploring sparse structures of network. As referring to the variants of SGD, we further introduce several variants of our DessiLBI, by learning with data batches, using momentum to accelerate the convergence, adding weight decay on the parameters $W$, as well as the magnitude scaling updates.



\noindent \textbf{Batch {DessiLBI}}. To train networks on large datasets,
stochastic approximation of the gradients in DessiLBI  over the mini-batch
$(\mathbf{X},\mathbf{Y})_{{\mathrm{batch_{t}}}}$ is adopted to update
the parameter $W$, 
\begin{equation}
\widetilde{\nabla}_{W}^{t}=\nabla_{W}\bar{\mathcal{L}}\left(W\right)\mid{}_{(\mathbf{X},\mathbf{Y})_{\mathrm{batch}_{t}}}.\label{eq:sgd}
\end{equation}
\noindent \textbf{{DessiLBI}  with Momentum (Mom)}. Inspired by the variants of SGD,
the momentum term can be also incorporated to the standard DessiLBI
that leads to the following updates of $W$ by replacing Eq (\ref{Eq:SLBI-iterate1})
with, 
\begin{subequations} 
	\begin{eqnarray}
	v_{t+1} & = & \tau v_{t}+\widetilde{\nabla}_{W}^t\bar{\mathcal{L}}\left(W_{t},\Gamma_{t}\right)\\
	W_{t+1} & = & W_{t}-\kappa\alpha v_{t+1}
	\end{eqnarray}
\end{subequations} 
\noindent where $\tau$ is the momentum factor, empirically
setting as 0.9. 


\noindent \textbf{DessiLBI  with Momentum and Weight Decay (Mom-Wd). } The update formulation for Eq.~(\ref{Eq:SLBI-iterate1}) is 
\begin{subequations} 
\begin{eqnarray}
v_{t+1} & = & \tau v_{t}+\widetilde{\nabla}_{W}^t\bar{\mathcal{L}}\left(W_{t},\Gamma_{t}\right)\\
W_{t+1} & = & W_{t}-\kappa\alpha v_{t+1}-\beta W_{t}
\vspace{-0.1cm}
\end{eqnarray}
\end{subequations} 
\noindent \textbf{Magnitude Scaling Update Strategy.} The imbalance of weight scales across different layers during training may degrade the performance of network sparsification and winning ticket generation.
\textcolor{black}{We visualize the distribution of $V$ in Figure~\ref{fig:vgg16_gamma} and Figure~\ref{fig:res56_gamma}. The magnitude scale varies significantly across layers. It adds to the difficulties of finding good structure.}
On the other hand, for ReLU-based neural networks, the parameter magnitude scales of different layers do not negatively affect the network training, as the {positive homogeneity} of ReLU~\cite{zhu2020rethinking} guarantees that the scale has no difference to the class prediction.
To this end, we introduce a \emph{magnitude scaling
update strategy} to improve our DessiLBI, such that the magnitude scales of $V$ are comparable across different layers. 
The detailed updating rule is to replace Eq.~(\ref{Eq:SLBI-iterate2},~\ref{Eq:SLBI-iterate3})
by the following: \begin{subequations} 
\begin{align}
 & V_{k+1}=V_{k}-\alpha_{k}\beta_{k}\odot\nabla_{\Gamma}\bar{\mathcal{L}}\left(W_{k},\Gamma_{k}\right),\\
 & \Gamma_{k+1}=\kappa\epsilon_{k}\odot\mathrm{Prox}_{\Omega_{\lambda}}\left(V_{k+1}\right),
\end{align}
\label{eq:magnitude} \end{subequations}
\noindent where $\beta_{k}=\left[\beta_{k,j}^{i}\right]$, and
$\epsilon_{k}$ are scaling factor tensors. For the $i$-th layer, scaling factors are defined by
\begin{equation}
\beta_{k,j}^{i}=\min\left(1,\frac{1}{\|W_{k,j}^{i}\|_{\star}}\right)\cdot\left(1-\frac{\#(\supp(\Gamma_{k}^{i}))}{\#(\supp(W_{k}^{i}))}\right),
\label{eq:beta}
\end{equation}
and $\epsilon_{k}=\left[\|W_{k,j}^{i}\|_{\star}\right]$. 
Here, we introduce the notation of $W_{k,j}^{i}$ in Eq~(\ref{eq:beta})
to denote the $j$-th weight/filter of $k$-th iteration and $i$-th layer;
for the $j$-th weight or filter in the
$i$-th layer, $\|W_{k,j}^{i}\|_{\star}=\left|W_{k,j}^{i}\right|$
or $\|W_{k,j}^{i}\|_{\star}=\|W_{k,j}^{i}\|_{2}$, respectively ; $\#$ denotes the number of elements; and $\supp$
is the support of a set. Note that for $j$-th filter of $i$-th layer, we will expand the size of $\beta_{k,j}^{i}$ to match the size of the filter. To avoid $\beta_{k}=\mathbf{0}$, we will set a minimum value for it.

For the parameter $V$, its gradient is determined by the magnitude of the residue between corresponding parameter $W$ and $\Gamma$. Here, we normalize the gradient for $V$ by the magnitude of $W$, so that the gradient contains more direction information. For instance, one filter may be more important if the updating direction altered in lower frequency. To avoid the gradient explosion, we set the maximum of $\beta_{k}$ to be 1. 
In addition, we use the ratio between numbers of selected parameters and numbers of whole parameters in one layer as the penalty. 

\textcolor{black}{This idea is enlightened by Lipschitz renormalization/scaling for ReLu-activation networks~ \cite{zhu2020rethinking,liao2018surprising} which is of positive homogeneity, i.e. invariant under a multiplication of a positive constant on the weight matrix such that its norm is renormalized to be close to a unit ball. These works attempt to avoid driving the model weights to infinity via the renomalization.
Different from their motivation, we aim to alleviate the scale imbalance problem in neural networks when selecting structure.
As mentioned in ~\cite{zhu2020rethinking, neyshabur2015path}, relu-based neural networks own the property called positive homogeneity or rescaling invariance. It means that we can modify the weights of relu-based neural networks in an appropriate way that the prediction of the model is not changed. So the scale of magnitude for different layers can vary significantly and this property sets some obstacles for selecting important structures for the scale imbalance.
In our work, we attempt to normalize the gradient to $V$ so that the update relies more on the direction instead of magnitude.}





\section{Applications of DessiLBI}
\subsection{Network Sparsification \label{sec:net-sparsified} }

In the process of training networks by DessiLBI, we are able to produce the sparse network directly. This enables us to explore the over-parameterization and structural sparsity simultaneously.  We take the coupled parameters $\Gamma$ to record the structural sparsity information along the regularization paths.
 The support set of $\Gamma$ is utilized to conduct the pruning.
The key principle is that it will be taken as less important parameters if the convolutional filters have the corresponding coupled parameter $ \Gamma = 0$  in the inverse scale space. We have three levels for pruning parameters in network, described below. After getting the compact one, some post-processing steps such as fine-tuning or retraining are used to improve the performance of a pruned network.
In~\cite{rethinking_iclr} the authors point out that the performance of pruned network relies on the structure heavily; and retraining them from scratch can get even better results\footnote{Potentially this is an arguable point, which is challenged by ~\cite{ye2020good}}.
In our work, we will validate this in the experiment.

\noindent {\bf Pruning Weights.}  We define  $M^i$ as the mask for each parameter of  the $i$-th layer by using support set $\Gamma^i$,
\begin{equation}
  M^i=|\sign(\Gamma^i)|=\mathbf{1}_{\supp(\Gamma^i)}  \label{eq:mask}
\end{equation}
  We directly use the mask to remove some weight parameters by  Hadamard (elementwise) product as $\widetilde{W}^i = M^i \odot W^i$. This will enable us to get a sparse network.
Typically, such pruning is commonly adopted to prune the weights of fully connected layers.

\noindent {\bf Pruning Filters.} We can remove the convolutional filters by 
\begin{equation}
    M^{i,g}= |\sign(\Gamma^{i,g})|= \mathbf{1}_{\supp(\Gamma^{i,g})}
\end{equation}
\noindent where $\supp(\Gamma^{i,g})$ is the support set of filters $\Gamma_g$ on the $i$-th layer. Accordingly, the filters are changed as  $\widetilde{W}^i = M^{i,g} \odot W^i$. Generally, the pruning is conducted on the convolutional layers to remove the filters.


\noindent \textbf{Pruning Layers.} Benefiting from the magnitude scaling update of DessiLBI, the
weight magnitudes of different layers learned by our DessiLBI are comparable or balanced now. This enables us to select the sparse structure at a higher level of whole layers than the level of individual filters or weights. Specifically if  $\supp(\Gamma^{i,g})$ or $\supp(\Gamma^i)$ is the empty set for the $i$-th layer, we remove this layer totally from the network. Note that we will utilize the necessary transition layers, including pooling and concatenation operations, to make the consistent feature dimension after removing layers. Heavy redundancy of deep neural networks results in that several layers sometimes can be removed completely. 




\subsection{Finding Winning Tickets} 
As shown in LTH~\cite{frankle2018lottery}, finding winning tickets as subnets in a fully trained over-parameterized network is quite computationally expensive, especially using iterative pruning to unveil extremely sparse networks.
In contrast, our DessiLBI empowers a much easier way to generate winning ticket structure.
Here a forward selection method is implemented through exploring the Inverse Scale Space.
 By using the augmented variables of DessiLBI, we can obtain the winning ticket structure of a network in a more efficient way. Particularly, we search for the subnetwork in the inverse scale space computed by DessiLBI: the $\Gamma$ is utilized as the metric for evaluating the importance of weights. That is, the mask $M^i$ of each $i$-th layer, derived from the support set of $\Gamma^i$ is utilized to define the winning ticket structure, where the winning ticket can be obtained without fully training the network but using early stopping of DessiLBI.
 In fact, only a few epochs are required to get the winning ticket via DessiLBI in our experiments below. Fine-tuning or retraining after receiving the winning ticket structure exhibit comparable or better performance than the fully trained networks in a similar speed. Hence in a forward selection manner, our DessiLBI offers a more efficient and elegant way to get the winning ticket. 
 
 Transferability of winning tickets found in traditional ways has been observed in the empirical experiments in~\cite{morcos2019one}. 
Here the transferability of winning tickets found by DessiLBI is further studied. In our experiment, for each target dataset, we find winning ticket structures by DessiLBI on different source datasets and show their performance close to the winning tickets generated on the same target dataset. 

In a summary, we highlight the two key points in our winning ticket generation.
 \begin{itemize}
     \item[(a)] \emph{Early stopping.} We can define the winning ticket structures by exploiting the structural sparsity parameter computed in the early stage during the training process by DessiLBI, known as the early stopping regularization. 
    \item[(b)] \emph{Transferrability of winning ticket structure.} The winning ticket structure discovered by early stopping of DessiLBI can generalize across a variety of datasets in the natural image domain. 
 \end{itemize}




\subsection{ Growing Networks by DessiLBI }

By exploiting the inverse scale space by DessiLBI, the structure of our network can be dynamically altered during the training procedure. We present a novel network growing process. Specifically, we start with a simple initialized  network and gradually increase its capacity by adding parameters along with the training process by DessiLBI.
Starting from very few filters of each convolutional layer, our growing method requires not only efficiently optimizing the parameters of filters, but also adding more filters if the existing filters do not have enough capacity to model the distribution of training data. The key idea is to measure whether the network at the current training iteration has enough capacity. 

To this end, we monitor in training, the difference between support set of model parameters $\supp(W^i)$ and support set $\supp(\Gamma^i)$ of its augmented parameter
$\Gamma^i$ on the $i$-th layer of network, 
\begin{equation}
    ratio = \frac{\#(\supp(\Gamma^i))}{\#(\supp(W^i))},
    \label{eq:ratio}
\end{equation}
where $\supp$ is the support set and $\#$ denotes the number of elements in the set.
When the ratio becomes
close to 1, it means most of the filters in $W^i$ are important and they are selected into $\Gamma^i$, and thus we should expand the network capacity by adding new filters to the corresponding layer.
In our network growing process, we start from a small number
of filters (\textit{e.g.} 4) of convolutional layers, more and more filters
tend to be with non-zero values as the training  iterates. Every $J$
epochs, we can compute the ratio of Eq. (\ref{eq:ratio}) and consider the following two scenarios.
\begin{itemize}
    \item[(a)] If this
ratio is larger than a pre-set threshold (denote as $\tau$), we add some
new filters to existing filters $W$, randomly initialized as \cite{he2015delving}, and $\Gamma$ will add corresponding dimensions, initialized as zeros;
    \item[(b)] Otherwise, we will not grow any filter in this epoch. 
\end{itemize} 
Then we continue DessiLBI to learn all the weights from training data. This process
is repeated until the loss does not change much, or the maximum number of epoch is reached. After stopping the growing process, the network model is trained for extra few epochs (typically 30) with a smaller learning rate to finish the training process.

\section{Experiments}\label{sec:experiments}
\noindent \textbf{Dataset.} In this section, four widely used datasets are selected: CIFAR10, CIFAR100, SVHN, and ImageNet.  For CIFAR10, it owns 10 categories and each category has 5000 and 1000 images for training and testing respectively. CIFAR100 has 100 classes containing 500 training images and 100 testing images each. SVHN contains real-world number images from 0 to 9. In SVHN, there are 73257 images for training and 26032 images for testing. For CIFAR10, CIFAR100, and SVHN, the images have a spatial size of $32\times32$. For ImageNet-2012, there are 1.28 million training images and 50k validation images. These images are sampled from 1000 classes.

This section introduces some stochastic variants of {DessiLBI}, followed by four sets of experiments revealing the key insights and properties of {DessiLBI} in exploring the structural sparsity of deep networks.

%

\noindent \textbf{Implementation.}  Experiments are conducted over various backbones, \textit{e.g.}, LeNet, AlexNet, VGG, 
and ResNet. For MNIST and CIFAR10,
the default hyper-parameters of DessiLBI  are $\kappa=1$, $\nu=10$ and $\alpha_{k}$
is set as $0.1$, decreased by 1/10 every 30 epochs. In ImageNet-2012,
the DessiLBI  utilizes $\kappa=1$, $\nu=1000$, and $\alpha_{k}$
is initially set as 0.1, decays 1/10 every 30 epochs. We set $\lambda=1$
in Eq. (\ref{Eq:prox-operator}) by default, unless otherwise specified.
On MNIST and CIFAR10,  we have batch size as 128; and for all methods,
the batch size of ImageNet 2012 is 256. The standard data augmentation
implemented in pytorch is applied to CIFAR10 and ImageNet-2012,
as \cite{he2016deep}. The weights of all models are initialized as
\cite{he2015delving}. In the experiments, we define the \emph{sparsity} as percentage of non-zero parameters, \textit{i.e.}, the number of non-zero weights dividing the total number
of weights in consideration.  Runnable codes can be downloaded\footnote{https://github.com/DessiLBI2020/DessiLBI}.

\subsection{Image Classification Experiments}
\textbf{Settings}. By referring the settings in~\cite{he2016deep}, we conduct the experiments on ImageNet-2012. 
We compare variants of DessiLBI with that of SGD and Adam in the experiments. By default, the learning rate of competitors is
set as $0.1$ for SGD and its variant and $0.001$ for Adam and its
variants, and gradually decreased by 1/10 every 30 epochs.
(1) Naive SGD: the standard SGD with batch input. (2) SGD with
$\mathit{l}_{1}$ penalty (Lasso). The $\mathit{l}_{1}$ norm is applied to
penalize the weights of SGD by encouraging the sparsity of the learned
model, with the regularization parameter of the $\mathit{l}_{1}$ penalty term being set as 
$1e^{-3}$ 
(3) SGD with momentum (Mom): we utilize momentum 0.9 in SGD. (4) SGD
with momentum and weight decay (Mom-Wd): we set the momentum 0.9 and
the standard $\mathit{l}_{2}$ weight decay with the coefficient weight
$1e^{-4}$. (5) SGD with Nesterov (Nesterov): the SGD uses nesterov
momentum 0.9. (6) Naive Adam: it refers to standard  Adam\footnote{In Tab. \ref{table:supervised_imagenet_mnist_cifar} of Appendix, we further give  more results for Adabound, Adagrad,  Amsgrad, and  Radam, which, we found, are hard to be trained on ImageNet-2012 in practice.}. 
 
 As a natural extension of SGD with exploring sparse structure, 
DessiLBI can serve as a network training method. To validate this point, we training different backbones by DessiLBI on the large-scale dataset -- ImageNet 2012. 
The results of image classification are shown in Tab.~\ref{table:supervised_imagenet}.
 Our DessiLBI  variants may achieve comparable or even better performance than SGD variants in 100 epochs, indicating the efficacy in learning dense, over-parameterized models. 
 It verifies  that our DessiLBI can explore the structural sparsity while keeping the performance of the dense model.
 
\begin{table}
\begin{centering}
 
\par\end{centering}
\begin{centering}
\begin{tabular}{c}
\begin{tabular}{c|c|cc}
\hline 
\multicolumn{2}{c|}{{\small{}{}Dataset }} & \multicolumn{2}{c}{{\small{}{}ImageNet-2012}}\tabularnewline
\hline 
{\small{}{}Models }  & {\small{}{}Variants }  & {\small{}{}AlexNet }  & {\small{}{}ResNet-18}\tabularnewline
\hline 
\multirow{5}{*}{\emph{\small{}{}SGD}{\small{}{} }} & {\small{}{}Naive}  & {\small{}{}{}--/-- }  & {\small{}{}{}60.76/79.18 }\tabularnewline
 & {\small{}{}$\mathit{l}_{1}$}  & {\small{}{}46.49/65.45} & {\small{}{}51.49/72.45}\tabularnewline
 & {\small{}{}Mom }  & {\small{}{}55.14/78.09 }  & {\small{}{}66.98/86.97 }\tabularnewline
 & {\small{}{}Mom-W}\emph{\small{}{}d$^{\star}$}{\small{}{} }  & {\small{}{}56.55/79.09 }  & {\small{}{}69.76/89.18}\tabularnewline
 & {\small{}{}Nesterov }  & {\small{}{}-/- }  & {\small{}{}70.19/89.30}\tabularnewline
\hline 
\multirow{1}{*}{\emph{\small{}{}Adam}{\small{}{} }} & {\small{}{}Naive}  & {\small{}{}--/-- }  & {\small{}{}59.66/83.28}\tabularnewline
\hline 
\multirow{3}{*}{\emph{\small{}{}}\emph{\small{}DessiLBI}}{\small{}{} } & {\small{}{}Naive}  & {\small{}{}55.06/77.69 }  & {\small{}{}65.26/86.57 }\tabularnewline
 & {\small{}{}Mom }  & {\small{}{}56.23/78.48 }  & {\small{}{}68.55/87.85}\tabularnewline
 & {\small{}{}Mom-Wd }  & \textbf{\small{}{}57.09/79.86 }{\small{} } & \textbf{\small{}{}{}{}70.55/89.56}\tabularnewline
\hline 
\end{tabular}\tabularnewline
\end{tabular}
\par\end{centering}
\begin{centering}
\par\end{centering}
\vspace{3mm}
\caption{\label{table:supervised_imagenet} Top-1/Top-5 accuracy(\%) on ImageNet-2012. All models are run in 100 epochs. $^{\star}$: results from the
official pytorch website. We use the official pytorch codes to run
the competitors. More results on MNIST/CIFAR10, please refer Tab. \ref{table:supervised_imagenet_mnist_cifar} in supplementary.
} 

\end{table}



\noindent {\bf Learning Sparse Filters for Interpretation.} In DessiLBI, the structural sparsity parameter $\Gamma_t$ explores important sub-network architectures that contribute significantly to the loss or error reduction in early training stages. Through the $\ell_2$-coupling, structural sparsity parameter $\Gamma_t$ may guide the weight parameter to explore those sparse models in favour of improved interpretability.  Figure~\ref{mnist_visualization} visualizes some sparse filters learned by DessiLBI  of LeNet-5 trained on MNIST (with $\kappa=10$ and weight decay every $40$ epochs), in comparison with dense filters learned by SGD. The activation pattern of such sparse filters favours high order global correlations between pixels of input images. Further visualization of convolutional filters learned by ImageNet is shown in Figure~\ref{fig:imagenet-vis} in Appendix, demonstrating the texture bias in ImageNet training \cite{TubingenICLR19}. 

\subsection{Experiments on Network Sparsification}
In this section, we conduct experiments on CIFAR10 dataset. Our DessiLBI is utilized for network sparsification. Emprically, with the augmented $\Gamma$, we can find sparse structure in the inverse scale space. 
In Appendix,  Figure~\ref{fig:vgg16_gamma} and Figure~\ref{fig:res56_gamma}, illustrate the distribution of filter norm for $W$ and $V$. It is clear that the magnitude scale of the augmented variable $V$ is balanced and is decoupled with arbitrary value of $W$.

To validate the efficacy of the structure, we use several experiments. By exploiting the magnitude scaling update strategy in computing $\Gamma$ for convolutional filters, and connections in convolutional layers and fully connected layers respectively, we introduce the filter and weight pruning in our experiments. For filter pruning, we aim to prune redundant convolutional filters, while the target of weight pruning is to prune connections in the network. 

\noindent \textbf{Common Setting.} 
We follow the setting of LeGr~\cite{chin2020towards}, training the network for 200 epochs and fine-tuning the network for 400 epochs.
For the training process, the learning rate is decayed by $5\times$ every 60 epochs with an initial learning rate 0.1.
In post-processing, we compare fine-tuning and retraining.
For fine-tuning, the learning rate is decayed by $5\times$ every 120 epochs. The initial learning rate for fine-tuning is 0.01.
For retraining, the setting is the same as training. 
For all the processes, the weight decay is set as 5e-4.

\begin{table*}
\centering
\begin{tabular}{c|c|c|c|c|c|c|c|c|c}
\hline 
\multicolumn{2}{c|}{\multirow{2}*{Method}} & \multicolumn{3}{c|}{ResNet50} & \multicolumn{2}{c|}{\multirow{2}*{Method}} & \multicolumn{3}{c}{VGG16}\tabularnewline
\cline{3-5} \cline{4-5} \cline{5-5} \cline{8-10} \cline{9-10} \cline{10-10} 
\multicolumn{2}{c|}{} & Acc  & Sparse Acc  & Sparisty  & \multicolumn{2}{c|}{} & Acc  & Sparse Acc  & Sparsity\tabularnewline
\hline 
\multicolumn{2}{c|}{Iterative-Pruning-A \cite{han2015deep}} & 92.67  & 93.08  & 0.40  & \multicolumn{2}{c|}{Iterative-Pruning-A \cite{han2015deep}} & 93.64  & 93.58  & 0.40 \tabularnewline
\multicolumn{2}{c|}{Iterative-Pruning-A \cite{han2015deep}} & 92.67  & 85.16  & 0.10  & \multicolumn{2}{c|}{Iterative-Pruning-A \cite{han2015deep}} & 93.64  & 91.88  & 0.10 \tabularnewline
\multicolumn{2}{c|}{Iterative-Pruning-B \cite{zhu2017prune}} & 93.02  & 92.14  & 0.40  & \multicolumn{2}{c|}{Iterative-Pruning-B \cite{zhu2017prune}} & 93.77  & 93.74  & 0.40 \tabularnewline
\multicolumn{2}{c|}{Iterative-Pruning-B \cite{zhu2017prune}} & 93.02  & 73.75  & 0.10  & \multicolumn{2}{c|}{Iterative-Pruning-B \cite{zhu2017prune}} & 93.77  & 91.09  & 0.10 \tabularnewline
\hline 
\multirow{6}{*}{DessiLBI} & FT 60 & 94.13 & 91.70 & 0.09 & \multirow{6}{*}{DessiLBI} & FT 60 & 93.45  & 93.80  & 0.05\tabularnewline
 & RT 60 & 94.13 & 90.68 & 0.09 &  & RT 60 & 93.45 & 93.08 & 0.05\tabularnewline
 & FT 120 & 94.13 & 92.64 & 0.11 &  & FT 120 & 93.45  & 94.18  & 0.06 \tabularnewline
 & RT 120 & 94.13 & 91.35 & 0.11 &  & RT 120 & 93.45 & 93.60  & 0.06\tabularnewline
 & FT 200 & 94.13 & 92.54 & 0.11 &  & FT 200 & 93.45  & 93.89  & 0.06\tabularnewline
  & FT* 200 & 94.13 & 90.03 & 0.11 &  & FT* 200 & 93.45  & 91.11  & 0.06\tabularnewline
 & RT 200 & 94.13 & 91.48 & 0.11 &  & RT 200 & 93.45 & 93.20  & 0.06\tabularnewline
 \hline
\end{tabular}
\vspace{3mm}
\caption{The performance of the sparse structure under weight pruning setting found via our method. Here FT $n$ and RT $n$ refer to fine-tuning and retraining using the weights and structure after training with $n$ epoch, respectively. In particular, FT* 200 means training for 200 epochs and using fine-tune for only 10 epochs. \label{unstructural_main} The definition of sparsity is the number of non-zero parameters divided by the number of whole parameters. Sparse Acc denotes the accuracy of pruned models.}
\end{table*}

\begin{figure}[htb]
    \centering
\begin{tabular}{cc}
\includegraphics[width = 4.0cm]{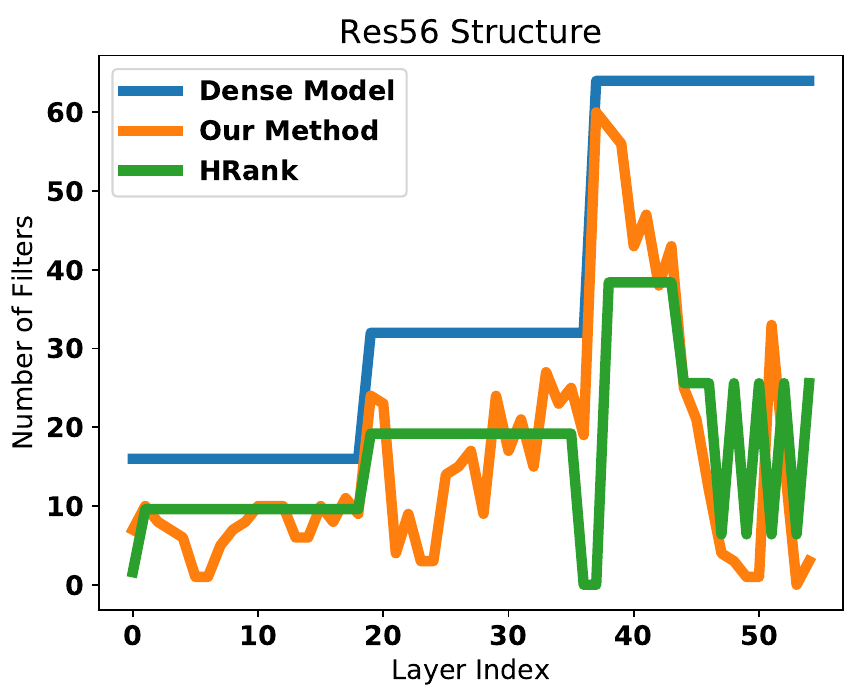} & 
\includegraphics[width = 4.0cm]{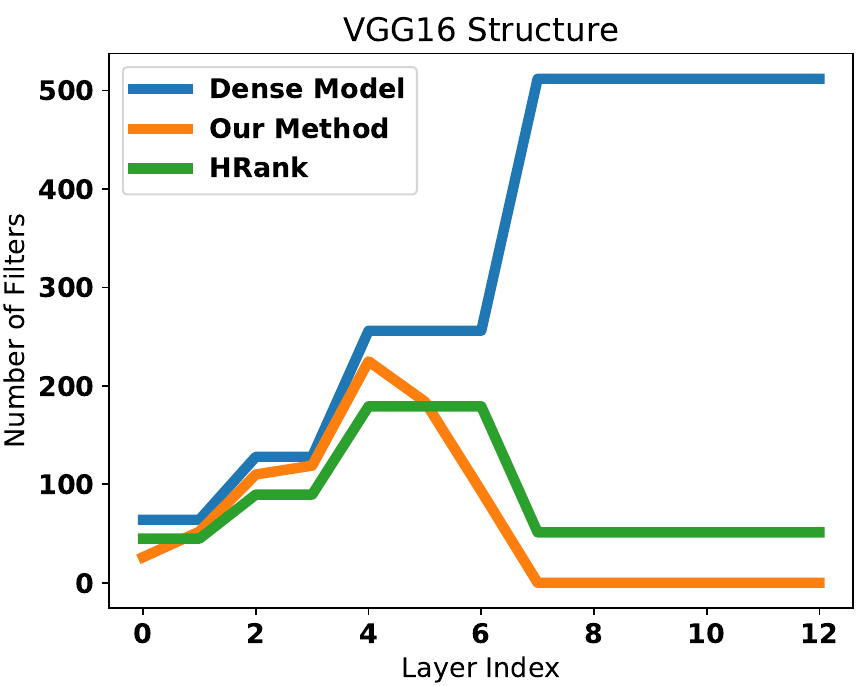}  \\
\end{tabular}
    \caption{The sparsified network structure of VGG16 and ResNet56  by our DessiLBI pruning layers. The green line shows the structure of HRank~\cite{lin2020hrank}. }
    \label{fig:layer_selection}
\end{figure}

\noindent \textbf{Pruning Weights.} \emph{Setting:} We conduct experiments on CIFAR10 and select two widely used network structure VGGNet~\cite{simonyan2014very} and ResNet~\cite{he2016deep}. 
Several methods such Iterative-Pruning-A \cite{han2015deep} as are selected for comparison. 
The training setting is the same as filter pruning.
Here, the threshold proximal mapping is altered: 0.002 for ResNet50 and 0.0006 for VGGNet16.
\\
\noindent \emph{Results.} For ResNet50, it is relatively compact, so the extreme sparsity leads to a relative decrease in accuracy. However, our results are often better than our competitors.
For VGGNet16, our method can find subnets with sparsity 0.06 and the accuracy is still comparable or even improved. Here fine-tuning is often better than retraining, often with an increase in terms of accuracy. The results are shown in Tab.~\ref{unstructural_main}. Results for fine-tuning with only 10 epochs are also present, which shows that the pruned network can get decent performance with just a few epochs of fine-tuning. The keep ratio for each layer is visualized in Figure~\ref{fig:weightratio} in Appendix. For VGGNet16, most of the weight in the middle of the network can be pruned without hurting the performance.
For the input conv layer and output fc layer, a high percent of weights are kept and the total number of weights for them is also smaller than other layers.
For ResNet50, we can find an interesting phenomenon, that most layers inside the block can be pruned to a very sparse level. Meanwhile, for the input and output of a block, a relatively high percentage of weights should be kept.

\noindent \textbf{Pruning Filters.} \emph{Setting:} 
We conduct experiments of filter pruning, and select several methods for comparison.
For ResNet56, we pick up LEGR~\cite{chin2020towards}, HRank~\cite{lin2020hrank}, AMC~\cite{he2018amc} and SFP~\cite{he2018soft} for comparison. And for VGGNet16, we choose L1~\cite{li2016pruning}, HRank~\cite{lin2020hrank} ,Variational~\cite{zhao2019variational} and Hinge~\cite{li2020group}.
Here we set $\nu=500$ and $\kappa=1$. The batch size is 128.  \\
\noindent \emph{Results.} 
The results are shown in Tab.~\ref{structural_main}. To control the whole sparsity, we set an upper bound for every layer which is 60\% for both VGGNet16 and ResNet56. 
For both of the two networks, we can find that our pruning method achieves good results.
For ResNet56, our model can reduce the FLOPs to 62\% without hurting the performance. For fine-tuning setting with full (pre-)training, the pruned model can even increase the accuracy by about 0.5. 
Our early stopping experiments show that our method indeed finds good sparse structure at the early stage.
By viewing the comparison between fine-tuning and retraining, we can find that for structure found by our method, fine-tuning slightly outperforms retraining.
And both of the post-processing steps show a very decent performance, i.e., our method is robust to the post-processing. Similar results can be observed  on VGGNet16 in Tab.~\ref{structural_main}.

\noindent \textbf{Pruning Layers.} \emph{Setting:} 
To push ahead the performance of our network sparsification, we  alter the hyperparameters to get a further sparsification by using \emph{pruning layers} as in Sec.~\ref{sec:net-sparsified}. Here we use $\kappa=1$. We select $\nu=1200$ for VGGNet16 and $\nu=1500$ for ResNet56.

\noindent \emph{Results.} The results are shown in Tab.~\ref{structural_main}. 
For ResNet56, our method reduces the FLOP count to about $45\%$, while the performance is only dropped by $0.26\%$.
The detailed structure is shown in Figure~\ref{fig:layer_selection}.
Interestingly, the sparsified VGG16 network actually has improved performance over the original VGG16 significantly. 
Note that although the Flop counts of VGGNet 16 are similar for pruning filters and pruning layers, the network of pruning layers is much sparser than that of filter pruning, with $10.95\%$ of the parameters remained, in contrast to the sparsity for pruning filters at about 42.82\%.
For VGG16, most of the filters close to the input layer are selected by our network sparsification and much of the pruning occurs near the output layer. 
By viewing the structure of VGGNet, it is clear that redundancy exists in the layers close to the output layer.
It is in accordance with our results.
For ResNet56, we drop two layers in the middle of the corresponding blocks. The whole structure shows a dense selection in the beginning and end of channel alternating stage and a sparse selection inside each stage.
A more detailed table is shown in Tab.~\ref{layer} in Appendix.

\begin{table*}
\centering
\begin{tabular}{c|c|c|c|c|c|c|c|c|c}
\hline 
\multicolumn{2}{c|}{\multirow{2}{*}{Method}} & \multicolumn{3}{c|}{ResNet56} & \multicolumn{2}{c|}{\multirow{2}{*}{Method}} & \multicolumn{3}{c}{VGG16}\tabularnewline
\cline{3-5} \cline{4-5} \cline{5-5} \cline{8-10} \cline{9-10} \cline{10-10} 
\multicolumn{2}{c|}{} & Acc  & Sparse Acc  & MFLOP COUNT  & \multicolumn{2}{c|}{} & Acc  & Sparse Acc  & MFLOP COUNT\tabularnewline
\hline 
\multicolumn{2}{c|}{LEGR~\cite{chin2020towards} } & 93.90  & 93.70  & 59 (47\%)  & \multicolumn{2}{c|}{Hinge~\cite{li2020group}} & 93.25  & 92.91  & 191 (61\%)\tabularnewline
\multicolumn{2}{c|}{AMC~\cite{he2018amc} } & 92.80  & 91.90  & 63 (50\%)  & \multicolumn{2}{c|}{Variational~\cite{zhao2019variational}} & 93.25  & 93.18  & 190 (61\%)\tabularnewline
\multicolumn{2}{c|}{SFP~\cite{he2018soft} } & 93.60  & 93.40  & 59 (47\%)  & \multicolumn{2}{c|}{L1~\cite{li2016pruning}} & 93.25  & 93.11  & 200 (64\%)\tabularnewline
\multicolumn{2}{c|}{HRank~\cite{lin2020hrank} } & 93.26  & 93.17  & 63 (50\%)  & \multicolumn{2}{c|}{HRank~\cite{lin2020hrank}} & 93.96  & 93.43  & 146 (54\%)\tabularnewline
\hline 
\multirow{8}{*}{DessiLBI} & FT 60  & 93.46  & 93.30  & 77 (61\%)  & \multirow{8}{*}{DessiLBI} & FT 60  & 93.43  & 93.00  & 91 (29\%)\tabularnewline
 & RT 60  & 93.46  & 93.31  & 77 (61\%)  &  & RT 60  & 93.43  & 93.01  & 91 (29\%)\tabularnewline
 & FT 120  & 93.46  & 93.59  & 78 (62\%)  &  & FT 120  & 93.43  & 93.21  & 97 (31\%)\tabularnewline
 & RT 120  & 93.46  & 93.16  & 78 (62\%)  &  & RT 120  & 93.43  & 92.92  & 97 (31\%)\tabularnewline
 & FT 200  & 93.46  & 93.94  & 78 (62\%)  &  & FT 200  & 93.43  & 93.59  & 100 (32\%)\tabularnewline
 & FT{*} 200  & 93.46  & 89.97  & 78 (62\%)  &  & FT{*} 200  & 93.43  & 89.76  & 100 (32\%)\tabularnewline
 & RT 200  & 93.46  & 93.15  & 78 (62\%)  &  & RT 200  & 93.43  & 93.01  & 100 (32\%)\tabularnewline
\cline{2-5} \cline{3-5} \cline{4-5} \cline{5-5} \cline{7-10} \cline{8-10} \cline{9-10} \cline{10-10} 
 & Layer Selection & 93.73 & 93.47 & 56 (45\%) &  & Layer Selection & 93.47  & 94.06 & 106 (34\%)\tabularnewline
\hline 
\end{tabular}
\vspace{3mm}
\caption{ \label{structural_main}The performance of the sparse structure under filter pruning setting found via our method. Here FT $n$ and RT $n$ refer to fine-tuning and retraining using the weights and structure after training with $n$ epoch, respectively. In particular, FT* 200 means training for 200 epochs and using fine-tune for only 10 epochs. Layer Selection contains experiments of pruning layers. MFLOP COUNT indicates the total number of floating-point operations executed in millions. The percentage means the ratio between the current MFLOP count and the MFLOP 
count of the full model. Sparse Acc denotes the accuracy of pruned model.}
\end{table*}












\subsection{Experiments on  Winning Ticket}
In this section, with early stopping, $\Gamma_t$ may learn effective subnetworks (i.e. ``winning tickets'' in~\cite{frankle2018lottery}) in the inverse scale space. After retraining, these subnetworks achieve comparable or even better performance than the dense network. Our method is comparable to existing pruning strategies with an improved efficiency.       



\noindent \textbf{Settings.} We find winning tickets on CIFAR10 dataset and two network structures VGGNet16 and ResNet50. When DessiLBI is used for finding winning tickets, we set $\mu=500,\lambda=0.05$ for ResNet50 and $\mu=200,\lambda=0.03$ for VGG16 respectively. $\kappa=1$, $\alpha=0.1$ and training epoch $T=160$ is used in both case. Then, we retrain the winning tickets using SGD with 160 epochs from the end of the second epoch in the first stage. The initial learning rate for SGD is 0.1 and decayed by 10 at the 80 and 120 epoch. To make a fair comparison, in one shot pruning and iterative pruning, we also use SGD with this setting for training and retraining. 


\noindent \textbf{Winning tickets in early stopping.} The results are shown in Figure~\ref{fig:lottery}. DessiLBI gives a sequence of winning tickets with increasing sparsity. All the winning tickets found by our method achieve similar or even better performance than the winning tickets found by one shot pruning and iterative pruning. All the winning tickets outperform the two baseline methods, especially in extremely sparse cases. It is worth noting that DessiLBI with early stopping at a very early epoch (for example, the end of second epoch) is computationally effective and gives extremely sparse winning tickets. Comparing with one shot pruning, we can avoid complete training in the first stage and save up to training 158 epochs. Since iterative pruning consists of 10 iterative pruning, We can save more computation by using early stopped DessiLBI followed by retraining once. The high efficiency of DessiLBI is due to exploring subnets via inverse scale space in which, important subnets come out first. 

\begin{figure} 
    \centering
\begin{tabular}{cc}
\includegraphics[width=0.22\textwidth]{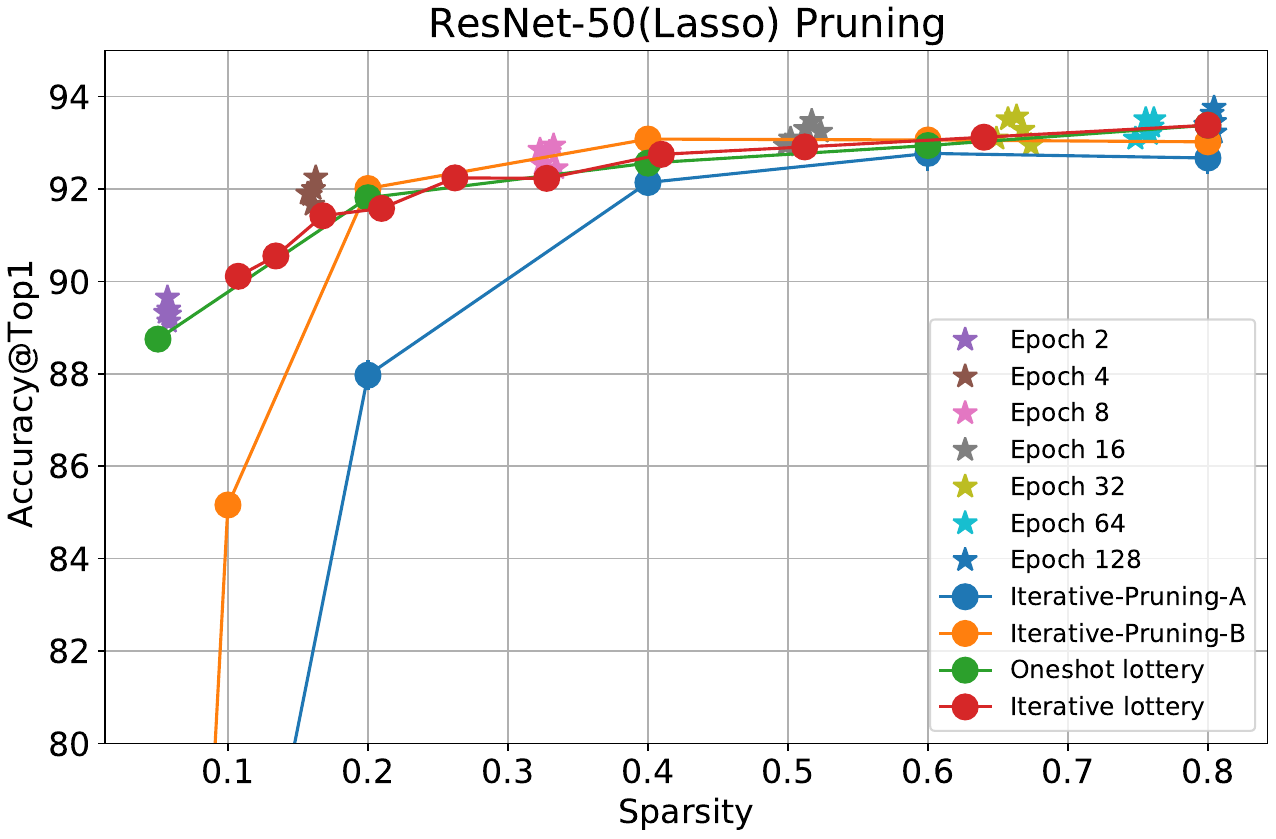} & 
\includegraphics[width=0.22\textwidth]{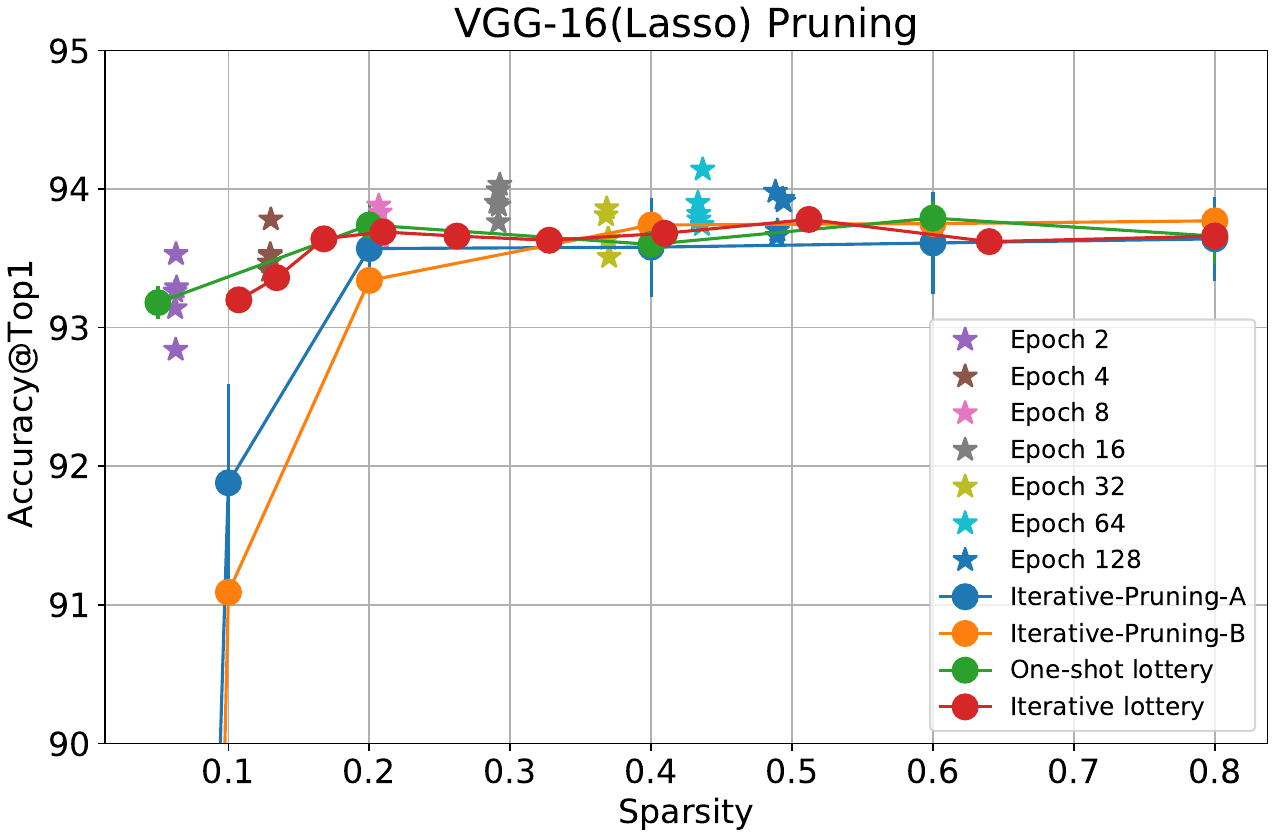}  \\
\end{tabular}

    \caption{DessiLBI with early stopping finds sparse subnet whose test accuracy (stars) after retraining is comparable or even better than winning tickets found by one shot pruning and iterative pruning in \cite{frankle2018lottery} (reproduced based on released code of \cite{rethinking_iclr}). All the three lottery ticket pruning methods outperform Iterative-Pruning-A \cite{han2015deep} and Iterative-Pruning-B \cite{zhu2017prune} (reproduced based on our own implementation)). }
    \label{fig:lottery}
\end{figure}

\noindent \textbf{Transferability of winning tickets.}
 We further explore the generalization of winning tickets found by DessiLBI cross different datasets like \cite{morcos2019one}. The results are shown in Figure~\ref{fig:transfer_lottery}. Specifically, we search winning tickets on source datasets and retrain with target datasets. 
 
 Similar to the observations in \cite{morcos2019one}, our winning tickets discovered by DessiLBI in early stopping are transferable in the sense that they generalize across a variety of natural image datasets, such as CIFAR10, CIFAR100, and SVHN, often achieving performance close to the winning tickets generated on the same dataset. Moreover, winning tickets of DessiLBI may significantly outperform random tickets in very sparse cases. This shows that our algorithm can find transferable winning tickets. 
 
 We try to give some tentative explanations about the  transferability. Despite that different datasets are used here, they all come from the same domain, i.e., natural images. In that sense, the images and models on these datasets shall share some common features; and winning tickets found on different datasets may encode the shared inductive bias and benefit the performance of sparse networks trained on another dataset.
 
 

\begin{figure}
    \centering
    \begin{tabular}{c}
    \hspace{-0.3in}
    \begin{tabular}{ccc}
\includegraphics[width = 3cm]{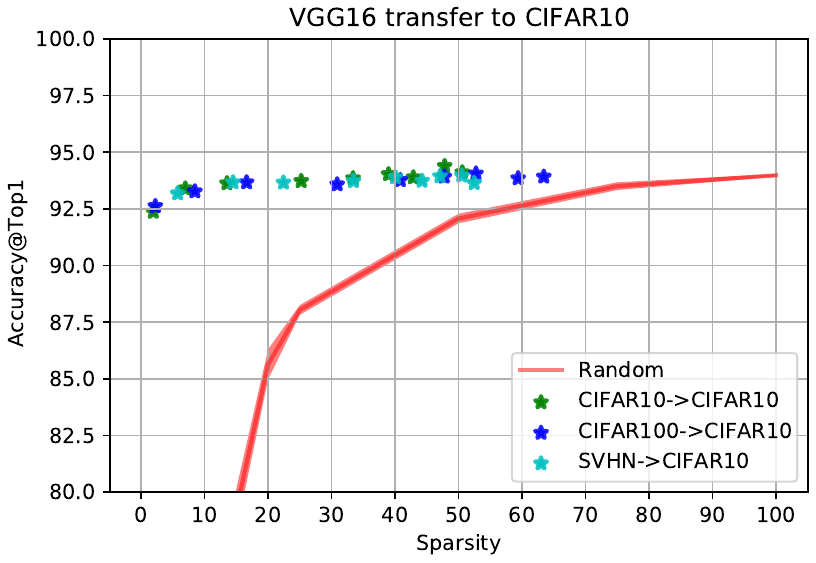} &     \hspace{-0.2in}
\includegraphics[width = 3cm]{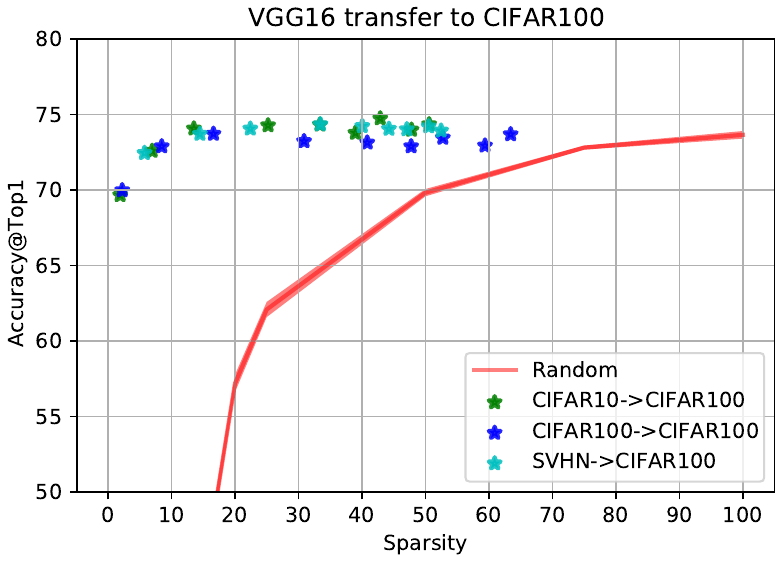} &     \hspace{-0.2in}
\includegraphics[width = 3cm]{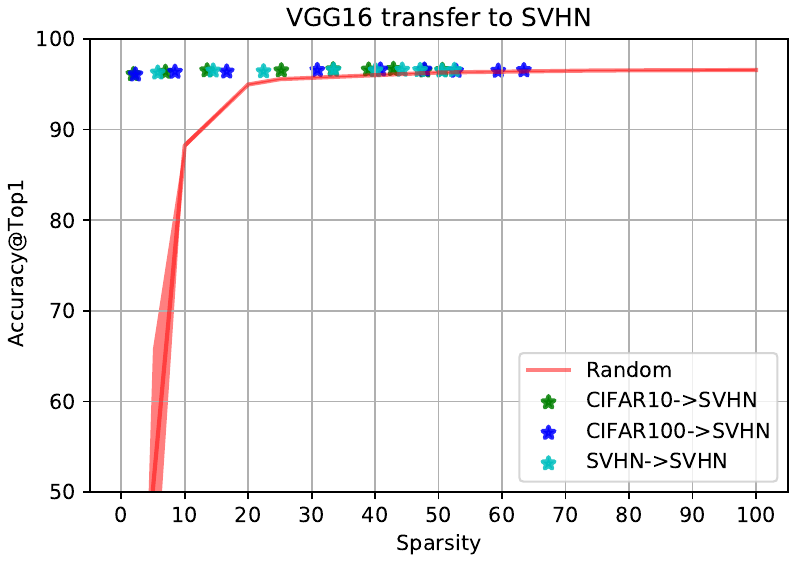} \\
\includegraphics[width = 3cm]{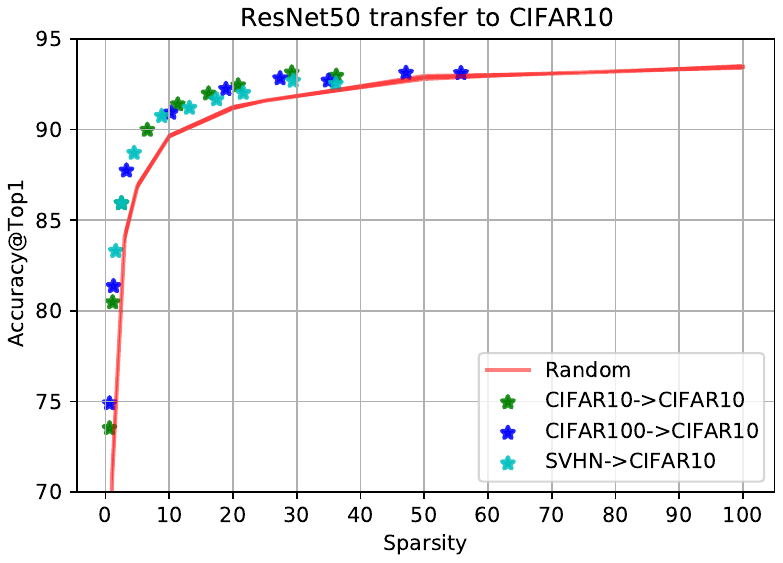} &     \hspace{-0.2in}
\includegraphics[width = 3cm]{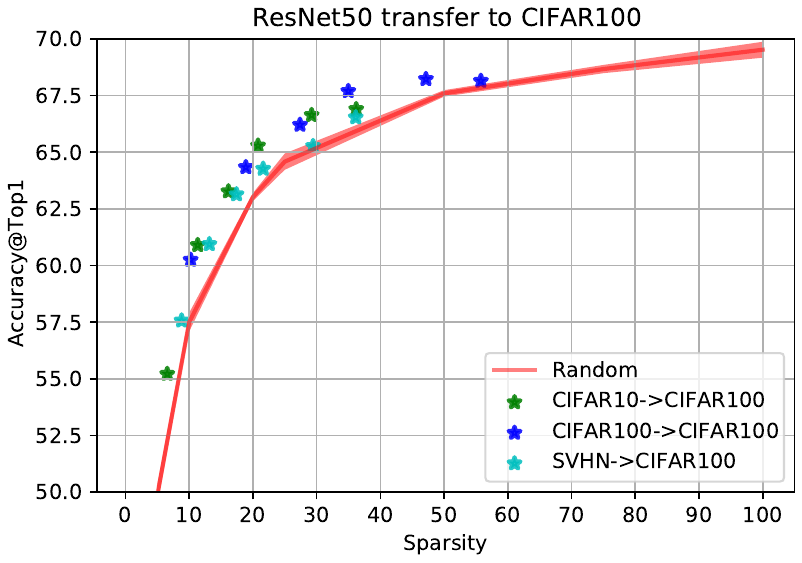} &    \hspace{-0.2in}
\includegraphics[width = 3cm]{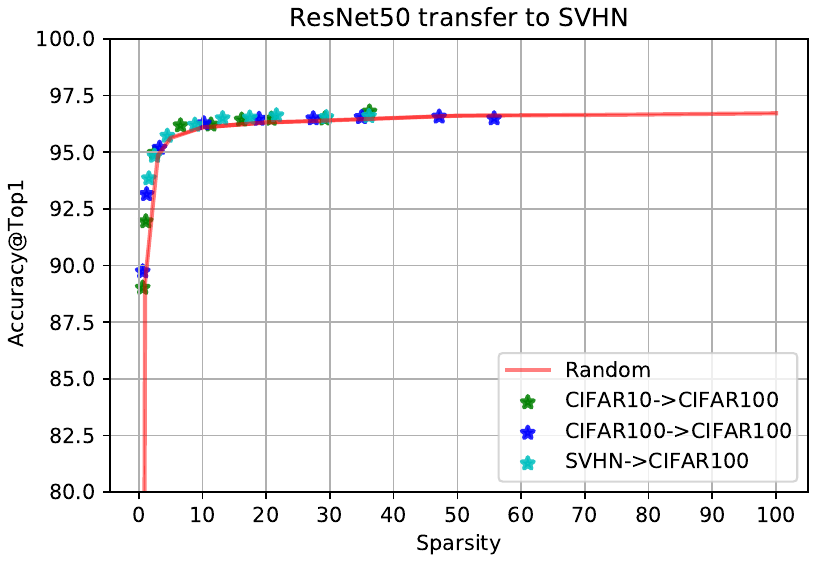} \\
    \end{tabular}
    \end{tabular}
    \caption{The winning ticket found by DessiLBI could generalize to different datasets. Each figure shows the trade-off of accuracy and sparsity on a target dataset. The red curve represents retraining after random pruning and the shaded area shows the standard error of five experiments. Stars with different colors show the performance of winning tickets found by DessiLBI on different source datasets. It is clear that on all target datasets, the winning tickets generated from alternative source datasets exhibit similar performance to that generated from the target dataset. }
    \label{fig:transfer_lottery}
\end{figure}

\subsection{Experiments on Network  Growing}

In this section, we grow a network to explore compact network structures via Inverse Scale Space.
In detail, our method attempts to find a better filter configuration for each layer during training.
To verify its efficacy, we conduct several experiments with CIFAR10 dataset.

\noindent \textbf{Settings.} We first try to verify our method with a plain network structure which means we do not use some special structure such as residue connection~\cite{he2016deep}.
Here we utilize VGG16~\cite{simonyan2014very} as the backbone.
The backbone is initialized with eight filters in each convolutional layer and there are two fully connected layers following the setting of ~\cite{li2016pruning}.
The hyper-parameters of DessiLBI here are $\kappa$ = 1, $\nu$ = 100 and the learning rate is set as 0.01.
And the batch size is 128.   
We set the hyper-parameters of our method as $J=10$, $\tau=0.8$. After finishing growing structure, we decrease the learning rate by $10\times$ and continue training for 30 epochs.

\noindent \textbf{Results.} Table.~\ref{tab:vgg16-filter} illustrates the filter configuration of models found by  $L_1$-prune \cite{li2016pruning} and our method. 
We can observe that among the 13 convolutional layers, $L_1$-prune only reduces filters in the first convolution layer and the last 6 convolution layers by half, with the total number of final parameters being 5.40 M.
By comparison, our method obtains a more compact model with only 2.96M parameters.

\begin{table}[htb]
  \centering{}
  \begin{tabular}{c|c|c|c|c}
  \hline  \multicolumn{2}{c|}{} & \small{} Model & \small{} $L_1$-P~\cite{li2016pruning} & \small{}Ours \\ \hline 
   \small{} VGG16   &\small{}Output size   & \small{}Maps & \small{}Maps & \small{}Maps  \\ \hline 
 \small{} Conv\_1   &\small{}32$\times$32   &\small{}64    &\small{}32    & \small{}32    \\ \hline
 \small{} Conv\_2   &\small{}32$\times$32 &\small{}64    &\small{}64    & \small{}96\\ \hline
  \small{}Conv\_3  &\small{}16$\times$16  &\small{}128   &\small{}128    &\small{} 126\\ \hline
 \small{} Conv\_4   &\small{}16$\times$16 &\small{}128   &\small{}128    &\small{} 220\\ \hline
  \small{}Conv\_5  &\small{}8$\times$8  &\small{}256   &\small{}256    &\small{} 242\\ \hline
  \small{}Conv\_6   &\small{}8$\times$8  &\small{}256   &\small{}256    & \small{}200\\ \hline
  \small{}Conv\_7   &\small{}8$\times$8  &\small{}256   &\small{}256    & \small{}182\\ \hline
  \small{}Conv\_8  &\small{}4$\times$4  &\small{}512   &\small{}256    & \small{}261\\ \hline
  \small{}Conv\_9  &\small{}4$\times$4  &\small{}512   &\small{}256    & \small{}151\\ \hline
  \small{}Conv\_10 &\small{}4$\times$4  &\small{}512   &\small{}256    & \small{}105\\ \hline
  \small{}Conv\_11 &\small{}2$\times$2 &\small{}512   &\small{}256    & \small{}88\\ \hline
  \small{}Conv\_12 &\small{}2$\times$2 &\small{}512   &\small{}256    & \small{}88\\ \hline
  Conv\_13 &\small{}2$\times$2 &512   &\small{}256    & \small{}80\\ \hline
  \small{}FC1     &\small{}- &\small{}512   &\small{}512    &\small{}512\\ \hline
  \small{}FC2     &\small{}- &\small{}10   &\small{}10     &\small{}10\\ \hline 
  \small{}Params  &\small{}- &\small{}15 M   &\small{}5.40 M  &\small{}2.96 M \\ \hline
  \small{}FLOPs   &\small{}- &\small{}313 M  &\small{}206 M  &\small{}217 M\\ \hline
  \small{}Acc. & \small{}- &\small{}93.25\% &\small{}93.40\% &\small{}93.80\% \\ \hline
\end{tabular}
\vspace{3mm}
  \caption{Filter configuration of model pruned by L1-prune and model learned by our growing method on CIFAR10 dataset. $L_1$-P is the $L_1$ pruning. The fully connected layer is altered according to ~\cite{li2016pruning} .}\label{tab:vgg16-filter}
\end{table}

\begin{table}
\centering{}%
\begin{tabular}{ccccc}
\hline
{\small{}{}Dataset} &{\small{}{}Method} & Params. & {\small{}{}Acc(\%) }\\
\hline
\multirow{2}{*}{{\small{}CIFAR10} } & {\small{}AutoGrow~\cite{wen2019autogrow}} & {\small{}{}4.06 M} & {\small{}{}94.27}\\
 & {\small{}Ours}  & {\small{}{}2.69 M} & {\small{}{}94.82 }\\

\hline \hline
\multirow{2}{*}{{\small{}CIFAR100} } & \small{}AutoGrow~\cite{wen2019autogrow}   & {\small{}{}5.13 M} & {\small{}{}74.72 } \\
 & {\small{}Ours}& {\small{}{}3.37 M} & {\small{}{}76.86 }\\
\hline
\end{tabular}
\vspace{3mm}
\caption{Comparison between AutoGrow~\cite{wen2019autogrow} and our growing method.}
\label{tab:compareAutogrow}
\end{table}

A crucial observation is that our growing method prefers to filters of large feature spatial size, which helps our model with a better performance. Such results suggest that our growing method may learn better model structures by learning better filter configurations. Specifically, pruning has the limitation that it only tries to reduce the number of filters in each layer based on the originally designed number, while our method gets rid of this limitation and adapts to better configurations.

Note that our growing method is different from network architecture search (NAS)~\cite{zoph2016neural,liu2018darts,cai2018proxylessnas}.
NAS searches the overall structure of the new network in the predefined space and spends much more time and computational cost on this searching process. 
However, our method is much more light and can find good network structure during training.
Besides, our model adds little extra memory cost and is more device-friendly.

To further validate our method, AutoGrow~\cite{wen2019autogrow} is picked for comparison.
ResNet~\cite{he2016deep} is  selected as the backbone.
For AutoGrow, the results are from the original paper.
For our method, we use ResNet56 with initial channel number 8.
The results are in Table~\ref{tab:compareAutogrow}. It is clear that the networks grown by our method are more compact while having a better performance than AutoGrow. \yanwei{
For more sufficient comparison, we try to compare our approach with several recent method, such as NASH~\cite{elsken2017simple}, Splitting~\cite{wu2019splitting}, Energy-aware Splitting~\cite{wang2019energy}, Firefly growing~\cite{wu2021firefly}.Please refer to the Appendix.~\ref{extra_grow} for more experimental comparison on our growing algorithm.}

\vspace{-0.16cm}

\section{Conclusion}
In this paper, a parsimonious deep learning method is proposed based on the dynamics of inverse scale spaces. Its simple discretization – DessiLBI has a provable global convergence on DNNs and thus is employed to train deep networks. Our DessiLBI can explore the structural sparsity without sacrificing the performance of the dense, over-parameterized models, in favour of interpretability. It helps identify effective sparse structure at the early stage which is verified with several experiments on network sparsification and finding winning ticket subnets which are transferable across different natural image datasets.
Particularly, our method can select the structure in the level of layers which is not well studied in the existing literature.
Furthermore, the proposed method can be applied to grow a network by simultaneously learning both network structure and parameters. 



%




\bibliographystyle{IEEEtran}
\bibliography{reference,extra}

\begin{thebibliography}{100}
\providecommand{\url}[1]{#1}
\csname url@samestyle\endcsname
\providecommand{\newblock}{\relax}
\providecommand{\bibinfo}[2]{#2}
\providecommand{\BIBentrySTDinterwordspacing}{\spaceskip=0pt\relax}
\providecommand{\BIBentryALTinterwordstretchfactor}{4}
\providecommand{\BIBentryALTinterwordspacing}{\spaceskip=\fontdimen2\font plus
\BIBentryALTinterwordstretchfactor\fontdimen3\font minus
  \fontdimen4\font\relax}
\providecommand{\BIBforeignlanguage}[2]{{%
\expandafter\ifx\csname l@#1\endcsname\relax
\typeout{** WARNING: IEEEtran.bst: No hyphenation pattern has been}%
\typeout{** loaded for the language `#1'. Using the pattern for}%
\typeout{** the default language instead.}%
\else
\language=\csname l@#1\endcsname
\fi
#2}}
\providecommand{\BIBdecl}{\relax}
\BIBdecl

\bibitem{Hinton-imagenet-2012}
A.~Krizhevsky, I.~Sutskever, and G.~E. Hinton, ``Image{N}et classification with
  deep convolutional neural networks,'' in \emph{Advances in Neural Information
  Processing Systems (NIPS)}, 2012.

\bibitem{zhang2016accelerating}
X.~Zhang, J.~Zou, K.~He, and J.~Sun, ``Accelerating very deep convolutional
  networks for classification and detection,'' \emph{IEEE Transactions on
  Pattern Analysis and Machine Intelligence}, vol.~38, no.~10, pp. 1943--1955,
  2016.

\bibitem{deepsfm}
X.~Wei, Y.~Zhang, Z.~Li, Y.~Fu, and X.~Xue, ``Deepsfm: Structure from motion
  via deep bundle adjustment,'' in \emph{ECCV}, 2020.

\bibitem{bottou2010large}
L.~Bottou, ``Large-scale machine learning with stochastic gradient descent,''
  in \emph{Proceedings of COMPSTAT'2010}.\hskip 1em plus 0.5em minus
  0.4em\relax Springer, 2010, pp. 177--186.

\bibitem{kingma2014adam}
D.~Kingma and J.~Ba, ``Adam: A method for stochastic optimization,'' in
  \emph{International Conference on Learning Representations}, 2015.

\bibitem{zhang2017understanding}
C.~Zhang, S.~Bengio, M.~Hardt, B.~Recht, and O.~Vinyals, ``Understanding deep
  learning requires rethinking generalization,'' \emph{ICLR}, 2017.

\bibitem{bartlett1997valid}
P.~L. Bartlett, ``For valid generalization, the size of the weights is more
  important than the size of the network,'' \emph{Advances in neural
  information processing systems}, pp. 134--140, 1997.

\bibitem{bartlett2017spectrally}
P.~Bartlett, D.~J. Foster, and M.~Telgarsky, ``Spectrally-normalized margin
  bounds for neural networks,'' in \emph{Advances in Neural Information
  Processing Systems}, 2017.

\bibitem{golowich2018size}
N.~Golowich, A.~Rakhlin, and O.~Shamir, ``Size-independent sample complexity of
  neural networks,'' in \emph{Conference On Learning Theory}.\hskip 1em plus
  0.5em minus 0.4em\relax PMLR, 2018, pp. 297--299.

\bibitem{neyshabur2018role}
B.~Neyshabur, Z.~Li, S.~Bhojanapalli, Y.~LeCun, and N.~Srebro, ``The role of
  over-parametrization in generalization of neural networks,'' in
  \emph{International Conference on Learning Representations}, 2018.

\bibitem{mei2018mean}
S.~Mei, A.~Montanari, and P.-M. Nguyen, ``A mean field view of the landscape of
  two-layer neural networks,'' \emph{Proceedings of the National Academy of
  Sciences}, vol. 115, no.~33, pp. E7665--E7671, 2018.

\bibitem{allen2019convergence}
Z.~Allen-Zhu, Y.~Li, and Z.~Song, ``A convergence theory for deep learning via
  over-parameterization,'' in \emph{International Conference on Machine
  Learning}.\hskip 1em plus 0.5em minus 0.4em\relax PMLR, 2019, pp. 242--252.

\bibitem{du2019gradient}
S.~Du, J.~Lee, H.~Li, L.~Wang, and X.~Zhai, ``Gradient descent finds global
  minima of deep neural networks,'' in \emph{International Conference on
  Machine Learning}.\hskip 1em plus 0.5em minus 0.4em\relax PMLR, 2019, pp.
  1675--1685.

\bibitem{venturi2018spurious}
L.~Venturi, A.~S. Bandeira, and J.~Bruna, ``Spurious valleys in two-layer
  neural network optimization landscapes,'' \emph{Journal of Machine Learning
  Research}, 2019.

\bibitem{balaji2021understanding}
\BIBentryALTinterwordspacing
Y.~Balaji, M.~Sajedi, N.~M. Kalibhat, M.~Ding, D.~St{\"o}ger,
  M.~Soltanolkotabi, and S.~Feizi, ``Understanding over-parameterization in
  generative adversarial networks,'' in \emph{International Conference on
  Learning Representations}, 2021. [Online]. Available:
  \url{https://openreview.net/forum?id=C3qvk5IQIJY}
\BIBentrySTDinterwordspacing

\bibitem{tibshirani1996regression}
R.~Tibshirani, ``Regression shrinkage and selection via the lasso,''
  \emph{Journal of the Royal Statistical Society: Series B (Methodological)},
  vol.~58, no.~1, pp. 267--288, 1996.

\bibitem{collins2014memory}
M.~D. Collins and P.~Kohli, ``Memory bounded deep convolutional networks,''
  \emph{arXiv preprint arXiv:1412.1442}, 2014.

\bibitem{donoho2001uncertainty}
D.~L. Donoho and X.~Huo, ``Uncertainty principles and ideal atomic
  decomposition,'' \emph{IEEE transactions on information theory}, vol.~47,
  no.~7, pp. 2845--2862, 2001.

\bibitem{tropp2004greed}
J.~A. Tropp, ``Greed is good: Algorithmic results for sparse approximation,''
  \emph{IEEE Transactions on Information theory}, vol.~50, no.~10, pp.
  2231--2242, 2004.

\bibitem{zhao2006model}
P.~Zhao and B.~Yu, ``On model selection consistency of lasso,'' \emph{Journal
  of Machine learning research}, vol.~7, no. Nov, pp. 2541--2563, 2006.

\bibitem{loshchilov2017decoupled}
I.~Loshchilov and F.~Hutter, ``Decoupled weight decay regularization,'' in
  \emph{International Conference on Learning Representations}, 2018.

\bibitem{yao2007early}
Y.~Yao, L.~Rosasco, and A.~Caponnetto, ``On early stopping in gradient descent
  learning,'' \emph{Constructive Approximation}, vol.~26, no.~2, pp. 289--315,
  2007.

\bibitem{wei2017early}
Y.~Wei, F.~Yang, and M.~J. Wainwright, ``Early stopping for kernel boosting
  algorithms: A general analysis with localized complexities,'' \emph{Advances
  in Neural Information Processing Systems}, vol.~30, 2017.

\bibitem{yuan2006model}
M.~Yuan and Y.~Lin, ``Model selection and estimation in regression with grouped
  variables,'' \emph{Journal of the Royal Statistical Society: Series B
  (Statistical Methodology)}, vol.~68, no.~1, pp. 49--67, 2006.

\bibitem{yoon2017combined}
J.~Yoon and S.~J. Hwang, ``Combined group and exclusive sparsity for deep
  neural networks,'' in \emph{International Conference on Machine Learning},
  2017.

\bibitem{han2015learning}
S.~Han, J.~Pool, J.~Tran, and W.~Dally, ``Learning both weights and connections
  for efficient neural network,'' in \emph{Advances in neural information
  processing systems}, 2015.

\bibitem{he2018soft}
Y.~He, G.~Kang, X.~Dong, Y.~Fu, and Y.~Yang, ``Soft filter pruning for
  accelerating deep convolutional neural networks,'' in \emph{Proceedings of
  the 27th International Joint Conference on Artificial Intelligence}, 2018,
  pp. 2234--2240.

\bibitem{zhou2017incremental}
A.~Zhou, A.~Yao, Y.~Guo, L.~Xu, and Y.~Chen, ``Incremental network
  quantization: Towards lossless cnns with low-precision weights,''
  \emph{ICLR}, 2017.

\bibitem{jaderberg2014speeding}
M.~Jaderberg, A.~Vedaldi, and A.~Zisserman, ``Speeding up convolutional neural
  networks with low rank expansions,'' in \emph{BMVC}, 2014.

\bibitem{hinton2015distilling}
G.~Hinton, O.~Vinyals, and J.~Dean, ``Distilling the knowledge in a neural
  network,'' \emph{arXiv preprint arXiv:1503.02531}, 2015.

\bibitem{frankle2018lottery}
J.~Frankle and M.~Carbin, ``The lottery ticket hypothesis: Finding sparse,
  trainable neural networks,'' in \emph{International Conference on Learning
  Representations}, 2018.

\bibitem{frankle2019lottery}
J.~Frankle, G.~K. Dziugaite, D.~M. Roy, and M.~Carbin, ``Stabilizing the
  lottery ticket hypothesis,'' \emph{arXiv preprint arXiv:1903.01611}, 2019.

\bibitem{morcos2019one}
A.~S. Morcos, H.~Yu, M.~Paganini, and Y.~Tian, ``One ticket to win them all:
  generalizing lottery ticket initializations across datasets and optimizers,''
  in \emph{Neural Information Processing Systems}, 2019.

\bibitem{BGOX06}
M.~Burger, G.~Gilboa, S.~Osher, and J.~Xu, ``Nonlinear inverse scale space
  methods,'' \emph{Communications in Mathematical Sciences}, vol.~4, no.~1, pp.
  179--212, 2006.

\bibitem{osher2016diff}
S.~Osher, F.~Ruan, J.~Xiong, Y.~Yao, and W.~Yin, ``Sparse recovery via
  differential inclusions,'' \emph{Applied and Computational Harmonic
  Analysis}, 2016.

\bibitem{burger2005nonlinear}
M.~Burger, S.~Osher, J.~Xu, and G.~Gilboa, ``Nonlinear inverse scale space
  methods for image restoration,'' in \emph{International Workshop on
  Variational, Geometric, and Level Set Methods in Computer Vision}.\hskip 1em
  plus 0.5em minus 0.4em\relax Springer, 2005, pp. 25--36.

\bibitem{huang18_aistats}
C.~Huang and Y.~Yao, ``A unified dynamic approach to sparse model selection,''
  in \emph{The 21st International Conference on Artificial Intelligence and
  Statistics}, Lanzarote, Spain, 2018.

\bibitem{huang16_nips}
C.~Huang, X.~Sun, J.~Xiong, and Y.~Yao, ``Split lbi: An iterative
  regularization path with structural sparsity,'' in \emph{Advances in Neural
  Information Processing Systems}, D.~D. Lee, M.~Sugiyama, U.~V. Luxburg,
  I.~Guyon, and R.~Garnett, Eds., 2016, pp. 3369--3377.

\bibitem{huang18_acha}
C.~Huang, X.~Sun, J.~Xiong, and Y.~Yao, ``Boosting with structural sparsity: A
  differential inclusion approach,'' \emph{Applied and Computational Harmonic
  Analysis}, vol.~48, no.~1, pp. 1--45, 2020.

\bibitem{ye2020good}
M.~Ye, C.~Gong, L.~Nie, D.~Zhou, A.~Klivans, and Q.~Liu, ``Good subnetworks
  provably exist: Pruning via greedy forward selection,'' in
  \emph{International Conference on Machine Learning}.\hskip 1em plus 0.5em
  minus 0.4em\relax PMLR, 2020, pp. 10\,820--10\,830.

\bibitem{rethinking_iclr}
Z.~Liu, M.~Sun, T.~Zhou, G.~Huang, and T.~Darrell, ``Rethinking the value of
  network pruning,'' in \emph{International Conference on Learning
  Representations}, 2019.

\bibitem{fu2020dessilbi}
Y.~Fu, C.~Liu, D.~Li, X.~Sun, J.~Zeng, and Y.~Yao, ``Dessilbi: Exploring
  structural sparsity of deep networks via differential inclusion paths,'' in
  \emph{International Conference on Machine Learning}.\hskip 1em plus 0.5em
  minus 0.4em\relax PMLR, 2020, pp. 3315--3326.

\bibitem{NemYu83}
A.~Nemirovski and D.~Yudin, \emph{Problem complexity and Method Efficiency in
  Optimization}.\hskip 1em plus 0.5em minus 0.4em\relax New York: Wiley, 1983,
  nauka Publishers, Moscow (in Russian), 1978.

\bibitem{beck2003mirror}
A.~Beck and M.~Teboulle, ``Mirror descent and nonlinear projected subgradient
  methods for convex optimization,'' \emph{Operations Research Letters},
  vol.~31, no.~3, pp. 167--175, 2003.

\bibitem{nemirovski2012tutorial}
A.~Nemirovski, ``Tutorial: Mirror descent algorithms for large-scale
  deterministic and stochastic convex optimization.''

\bibitem{ghadimi2012optimal}
S.~Ghadimi and G.~Lan, ``Optimal stochastic approximation algorithms for
  strongly convex stochastic composite optimization i: A generic algorithmic
  framework,'' \emph{SIAM Journal on Optimization}, vol.~22, no.~4, pp.
  1469--1492, 2012.

\bibitem{nedic2014stochastic}
A.~Nedic and S.~Lee, ``On stochastic subgradient mirror-descent algorithm with
  weighted averaging,'' \emph{SIAM Journal on Optimization}, vol.~24, no.~1,
  pp. 84--107, 2014.

\bibitem{su2016differential}
W.~Su, S.~Boyd, and E.~J. Candes, ``A differential equation for modeling
  nesterov's accelerated gradient method: theory and insights,'' \emph{The
  Journal of Machine Learning Research}, vol.~17, no.~1, pp. 5312--5354, 2016.

\bibitem{krichene2015accelerated}
W.~Krichene, A.~Bayen, and P.~L. Bartlett, ``Accelerated mirror descent in
  continuous and discrete time,'' in \emph{Advances in neural information
  processing systems}, 2015, pp. 2845--2853.

\bibitem{azizan2019sto}
N.~Azizan, S.~Lale, and B.~Hassibi, ``Stochastic mirror descent on
  overparameterized nonlinear models: Convergence, implicit regularization, and
  generalization,'' \emph{arXiv preprint arXiv:1906.03830}, 2019.

\bibitem{yin2008bregman}
W.~Yin, S.~Osher, J.~Darbon, and D.~Goldfarb, ``Bregman iterative algorithms
  for compressed sensing and related problems,'' \emph{SIAM Journal on Imaging
  sciences}, vol.~1, no.~1, pp. 143--168, 2008.

\bibitem{osher2005iterative}
S.~Osher, M.~Burger, D.~Goldfarb, J.~Xu, and W.~Yin, ``An iterative
  regularization method for total variation-based image restoration,''
  \emph{Multiscale Modeling \& Simulation}, vol.~4, no.~2, pp. 460--489, 2005.

\bibitem{cai2009convergence}
J.-F. Cai, S.~Osher, and Z.~Shen, ``Convergence of the linearized bregman
  iteration for l1-norm minimization,'' \emph{Mathematics of Computation},
  2009.

\bibitem{Wainwright09}
M.~J. Wainwright, ``Sharp thresholds for high-dimensional and noisy sparsity
  recovery using $l_1$-constrained quadratic programming (lasso),'' \emph{IEEE
  Transactions on Information Theory}, vol.~55, no.~5, pp. 2183--2202, 2009.

\bibitem{sun2017gsplit}
X.~Sun, L.~Hu, Y.~Yao, and Y.~Wang, ``{GSplit LBI}: Taming the procedural bias
  in neuroimaging for disease prediction,'' in \emph{International Conference
  on Medical Image Computing and Computer-Assisted Intervention}.\hskip 1em
  plus 0.5em minus 0.4em\relax Springer, 2017, pp. 107--115.

\bibitem{MSplit-LBI}
B.~Zhao, X.~Sun, Y.~Fu, Y.~Wang, and Y.~Yao, ``{MSplit LBI}: Realizing feature
  selection and dense estimation simultaneously in few-shot and zero-shot
  learning,'' in \emph{International Conference on Machine Learning (ICML),
  Stockholm, Sweden}, 2018.

\bibitem{Xu19_pami}
Q.~Xu, J.~Xiong, X.~Cao, Q.~Huang, and Y.~Yao, ``From social to individuals: a
  parsimonious path of multi-level models for crowdsourced preference
  aggregation,'' \emph{IEEE Transactions on Pattern Analysis and Machine
  Intelligence}, vol.~41, no.~4, pp. 844--856, 2019.

\bibitem{Xu19_neurips}
Q.~Xu, X.~Sun, Z.~Yang, Q.~Huang, and Y.~Yao, ``{iSplit LBI}: Individualized
  partial ranking with ties via split lbi,'' in \emph{Annual Conference on
  Neural Information Processing Systems (NeurIPS)}, Vancuver, Canada, 2019.

\bibitem{Xu20_aaai}
Q.~Xu, J.~Xiong, Z.~Yang, D.~X. Cao, Q.~Huang, and Y.~Yao, ``Who likes what?
  – {SplitLBI} in exploring preferential diversity of ratings.'' in
  \emph{AAAI Conference on Artificial Intelligence (AAAI)}, New York, Feb 7-12,
  2020.

\bibitem{Xu21_robust}
Q.~Xu, J.~Xiong, X.~Cao, Q.~Huang, and Y.~Yao, ``Evaluating visual properties
  via robust hodgerank,'' \emph{International Journal of Computer Vision},
  2021, \url{arXiv:1408.3467}.

\bibitem{wahlberg2012admm}
B.~Wahlberg, S.~Boyd, M.~Annergren, and Y.~Wang, ``An {ADMM} algorithm for a
  class of total variation regularized estimation problems,'' \emph{IFAC
  Proceedings Volumes}, vol.~45, no.~16, pp. 83--88, 2012.

\bibitem{boyd2011distributed}
S.~Boyd, N.~Parikh, E.~Chu, B.~Peleato, J.~Eckstein \emph{et~al.},
  ``Distributed optimization and statistical learning via the alternating
  direction method of multipliers,'' \emph{Foundations and
  Trends{\textregistered} in Machine Learning}, vol.~3, no.~1, pp. 1--122,
  2011.

\bibitem{he20121}
B.~He and X.~Yuan, ``On the o(1/n) convergence rate of the douglas--rachford
  alternating direction method,'' \emph{SIAM Journal on Numerical Analysis},
  vol.~50, no.~2, pp. 700--709, 2012.

\bibitem{wang2014bregman}
H.~Wang and A.~Banerjee, ``Bregman alternating direction method of
  multipliers,'' in \emph{Proceedings of the 27th International Conference on
  Neural Information Processing Systems-Volume 2}, 2014, pp. 2816--2824.

\bibitem{Goldstein-ADMM-DNN2016}
G.~Taylor, R.~Burmeister, Z.~Xu, B.~Singh, A.~Patel, and T.~Goldstein,
  ``Training neural networks without gradients: A scalable {ADMM} approach,''
  in \emph{Proceedings of the 33rd International Conference on Machine Learning
  (ICML)}, 2016.

\bibitem{zeng2019_admm}
J.~Zeng, S.~Lin, Y.~Yao, and D.-X. Zhou, ``On admm in deep learning:
  Convergence and saturation-avoidance,'' \emph{Journal of Machine Learning
  Research}, vol.~22, no. 199, pp. 1--67, 2021.

\bibitem{wang2019global}
Y.~Wang, W.~Yin, and J.~Zeng, ``Global convergence of admm in nonconvex
  nonsmooth optimization,'' \emph{Journal of Scientific Computing}, vol.~78,
  no.~1, pp. 29--63, 2019.

\bibitem{franca2018admm}
G.~Franca, D.~Robinson, and R.~Vidal, ``Admm and accelerated admm as continuous
  dynamical systems,'' in \emph{International Conference on Machine
  Learning}.\hskip 1em plus 0.5em minus 0.4em\relax PMLR, 2018, pp. 1559--1567.

\bibitem{rockafellar2015convex}
R.~T. Rockafellar, \emph{Convex analysis}.\hskip 1em plus 0.5em minus
  0.4em\relax Princeton university press, 2015.

\bibitem{boyd2004convex}
S.~Boyd, S.~P. Boyd, and L.~Vandenberghe, \emph{Convex optimization}.\hskip 1em
  plus 0.5em minus 0.4em\relax Cambridge university press, 2004.

\bibitem{EngHanNeu96}
H.~W. Engl, M.~Hanke, and A.~Neubauer, \emph{Regularization of Inverse
  Problems}.\hskip 1em plus 0.5em minus 0.4em\relax Kluwer Academic Publishers,
  1996.

\bibitem{buhlmann2003boosting}
P.~B{\"u}hlmann and B.~Yu, ``Boosting with the l 2 loss: regression and
  classification,'' \emph{Journal of the American Statistical Association},
  vol.~98, no. 462, pp. 324--339, 2003.

\bibitem{jiang2004process}
W.~Jiang, ``Process consistency for adaboost,'' \emph{The Annals of
  Statistics}, vol.~32, no.~1, pp. 13--29, 2004.

\bibitem{ZhaYu02}
T.~Zhang and B.~Yu, ``Boosting with early stopping: Convergence and
  consistency,'' \emph{Annals of Statistics}, vol.~33, no.~4, pp. 1538--1579,
  2005.

\bibitem{wei2019early}
Y.~Wei, F.~Yang, and M.~J. Wainwright, ``Early stopping for kernel boosting
  algorithms: A general analysis with localized complexities,'' \emph{IEEE
  Transactions on Information Theory}, vol.~65, no.~10, pp. 6685--6703, 2019.

\bibitem{osher2016sparse}
S.~Osher, F.~Ruan, J.~Xiong, Y.~Yao, and W.~Yin, ``Sparse recovery via
  differential inclusions,'' \emph{Applied and Computational Harmonic
  Analysis}, vol.~41, no.~2, pp. 436--469, 2016.

\bibitem{huang2018unified}
C.~Huang and Y.~Yao, ``A unified dynamic approach to sparse model selection,''
  in \emph{International conference on artificial intelligence and
  statistics}.\hskip 1em plus 0.5em minus 0.4em\relax PMLR, 2018, pp.
  2047--2055.

\bibitem{Splitlbi}
C.~Huang, X.~Sun, J.~Xiong, and Y.~Yao, ``Split lbi: An iterative
  regularization path with structural sparsity. advances in neural information
  processing systems,'' \emph{Advances In Neural Information Processing
  Systems}, pp. 3369--3377, 2016.

\bibitem{lecun1990optimal}
Y.~LeCun, J.~S. Denker, and S.~A. Solla, ``Optimal brain damage,'' in
  \emph{Advances in neural information processing systems}, 1990, pp. 598--605.

\bibitem{Li_CVPR2017}
Y.~Li, D.~Wang, H.~Hu, Y.~Lin, and Y.~Zhuang, ``Zero-shot recognition using
  dual visual-semantic mapping paths,'' in \emph{CVPR}, 2017.

\bibitem{ding2019centripetal}
X.~Ding, G.~Ding, Y.~Guo, and J.~Han, ``Centripetal sgd for pruning very deep
  convolutional networks with complicated structure,'' in \emph{Proceedings of
  the IEEE/CVF Conference on Computer Vision and Pattern Recognition}, 2019,
  pp. 4943--4953.

\bibitem{chin2020towards}
T.-W. Chin, R.~Ding, C.~Zhang, and D.~Marculescu, ``Towards efficient model
  compression via learned global ranking,'' in \emph{Proceedings of the
  IEEE/CVF Conference on Computer Vision and Pattern Recognition}, 2020, pp.
  1518--1528.

\bibitem{Yuan2007On}
Y.~Yuan, L.~Rosasco, and A.~Caponnetto, ``On early stopping in gradient descent
  learning,'' \emph{Constructive Approximation}, 2007.

\bibitem{nitanda2018functional}
A.~Nitanda and T.~Suzuki, ``Functional gradient boosting based on residual
  network perception,'' in \emph{International Conference on Machine
  Learning}.\hskip 1em plus 0.5em minus 0.4em\relax PMLR, 2018, pp. 3819--3828.

\bibitem{hoerl1970ridge}
A.~E. Hoerl and R.~W. Kennard, ``Ridge regression: Biased estimation for
  nonorthogonal problems,'' \emph{Technometrics}, vol.~12, no.~1, pp. 55--67,
  1970.

\bibitem{hastie01statisticallearning}
T.~Hastie, R.~Tibshirani, and J.~Friedman, \emph{The Elements of Statistical
  Learning: Data Mining, Inference, and Prediction}.\hskip 1em plus 0.5em minus
  0.4em\relax Springer New York Inc., 2009.

\bibitem{Yu2020Playing}
\BIBentryALTinterwordspacing
H.~Yu, S.~Edunov, Y.~Tian, and A.~S. Morcos, ``Playing the lottery with rewards
  and multiple languages: lottery tickets in rl and nlp,'' in
  \emph{International Conference on Learning Representations}, 2020. [Online].
  Available: \url{https://openreview.net/forum?id=S1xnXRVFwH}
\BIBentrySTDinterwordspacing

\bibitem{chen2020lottery}
T.~Chen, J.~Frankle, S.~Chang, S.~Liu, Y.~Zhang, Z.~Wang, and M.~Carbin, ``The
  lottery ticket hypothesis for pre-trained {BERT} networks,'' 2020.

\bibitem{prasanna2020bert}
S.~Prasanna, A.~Rogers, and A.~Rumshisky, ``When {BERT} plays the lottery, all
  tickets are winning,'' 2020.

\bibitem{ramanujan2020s}
V.~Ramanujan, M.~Wortsman, A.~Kembhavi, A.~Farhadi, and M.~Rastegari, ``What's
  hidden in a randomly weighted neural network?'' in \emph{Proceedings of the
  IEEE/CVF Conference on Computer Vision and Pattern Recognition}, 2020, pp.
  11\,893--11\,902.

\bibitem{malach2020proving}
E.~Malach, G.~Yehudai, S.~Shalev-Schwartz, and O.~Shamir, ``Proving the lottery
  ticket hypothesis: Pruning is all you need,'' in \emph{International
  Conference on Machine Learning}.\hskip 1em plus 0.5em minus 0.4em\relax PMLR,
  2020, pp. 6682--6691.

\bibitem{frankle2020linear}
J.~Frankle, G.~K. Dziugaite, D.~Roy, and M.~Carbin, ``Linear mode connectivity
  and the lottery ticket hypothesis,'' in \emph{International Conference on
  Machine Learning}.\hskip 1em plus 0.5em minus 0.4em\relax PMLR, 2020, pp.
  3259--3269.

\bibitem{orseau2020logarithmic}
L.~Orseau, M.~Hutter, and O.~Rivasplata, ``Logarithmic pruning is all you
  need,'' \emph{Advances in Neural Information Processing Systems}, vol.~33,
  2020.

\bibitem{pensia2020optimal}
A.~Pensia, S.~Rajput, A.~Nagle, H.~Vishwakarma, and D.~Papailiopoulos,
  ``Optimal lottery tickets via subset sum: Logarithmic over-parameterization
  is sufficient,'' \emph{Advances in Neural Information Processing Systems},
  vol.~33, 2020.

\bibitem{zoph2016neural}
B.~Zoph and Q.~V. Le, ``Neural architecture search with reinforcement
  learning,'' in \emph{International Conference on Learning Representations},
  2017.

\bibitem{zoph2018learning}
B.~Zoph, V.~Vasudevan, J.~Shlens, and Q.~V. Le, ``Learning transferable
  architectures for scalable image recognition,'' in \emph{Proceedings of the
  IEEE conference on computer vision and pattern recognition}, 2018, pp.
  8697--8710.

\bibitem{zhong2018practical}
Z.~Zhong, J.~Yan, W.~Wu, J.~Shao, and C.-L. Liu, ``Practical block-wise neural
  network architecture generation,'' in \emph{Proceedings of the IEEE
  conference on computer vision and pattern recognition}, 2018, pp. 2423--2432.

\bibitem{liu2018darts}
H.~Liu, K.~Simonyan, and Y.~Yang, ``Darts: Differentiable architecture
  search,'' in \emph{International Conference on Learning Representations},
  2019.

\bibitem{cai2018proxylessnas}
H.~Cai, L.~Zhu, and S.~Han, ``Proxylessnas: Direct neural architecture search
  on target task and hardware,'' in \emph{International Conference on Learning
  Representations}, 2018.

\bibitem{real2017large}
E.~Real, S.~Moore, A.~Selle, S.~Saxena, Y.~L. Suematsu, J.~Tan, Q.~V. Le, and
  A.~Kurakin, ``Large-scale evolution of image classifiers,'' in
  \emph{International Conference on Machine Learning}.\hskip 1em plus 0.5em
  minus 0.4em\relax PMLR, 2017, pp. 2902--2911.

\bibitem{pham2018efficient}
H.~Pham, M.~Guan, B.~Zoph, Q.~Le, and J.~Dean, ``Efficient neural architecture
  search via parameters sharing,'' in \emph{International Conference on Machine
  Learning}.\hskip 1em plus 0.5em minus 0.4em\relax PMLR, 2018, pp. 4095--4104.

\bibitem{bender2018understanding}
G.~Bender, P.-J. Kindermans, B.~Zoph, V.~Vasudevan, and Q.~Le, ``Understanding
  and simplifying one-shot architecture search,'' in \emph{International
  Conference on Machine Learning}.\hskip 1em plus 0.5em minus 0.4em\relax PMLR,
  2018, pp. 550--559.

\bibitem{guo2020single}
Z.~Guo, X.~Zhang, H.~Mu, W.~Heng, Z.~Liu, Y.~Wei, and J.~Sun, ``Single path
  one-shot neural architecture search with uniform sampling,'' in
  \emph{European Conference on Computer Vision}.\hskip 1em plus 0.5em minus
  0.4em\relax Springer, 2020, pp. 544--560.

\bibitem{erhan2009visualizing}
D.~Erhan, Y.~Bengio, A.~Courville, and P.~Vincent, ``Visualizing higher-layer
  features of a deep network,'' \emph{University of Montreal, Technical
  Report}, vol. 1341, 2009.

\bibitem{Zeng2019}
\BIBentryALTinterwordspacing
J.~Zeng, T.~T.-K. Lau, S.-B. Lin, and Y.~Yao, ``Global convergence of block
  coordinate descent in deep learning,'' in \emph{Proceedings of the 36th
  International Conference on Machine Learning, Long Beach, California}, 2019.
  [Online]. Available: \url{https://arxiv.org/abs/1803.00225}
\BIBentrySTDinterwordspacing

\bibitem{Lojasiewicz-KL1963}
S.~{\L}ojasiewicz, ``Une propri\'{e}t\'{e} topologique des sous-ensembles
  analytiques r\'{e}els. {I}n: {L}es \'{E}quations aux d\'{e}riv\'{e}es
  partielles,'' \emph{\'{E}ditions du centre National de la Recherche
  Scientifique, Paris}, pp. 87--89, 1963.

\bibitem{xin2018rvsm}
\BIBentryALTinterwordspacing
F.~Xue and J.~Xin, ``Convergence of a relaxed variable splitting method for
  learning sparse neural networks via $\ell_1$, $\ell_0$, and
  transformed-$\ell_1$ penalties,'' \emph{arXiv:1812.05719v2}, 2018. [Online].
  Available: \url{http://arxiv.org/abs/1812.05719}
\BIBentrySTDinterwordspacing

\bibitem{Attouch2013}
H.~Attouch, J.~Bolte, and B.~F. Svaiter, ``Convergence of descent methods for
  semi-algebraic and tame problems: proximal algorithms, forward-backward
  splitting, and regularized {G}auss-{S}eidel methods,'' \emph{Mathematical
  Programming}, vol. 137, pp. 91--129, 2013.

\bibitem{Rockafellar1998}
R.~T. Rockafellar and R.~J.-B. Wets, \emph{Variational analysis}.\hskip 1em
  plus 0.5em minus 0.4em\relax Grundlehren Math. Wiss. 317, Springer-Verlag,
  New York, 1998.

\bibitem{zhu2020rethinking}
W.~Zhu, Y.~Haung, and Y.~Yao, ``Rethinking breiman's dilemma in neural
  networks: Phase transitions of margin dynamics,'' \emph{Frontiers in Applied
  Mathematics and Statistics}, 2020.

\bibitem{liao2018surprising}
Q.~Liao, B.~Miranda, A.~Banburski, J.~Hidary, and T.~Poggio, ``A surprising
  linear relationship predicts test performance in deep networks,'' \emph{arXiv
  preprint arXiv:1807.09659}, 2018.

\bibitem{neyshabur2015path}
B.~Neyshabur, R.~Salakhutdinov, and N.~Srebro, ``Path-sgd: Path-normalized
  optimization in deep neural networks,'' \emph{arXiv preprint
  arXiv:1506.02617}, 2015.

\bibitem{he2015delving}
K.~He, X.~Zhang, S.~Ren, and J.~Sun, ``Delving deep into rectifiers: Surpassing
  human-level performance on imagenet classification,'' in \emph{IEEE
  International Conference on Computer Vision}, 2015.

\bibitem{he2016deep}
K.~He, X.~Zhang, S.~Ren, and J.~Sun, ``Deep residual learning for image
  recognition,'' in \emph{Proceedings of the IEEE conference on computer vision
  and pattern recognition}, 2016, pp. 770--778.

\bibitem{TubingenICLR19}
R.~Geirhos, P.~Rubisch, C.~Michaelis, M.~Bethge, F.~A. Wichmann, and
  W.~Brendel, ``Imagenet-trained cnns are biased towards texture; increasing
  shape bias improves accuracy and robustness,'' \emph{International Conference
  on Learning Representations}, 2019, arXiv preprint arXiv:1811.12231.

\bibitem{han2015deep}
S.~Han, H.~Mao, and W.~J. Dally, ``Deep compression: Compressing deep neural
  networks with pruning, trained quantization and huffman coding,'' in
  \emph{ICLR}, 2015.

\bibitem{zhu2017prune}
M.~Zhu and S.~Gupta, ``To prune, or not to prune: exploring the efficacy of
  pruning for model compression,'' 2017.

\bibitem{lin2020hrank}
M.~Lin, R.~Ji, Y.~Wang, Y.~Zhang, B.~Zhang, Y.~Tian, and L.~Shao, ``Hrank:
  Filter pruning using high-rank feature map,'' in \emph{Proceedings of the
  IEEE/CVF Conference on Computer Vision and Pattern Recognition}, 2020, pp.
  1529--1538.

\bibitem{simonyan2014very}
K.~Simonyan and A.~Zisserman, ``Very deep convolutional networks for
  large-scale image recognition,'' in \emph{International Conference on
  Learning Representations}, 2015.

\bibitem{he2018amc}
Y.~He, J.~Lin, Z.~Liu, H.~Wang, L.-J. Li, and S.~Han, ``Amc: Automl for model
  compression and acceleration on mobile devices,'' in \emph{Proceedings of the
  European Conference on Computer Vision (ECCV)}, 2018, pp. 784--800.

\bibitem{li2016pruning}
H.~Li, A.~Kadav, I.~Durdanovic, H.~Samet, and H.~P. Graf, ``Pruning filters for
  efficient convnets,'' in \emph{International Conference on Learning
  Representations}, 2017.

\bibitem{zhao2019variational}
C.~Zhao, B.~Ni, J.~Zhang, Q.~Zhao, W.~Zhang, and Q.~Tian, ``Variational
  convolutional neural network pruning,'' in \emph{Proceedings of the IEEE/CVF
  Conference on Computer Vision and Pattern Recognition}, 2019, pp. 2780--2789.

\bibitem{li2020group}
Y.~Li, S.~Gu, C.~Mayer, L.~V. Gool, and R.~Timofte, ``Group sparsity: The hinge
  between filter pruning and decomposition for network compression,'' in
  \emph{Proceedings of the IEEE/CVF Conference on Computer Vision and Pattern
  Recognition}, 2020, pp. 8018--8027.

\bibitem{wen2019autogrow}
W.~Wen, F.~Yan, Y.~Chen, and H.~Li, ``Autogrow: Automatic layer growing in deep
  convolutional networks,'' in \emph{Proceedings of the 26th ACM SIGKDD
  International Conference on Knowledge Discovery \& Data Mining}, 2020, pp.
  833--841.

\bibitem{elsken2017simple}
T.~Elsken, J.-H. Metzen, and F.~Hutter, ``Simple and efficient architecture
  search for convolutional neural networks,'' \emph{arXiv preprint
  arXiv:1711.04528}, 2017.

\bibitem{wu2019splitting}
L.~Wu, D.~Wang, and Q.~Liu, ``Splitting steepest descent for growing neural
  architectures,'' \emph{Advances in Neural Information Processing Systems},
  vol.~32, pp. 10\,656--10\,666, 2019.

\bibitem{wang2019energy}
D.~Wang, M.~Li, L.~Wu, V.~Chandra, and Q.~Liu, ``Energy-aware neural
  architecture optimization with fast splitting steepest descent,'' \emph{arXiv
  preprint arXiv:1910.03103}, 2019.

\bibitem{wu2021firefly}
L.~Wu, B.~Liu, P.~Stone, and Q.~Liu, ``Firefly neural architecture descent: a
  general approach for growing neural networks,'' \emph{arXiv preprint
  arXiv:2102.08574}, 2021.

\bibitem{Mordukhovich-2006}
B.~S. Mordukhovich, \emph{Variational analysis and generalized differentiation
  I: Basic Theory}.\hskip 1em plus 0.5em minus 0.4em\relax Springer, 2006.

\bibitem{Lojasiewicz-KL1993}
S.~{\L}ojasiewicz, ``Sur la geometrie semi-et sous-analytique,'' \emph{Annales
  de l'institut Fourier}, vol.~43, pp. 1575--1595, 1993.

\bibitem{Kurdyka-KL1998}
K.~Kurdyka, ``On gradients of functions definable in o-minimal structures,''
  \emph{Annales de l'institut Fourier}, vol.~48, pp. 769--783, 1998.

\bibitem{Bolte-KL2007a}
J.~Bolte, A.~Daniilidis, and A.~Lewis, ``{The \L{}ojasiewicz inequality for
  nonsmooth subanalytic functions with applications to subgradient dynamical
  systems},'' \emph{SIAM Journal on Optimization}, vol.~17, pp. 1205--1223,
  2007.

\bibitem{Bolte-KL2007}
J.~Bolte, A.~Daniilidis, A.~Lewis, and M.~Shiota, ``Clark subgradients of
  stratifiable functions,'' \emph{SIAM Journal on Optimization}, vol.~18, pp.
  556--572, 2007.

\bibitem{Krantz2002-real-analytic}
S.~Krantz and H.~R. Parks, \emph{A primer of real analytic functions},
  2nd~ed.\hskip 1em plus 0.5em minus 0.4em\relax Birkh\"{a}user, 2002.

\bibitem{Bochnak-semialgebraic1998}
J.~Bochnak, M.~Coste, and M.-F. Roy, \emph{Real algebraic geometry}.\hskip 1em
  plus 0.5em minus 0.4em\relax Berlin: Ergeb. Math. Grenzgeb. Springer-Verlag,
  1998, vol.~3.

\bibitem{Lojasiewicz1965-semianalytic}
S.~{\L}ojasiewicz, \emph{Ensembles semi-analytiques}.\hskip 1em plus 0.5em
  minus 0.4em\relax Institut des Hautes Etudes Scientifiques, 1965.

\bibitem{Shiota1997-subanalytic}
M.~Shiota, \emph{Geometry of subanalytic and semialgebraic sets}, ser. Progress
  in Mathematics.\hskip 1em plus 0.5em minus 0.4em\relax Birkh\"{a}user,
  Boston, 1997, vol. 150.

\bibitem{Coste1999-o-minimal}
M.~Coste, \emph{An introduction to o-minimal geometry}.\hskip 1em plus 0.5em
  minus 0.4em\relax RAAG Notes, 81 pages, Institut de Recherche Mathematiques
  de Rennes, 1999.

\bibitem{vandenDries1986-o-minimal}
L.~van~den Dries, ``A generalization of the tarski-seidenberg theorem and some
  nondefinability results,'' \emph{Bull. Amer. Math. Soc. (N.S.)}, vol.~15, pp.
  189--193, 1986.

\bibitem{vandenDries1996-GC}
L.~van~den Dries and C.~Miller, ``Geometric categories and o-minimal
  structures,'' \emph{Duke Mathematical Journal}, vol.~84, pp. 497--540, 1996.

\bibitem{Benning2017}
M.~Benning, M.~M. Betcke, M.~J. Ehrhardt, and C.-B. Sch$\ddot{o}$nlieB,
  ``Choose your path wisely: gradient descent in a bregman distance
  framework,'' \emph{arXiv preprint arXiv:1712.04045}, 2017.

\bibitem{abbasi2017structural}
R.~Abbasi-Asl and B.~Yu, ``Structural compression of convolutional neural
  networks based on greedy filter pruning,'' \emph{arXiv preprint
  arXiv:1705.07356}, 2017.

\bibitem{springenberg2014striving}
\BIBentryALTinterwordspacing
J.~Springenberg, A.~Dosovitskiy, T.~Brox, and M.~Riedmiller, ``Striving for
  simplicity: The all convolutional net,'' in \emph{ICLR (workshop track)},
  2015. [Online]. Available:
  \url{http://lmb.informatik.uni-freiburg.de/Publications/2015/DB15a}
\BIBentrySTDinterwordspacing

\bibitem{cifar_krizhevsky2009learning}
A.~Krizhevsky and G.~Hinton, ``Learning multiple layers of features from tiny
  images,'' \emph{CSD}, 2009.

\end{thebibliography}

\begin{IEEEbiography}[{\includegraphics[width=1in,height=1.25in,clip,keepaspectratio]{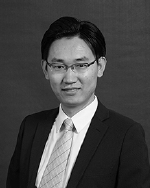}}]{Yanwei Fu} received hist PhD degree from the Queen Mary University of London, in 2014. He worked as  post-doctoral research at Disney Research, Pittsburgh, PA, from 2015 to 2016. He is currently a tenure-track professor with Fudan University.  
He was appointed as the Professor of Special Appointment (Eastern Scholar) at Shanghai Institutions of Higher Learning.
He published more than 100 journal/conference papers including IEEE TPAMI, TMM, ECCV, and CVPR. His research interests are one-shot/meta learning,  and  learning based 3D reconstruction.
\end{IEEEbiography}

\begin{IEEEbiography}[{\includegraphics[width=1in,height=1.25in,clip,keepaspectratio]{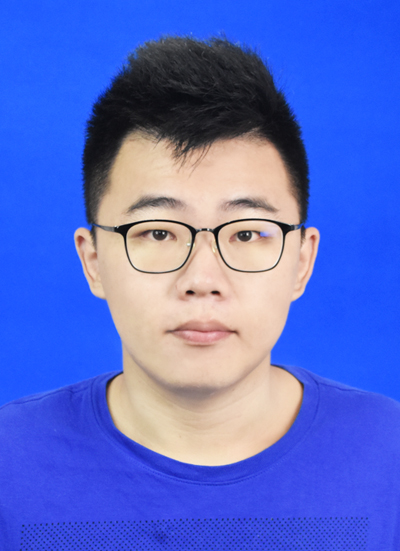}}]{Chen Liu}
is a PhD student at the Department of Mathematics at the Hong Kong University of Science and Technology under the supervision of Prof. Yuan Yao. 
He received the Bachelor degree of Engineering from the School of Mechanical Engineering, Shanghai Jiaotong University, in 2018 and the Master degree of Statistics from the School of Data Science, Fudan University, in 2021.
His current research interests include machine learning and its application to computer vision.
\end{IEEEbiography}

\begin{IEEEbiography}[{\includegraphics[width=1in,height=1.25in,clip,keepaspectratio]{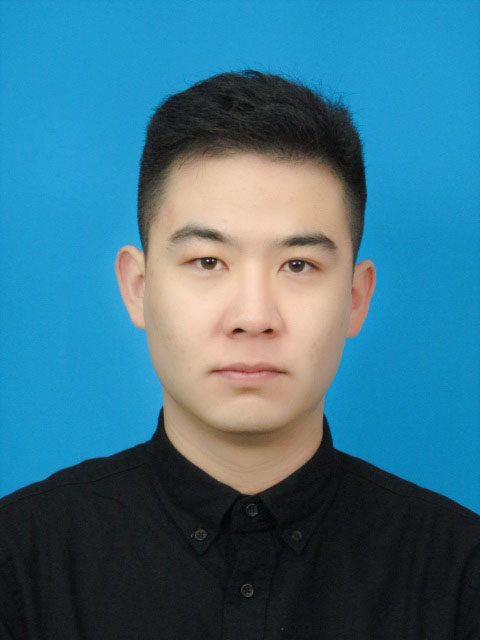}}]{Donghao Li} Donghao Li received BSc degree in statistic from Fudan University, Shanghai, China, in 2019. He is currently working towards a PhD degree at department of Mathematics at the Hong Kong University of Science and Technology, Hong Kong SAR. His research interests include deep learning model compression and privacy preserving machine learning. 
\end{IEEEbiography}

\begin{IEEEbiography}[{\includegraphics[width=1in,height=1.25in,clip,keepaspectratio]{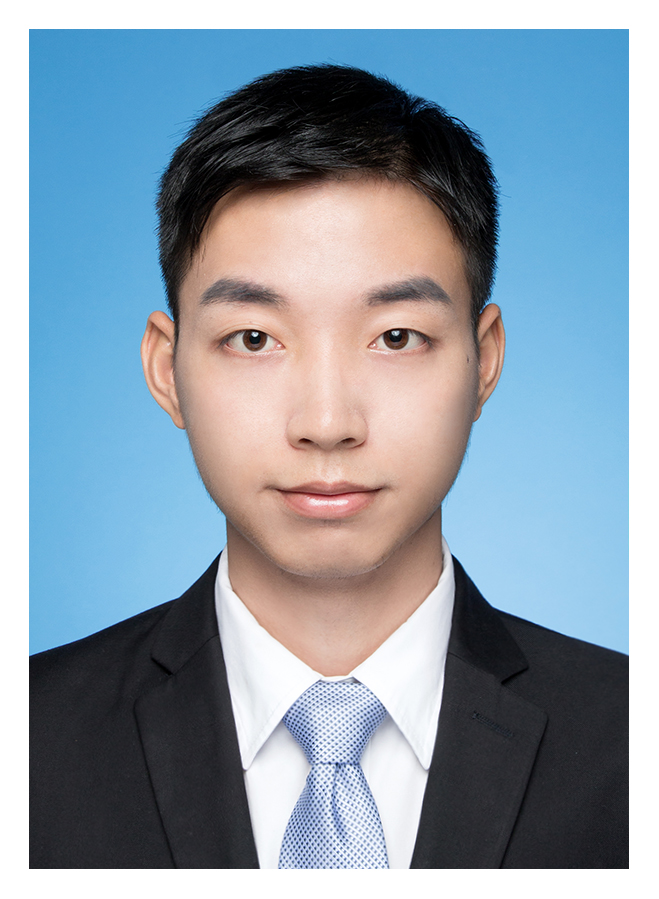}}]{Zuyuan Zhong} received the BS degree in statistics from Fudan University, Shanghai, China, in 2019.  He is currently working toward the master's degree with the School of Data Science, Fudan University,  Shanghai, China. His current research interests include network pruning, adversarial example, etc.
\end{IEEEbiography}

\begin{IEEEbiography}[{\includegraphics[width=1in,height=1.25in,clip,keepaspectratio]{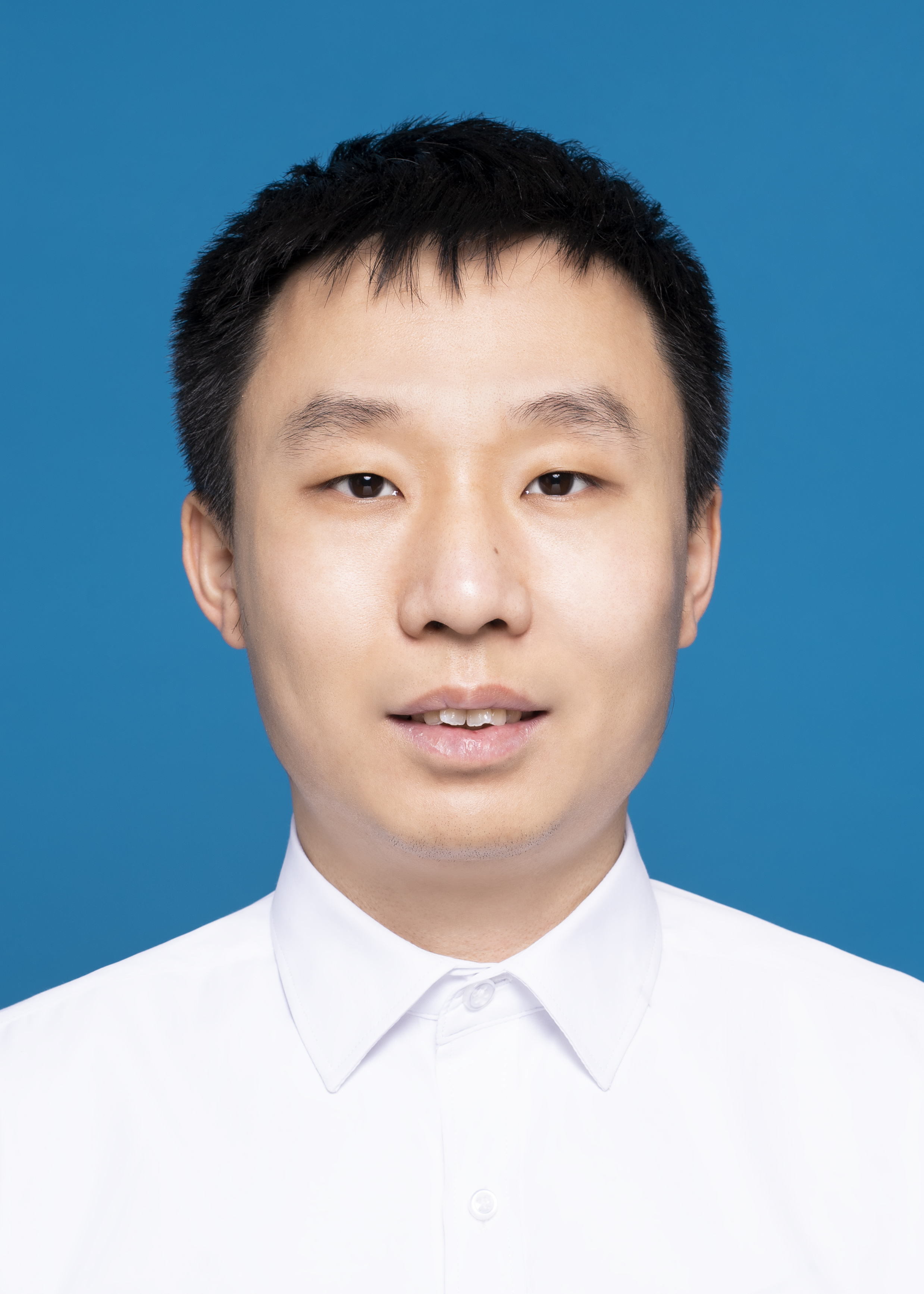}}]{Xinwei Sun} is a tenure-track associate professor in Fudan University. He received his Ph.D in school of mathematical sciences, Peking University in 2018. His research interests mainly focus on statistical machine learning, causal learning, with their applications on medical imaging and few-shot learning. 
\end{IEEEbiography}

\begin{IEEEbiography}[{\includegraphics[width=1in,height=1.25in,clip,keepaspectratio]{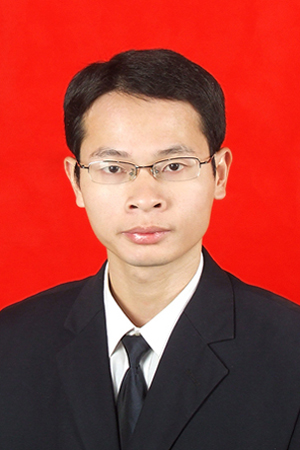}}]{Jinshan Zeng} received the PhD degree in mathematics from Xi’an Jiaotong University, Xi’an, China, in 2015.
He is currently a distinguished professor with the School of Computer and Information Engineering, Jiangxi Normal University, Nanchang, China, and serves as the director of the department of data science and big data. He has published over 40 papers in high-impact journals and conferences such as IEEE TPAMI, JMLR, IEEE TSP, IEEE TGRS, ICML etc. He has had two papers co-authored with collaborators that received the International Consortium of Chinese Mathematicians (ICCM) Best Paper Award (2018, 2020). His research interests include nonconvex optimization, machine learning and remote sensing.
\end{IEEEbiography}

\begin{IEEEbiography}[{\includegraphics[width=1in,height=1.25in,clip,keepaspectratio]{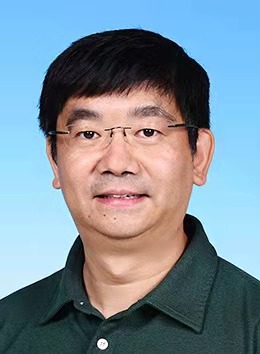}}]{Yuan Yao}
received the B.S.E and M.S.E in control engineering both from Harbin Institute of Technology, China, in 1996 and 1998, respectively, M.Phil in mathematics from City University of Hong Kong in 2002, and Ph.D. in mathematics from the University of California, Berkeley, in 2006.
Since then he has been with Stanford University and in 2009, he joined the Department of Probability and Statistics in School of Mathematical Sciences, Peking University, Beijing, China. 
He is currently a Professor of Mathematics, Chemical \& Biological Engineering, and by courtesy, Computer Science \& Engineering, Hong Kong University of Science and Technology, Clear Water Bay, Kowloon, Hong Kong SAR, China. 
His current research interests include topological and geometric methods for high dimensional data analysis and statistical machine learning, with applications in computational biology, computer vision, and information retrieval. 
\end{IEEEbiography}

\clearpage

\onecolumn
\appendices
\section*{Appendix to \emph{DessiLBI for deep learning: structural sparsity via differential inclusion paths}}

\section{Proof of Theorem \ref{Thm:conv-SLBI}}

\label{sc:proof}

First of all, we reformulate Eq.~(\ref{Eq:SLBI-reformulation}) into
an equivalent form. Without loss of generality, consider $\Omega=\Omega_1$ in the sequel. 

Denote $R(P):=\Omega(\Gamma)$, then Eq. (\ref{Eq:SLBI-reformulation})
can be rewritten as, DessiLBI
\begin{subequations} 
\begin{align}
 & P_{k+1}=\mathrm{Prox}_{\kappa R}(P_{k}+\kappa(p_{k}-\alpha\nabla\bar{\calL}(P_{k}))), \label{Eq:SLBI-reform2-iter1}\\
 & p_{k+1}=p_{k}-\kappa^{-1}(P_{k+1}-P_{k}+\kappa\alpha\nabla\bar{\mathcal{L}}(P_{k})),\label{Eq:SLBI-reform2-iter2}
\end{align}
\end{subequations} where $p_{k}=[0,g_{k}]^{T}\in\partial R(P_{k})$
and $g_{k}\in\partial\Omega(\Gamma_{k})$. Thus DessiLBI is equivalent
to the following iterations, 
\begin{subequations} 
\begin{align}
 & W_{k+1}=W_{k}-\kappa\alpha\nabla_{W}\bar{\mathcal{L}}(W_{k},\Gamma_{k}),\label{Eq:SLBI-reform-iter1}\\
 & \Gamma_{k+1}=\mathrm{Prox}_{\kappa\Omega}(\Gamma_{k}+\kappa(g_{k}-\alpha\nabla_{\Gamma}\bar{\mathcal{L}}(W_{k},\Gamma_{k}))),\label{Eq:SLBI-reform-iter2}\\
 & g_{k+1}=g_{k}-\kappa^{-1}(\Gamma_{k+1}-\Gamma_{k}+\kappa\alpha\cdot\nabla_{\Gamma}\bar{\mathcal{L}}(W_{k},\Gamma_{k})).\label{Eq:SLBI-reform-iter3}
\end{align}
\end{subequations} 

Exploiting the equivalent reformulation (\ref{Eq:SLBI-reform-iter1}-\ref{Eq:SLBI-reform-iter3}),
one can establish the global convergence of $(W_{k},\Gamma_{k},g_{k})$ based on the Kurdyka-{\L }ojasiewicz framework. In this section, the following extended version
of Theorem \ref{Thm:conv-SLBI} is actually proved. 
 
\begin{thm}{[}Global Convergence of DessiLBI{]} \label{Thm:conv-SLBI+} Suppose that Assumption
\ref{Assumption} holds. Let $\{(W_{k},\Gamma_{k},g_{k})\}$ be the sequence
generated by DessiLBI (Eq. (\ref{Eq:SLBI-reform-iter1}-\ref{Eq:SLBI-reform-iter3}))
with a finite initialization. If 
\begin{align*}
0<\alpha_{k}=\alpha<\frac{2}{\kappa(Lip+\nu^{-1})},
\end{align*}
then $\{(W_{k},\Gamma_{k},g_{k})\}$ converges to a critical point of
$F$. Moreover, $\{(W_{k},\Gamma_{k})\}$ converges to a stationary
point of $\bar{\mathcal{L}}$ defined in Eq. \ref{eq:sparse_loss},
and $\{W^{k}\}$ converges to a stationary point of $\eL(W)$. 
\end{thm}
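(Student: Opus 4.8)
The plan is to establish global convergence through the Kurdyka--{\L}ojasiewicz (KL) framework, following the three-ingredient recipe standard for nonconvex descent methods (sufficient decrease, a relative-error subgradient bound, and the KL inequality), but carried out on the \emph{Lyapunov function} $F$ of Eq.~(\ref{Eq:Lyapunov-fun}) rather than on $\bar{\calL}$ itself. The reason for introducing $F$ is exactly the mirror-descent update of $\Gamma$: the pair $(\Gamma_k,g_k)$ does not decrease $\bar{\calL}$ monotonically, but the Bregman term $B_\Omega^{g}(\Gamma,\tilde\Gamma)$ absorbs the dual dynamics so that $F$ does. I would work throughout with the equivalent reformulation (\ref{Eq:SLBI-reform-iter1}--\ref{Eq:SLBI-reform-iter3}), in which the $W$-block is a plain gradient step, the $\Gamma$-block is a proximal (mirror) step, and the relation $g_{k}\in\partial\Omega(\Gamma_{k})$ is maintained as an invariant via the optimality condition of the proximal map.

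First I would prove the sufficient-decrease estimate
\[
F(W_{k+1},\Gamma_{k+1},g_{k+1}) \le F(W_k,\Gamma_k,g_k) - C_1\big(\|W_{k+1}-W_k\|^2 + \|\Gamma_{k+1}-\Gamma_k\|^2\big)
\]
for some $C_1>0$. For the $W$-block this is the descent lemma applied to $\bar{\calL}(\cdot,\Gamma_k)$, whose gradient is Lipschitz with constant $Lip+\nu^{-1}$ by Assumption~\ref{Assumption}(a) and the quadratic coupling; the step size $\kappa\alpha$ together with the stated bound $\alpha<2/(\kappa(Lip+\nu^{-1}))$ makes the resulting coefficient strictly positive. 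For the $(\Gamma,g)$-block the argument is more delicate, since the update is not a gradient step; here I would invoke the three-point (cosine) identity for the Bregman divergence of $\Omega$ together with the dual update (\ref{Eq:SLBI-reform-iter3}) to convert the proximal step into a controlled decrease of the combined quantity $\alpha\bar{\calL}+B_\Omega$. This coupling of the primal proximal step with the dual variable is the \textbf{main obstacle}, and the very construction of $F$ is what makes it go through; this is the point at which the analysis genuinely departs from the block-coordinate-descent treatment of \cite{Zeng2019}.

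Next I would establish the relative-error bound: there is $d_{k+1}\in\partial F(W_{k+1},\Gamma_{k+1},g_{k+1})$ with $\|d_{k+1}\|\le C_2\big(\|W_{k+1}-W_k\|+\|\Gamma_{k+1}-\Gamma_k\|\big)$. Computing the three blocks of $\partial F$ and using the optimality condition of the proximal map together with the Lipschitz continuity of $\nabla\bar{\calL}$, the $W$- and $\Gamma$-components are immediate, while the $g$-component equals $\tilde\Gamma_{k+1}-\Gamma_{k+1}$ with $\tilde\Gamma_{k+1}\in\partial\Omega^{*}(g_{k+1})$, again controlled by the consecutive differences. In parallel I would verify boundedness of the whole sequence: Assumption~\ref{Assumption}(b)--(c) bound $\{(W_k,\Gamma_k)\}$ via the monotone decrease and the nonnegativity of $\bar{\calL}$ and of the Bregman term, so the iterates stay in a compact set, on which Assumption~\ref{Assumption}(d) makes $\{g_k\}$ bounded as well.

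Finally, with sufficient decrease, the subgradient bound, boundedness, and the KL property of $F$ on bounded sets (Assumption~\ref{Assumption}(e)), the standard Attouch--Bolte--Svaiter finite-length argument yields $\sum_k(\|W_{k+1}-W_k\|+\|\Gamma_{k+1}-\Gamma_k\|)<\infty$; hence $\{(W_k,\Gamma_k,g_k)\}$ is Cauchy and converges, and closedness of $\partial F$ identifies the limit as a critical point of $F$. To translate this back, note that $W_{k+1}-W_k\to0$ forces $\nabla_W\bar{\calL}(W_*,\Gamma_*)=0$, while $\Gamma_{k+1}-\Gamma_k\to0$ and $g_{k+1}-g_k\to0$ in (\ref{Eq:SLBI-reform-iter3}) force $\nabla_\Gamma\bar{\calL}(W_*,\Gamma_*)=0$; the latter gives $W_*=\Gamma_*$ through the coupling term, whence $\nabla\eL(W_*)=0$. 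Thus $(W_*,\Gamma_*)$ is a stationary point of $\bar{\calL}$ and $W_*$ a stationary point of $\eL$, as claimed. Consistently with the paper's emphasis, sparsity is then a transient early-stopping phenomenon along the path rather than a property of this (dense) limit.
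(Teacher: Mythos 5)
Your proposal is correct and takes essentially the same route as the paper's proof: the Kurdyka--{\L}ojasiewicz framework of \cite{Attouch2013} applied to the Lyapunov function $F$ of Eq.~(\ref{Eq:Lyapunov-fun}), with sufficient descent obtained by combining the proximal optimality condition, the descent lemma for $\bar{\calL}$ (Lipschitz constant $Lip+\nu^{-1}$), and the Bregman three-point identity that absorbs the dual update, followed by the relative-error bound on $\partial F$, boundedness from Assumption~\ref{Assumption}(b)--(d), and the finite-length argument. The only cosmetic difference is the final translation step, which you derive from the vanishing iterate differences in Eqs.~(\ref{Eq:SLBI-reform-iter1})--(\ref{Eq:SLBI-reform-iter3}) rather than from the criticality conditions of $F$ as the paper does in Eq.~(\ref{Eq:critpoint-F}); the two arguments are equivalent since $\nabla_\Gamma\bar{\calL}(W_*,\Gamma_*)=\nu^{-1}(\Gamma_*-W_*)=0$ yields $W_*=\Gamma_*$ and hence $\nabla\eL(W_*)=0$ either way.
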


\subsection{Kurdyka-{\L }ojasiewicz Property \label{subsec:Kurdyka-property}}


To introduce the definition of the Kurdyka-{\L }ojasiewicz (KL)
property, we need some notions and notations from variational analysis,
which can be found in \cite{Rockafellar1998}.

The notion of subdifferential plays a central role in the following
definitions. For each ${\bf x}\in\mathrm{dom}(h):=\{{\bf x}\in\mathbb{R}^{p}:h({\bf x})<+\infty\}$,
the \textit{Fr\'{e}chet subdifferential} of $h$ at ${\bf x}$, written
$\widehat{\partial}h({\bf x)}$, is the set of vectors ${\bf v}\in\mathbb{R}^{p}$
which satisfy
\[
\lim\inf_{{\bf y}\neq{\bf x},{\bf y}\rightarrow{\bf x}}\ \frac{h({\bf y})-h({\bf x})-\langle{\bf v},{\bf y}-{\bf x}\rangle}{\|{\bf x}-{\bf y}\|}\geq0.
\]
When ${\bf x}\notin\mathrm{dom}(h),$ we set $\widehat{\partial}h({\bf x})=\varnothing.$
The \emph{limiting-subdifferential} (or simply \emph{subdifferential})
of $h$ introduced in \cite{Mordukhovich-2006}, written $\partial h({\bf x})$
at ${\bf x}\in\mathrm{dom}(h)$, is defined by
\begin{align}
\partial h({\bf x}):=\{{\bf v}\in\mathbb{R}^{p}:\exists{\bf x}^{k}\to{\bf x},\;h({\bf x}^{k})\to h({\bf x}),\;{\bf v}^{k}\in\widehat{\partial}h({\bf x}^{k})\to{\bf v}\}.\label{Def:limiting-subdifferential}
\end{align}
A necessary (but not sufficient) condition for ${\bf x}\in\mathbb{R}^{p}$
to be a minimizer of $h$ is $\mathbf{0}\in\partial h({\bf x})$.
A point that satisfies this inclusion is called \textit{limiting-critical}
or simply \textit{critical}. The distance between a point ${\bf x}$
to a subset ${\cal S}$ of $\mathbb{R}^{p}$, written $\mathrm{dist}({\bf x},{\cal S})$,
is defined by $\mathrm{dist}({\bf x},{\cal S})=\inf\{\|{\bf x}-{\bf s}\|:{\bf s}\in{\cal S}\}$,
where $\|\cdot\|$ represents the Euclidean norm.

Let $h:\mathbb{R}^{p}\to\mathbb{R}\cup\{+\infty\}$ be an extended-real-valued
function (respectively, $h:\mathbb{R}^{p}\rightrightarrows\mathbb{R}^{q}$
be a point-to-set mapping), its \textit{graph} is defined by
\begin{align*}
 & \mathrm{Graph}(h):=\{({\bf x},y)\in\mathbb{R}^{p}\times\mathbb{R}:y=h({\bf x})\},\\
(\text{resp.}\; & \mathrm{Graph}(h):=\{({\bf x},{\bf y})\in\mathbb{R}^{p}\times\mathbb{R}^{q}:{\bf y}\in h({\bf x})\}),
\end{align*}
and its domain by $\mathrm{dom}(h):=\{{\bf x}\in\mathbb{R}^{p}:h({\bf x})<+\infty\}$
(resp. $\mathrm{dom}(h):=\{{\bf x}\in\mathbb{R}^{p}:h({\bf x})\neq\varnothing\}$).
When $h$ is a proper function, i.e., when $\mathrm{dom}(h)\neq\varnothing,$
the set of its global minimizers (possibly empty) is denoted by
\[
\arg\min h:=\{{\bf x}\in\mathbb{R}^{p}:h({\bf x})=\inf h\}.
\]

The KL property \cite{Lojasiewicz-KL1963,Lojasiewicz-KL1993,Kurdyka-KL1998,Bolte-KL2007a,Bolte-KL2007}
plays a central role in the convergence analysis of nonconvex algorithms
\cite{Attouch2013,wang2019global}. The following definition is adopted
from \cite{Bolte-KL2007}.

\begin{definition}{[}Kurdyka-{\L }ojasiewicz property{]} \label{def:KL-function}
A function $h$ is said to have the Kurdyka-{\L }ojasiewicz (KL)
property at $\bar{u}\in\mathrm{dom}(\partial h):=\{v\in\mathbb{R}^{n}|\partial h(v)\neq\emptyset\}$,
if there exists a constant $\eta\in(0,\infty)$, a neighborhood ${\cal N}$
of $\bar{u}$ and a function $\phi:[0,\eta)\rightarrow\mathbb{R}_{+}$,
which is a concave function that is continuous at $0$ and satisfies
$\phi(0)=0$, $\phi\in{\cal C}^{1}((0,\eta))$, i.e., $\phi$ is continuous
differentiable on $(0,\eta)$, and $\phi'(s)>0$ for all $s\in(0,\eta)$,
such that for all $u\in{\cal N}\cap\{u\in\mathbb{R}^{n}|h(\bar{u})<h(u)<h(\bar{u})+\eta\}$,
the following inequality holds
\begin{align}
\phi'(h(u)-h(\bar{u}))\cdot\mathrm{dist}(0,\partial h(u))\geq1.\label{Eq:def-KL-function}
\end{align}
If $h$ satisfies the KL property at each point of $\mathrm{dom}(\partial h)$,
$h$ is called a KL function. \end{definition}

KL functions include real analytic functions, semialgebraic functions,
tame functions defined in some o-minimal structures \cite{Kurdyka-KL1998,Bolte-KL2007},
continuous subanalytic functions \cite{Bolte-KL2007a} and locally
strongly convex functions. In the following, we provide some important
examples that satisfy the Kurdyka-{\L }ojasiewicz property.

\begin{definition}{[}Real analytic{]} \label{Def:real-analytic}
A function $h$ with domain an open set $U\subset\mathbb{R}$ and
range the set of either all real or complex numbers, is said to be
\textbf{real analytic} at $u$ if the function $h$ may be represented
by a convergent power series on some interval of positive radius centered
at $u$: $h(x)=\sum_{j=0}^{\infty}\alpha_{j}(x-u)^{j},$ for some
$\{\alpha_{j}\}\subset\RR$. The function is said to be \textbf{real
analytic} on $V\subset U$ if it is real analytic at each $u\in V$
\cite[Definition 1.1.5]{Krantz2002-real-analytic}. The real analytic
function $f$ over $\mathbb{R}^{p}$ for some positive integer $p>1$
can be defined similarly.

According to \cite{Krantz2002-real-analytic}, typical real analytic
functions include polynomials, exponential functions, and the logarithm,
trigonometric and power functions on any open set of their domains.
One can verify whether a multivariable real function $h({\bf x)}$
on $\mathbb{R}^{p}$ is analytic by checking the analyticity of $g(t):=h({\bf x}+t{\bf y})$
for any ${\bf x},{\bf y}\in\mathbb{R}^{p}$. \end{definition}

\begin{definition}{[}Semialgebraic{]}\hfill{}\label{Def:semialgebraic}
\begin{enumerate}
\item[(a)] A set ${\cal D}\subset\mathbb{R}^{p}$ is called semialgebraic \cite{Bochnak-semialgebraic1998}
if it can be represented as
\[
{\cal D}=\bigcup_{i=1}^{s}\bigcap_{j=1}^{t}\left\lbrace {\bf x}\in\mathbb{R}^{p}:P_{ij}({\bf x})=0,Q_{ij}({\bf x})>0\right\rbrace ,
\]
where $P_{ij},Q_{ij}$ are real polynomial functions for $1\leq i\leq s,1\leq j\leq t.$
\item[(b)] A function $h:\mathbb{R}^{p}\rightarrow\mathbb{R}\cup\{+\infty\}$
(resp. a point-to-set mapping $h:\mathbb{R}^{p}\rightrightarrows\mathbb{R}^{q}$)
is called \textit{semialgebraic} if its graph $\mathrm{Graph}(h)$
is semialgebraic.
\end{enumerate}
\end{definition}

According to \cite{Lojasiewicz1965-semianalytic,Bochnak-semialgebraic1998}
and \cite[I.2.9, page 52]{Shiota1997-subanalytic}, the class of semialgebraic
sets are stable under the operation of finite union, finite intersection,
Cartesian product or complementation. Some typical examples include
\text{polynomial} functions, the indicator function of a semialgebraic
set, and the \text{Euclidean norm} \cite[page 26]{Bochnak-semialgebraic1998}.

\subsection{KL Property in Deep Learning and Proof of Corollary \ref{Corollary:DL}}

\label{sc:convergence-DL}

In the following, we consider the deep neural network training problem.
Consider a $l$-layer feedforward neural network including $l-1$
hidden layers of the neural network. Particularly, let $d_{i}$ be
the number of hidden units in the $i$-th hidden layer for $i=1,\ldots,l-1$.
Let $d_{0}$ and $d_{l}$ be the number of units of input and output
layers, respectively. Let $W^{i}\in\RR^{d_{i}\times d_{i-1}}$ be
the weight matrix between the $(i-1)$-th layer and the $i$-th layer
for any $i=1,\ldots l$\footnote{To simplify notations, we regard the input and output layers as the
$0$-th and the $l$-th layers, respectively, and absorb the bias
of each layer into $W^{i}$.}. 

According to Theorem \ref{Thm:conv-SLBI+}, one major condition is
to verify the introduced Lyapunov function $F$ defined in (\ref{Eq:Lyapunov-fun})
satisfies the Kurdyka-{\L }ojasiewicz property. For this purpose,
we need an extension of semialgebraic set, called the \textit{o-minimal
structure} (see, for instance \cite{Coste1999-o-minimal}, \cite{vandenDries1986-o-minimal},
\cite{Kurdyka-KL1998}, \cite{Bolte-KL2007}). The following definition
is from \cite{Bolte-KL2007}.

\begin{definition}{[}o-minimal structure{]} \label{Def:o-minimal}
An o-minimal structure on $(\mathbb{R},+,\cdot)$ is a sequence of
boolean algebras ${\cal O}_{n}$ of ``definable'' subsets of $\mathbb{R}^{n}$,
such that for each $n\in\mathbb{N}$
\begin{enumerate}
\item[(i)] if $A$ belongs to ${\cal O}_{n}$, then $A\times\mathbb{R}$ and
$\mathbb{R}\times A$ belong to ${\cal O}_{n+1}$;
\item[(ii)] if $\Pi:\mathbb{R}^{n+1}\rightarrow\mathbb{R}^{n}$ is the canonical
projection onto $\mathbb{R}^{n}$, then for any $A$ in ${\cal O}_{n+1}$,
the set $\Pi(A)$ belongs to ${\cal O}_{n}$;
\item[(iii)] ${\cal O}_{n}$ contains the family of algebraic subsets of $\mathbb{R}^{n}$,
that is, every set of the form
\[
\{x\in\mathbb{R}^{n}:p(x)=0\},
\]
where $p:\mathbb{R}^{n}\rightarrow\mathbb{R}$ is a polynomial function.
\item[(iv)] the elements of ${\cal O}_{1}$ are exactly finite unions of intervals
and points.
\end{enumerate}
\end{definition}

Based on the definition of o-minimal structure, we can show the definition
of the \textit{definable function}.

\begin{definition}{[}Definable function{]} \label{Def:definable-function}
Given an o-minimal structure ${\cal O}$ (over $(\mathbb{R},+,\cdot)$),
a function $f:\mathbb{R}^{n}\rightarrow\mathbb{R}$ is said to be
\textit{definable} in ${\cal O}$ if its graph belongs to ${\cal O}_{n+1}$.
\end{definition}

According to \cite{vandenDries1996-GC,Bolte-KL2007}, there are some
important facts of the o-minimal structure, shown as follows.
\begin{enumerate}
\item[(i)] The collection of \textit{semialgebraic} sets is an o-minimal structure.
Recall the semialgebraic sets are Bollean combinations of sets of
the form
\[
\{x\in\mathbb{R}^{n}:p(x)=0,q_{1}(x)<0,\ldots,q_{m}(x)<0\},
\]
where $p$ and $q_{i}$'s are polynomial functions in $\mathbb{R}^{n}$.
\item[(ii)] There exists an o-minimal structure that contains the sets of the
form
\[
\{(x,t)\in[-1,1]^{n}\times\mathbb{R}:f(x)=t\}
\]
where $f$ is real-analytic around $[-1,1]^{n}$.
\item[(iii)] There exists an o-minimal structure that contains simultaneously
the graph of the exponential function $\mathbb{R}\ni x\mapsto\exp(x)$
and all semialgebraic sets.
\item[(iv)] The o-minimal structure is stable under the sum, composition, the
inf-convolution and several other classical operations of analysis.
\end{enumerate}
The Kurdyka-{\L }ojasiewicz property for the smooth definable function
and non-smooth definable function were established in \cite[Theorem 1]{Kurdyka-KL1998}
and \cite[Theorem 14]{Bolte-KL2007}, respectively. Now we are ready
to present the proof of Corollary \ref{Corollary:DL}.


\begin{proof}{[}Proof of Corollary \ref{Corollary:DL}{]} To justify
this corollary, we only need to verify the associated Lyapunov function
$F$ satisfies Kurdyka-{\L }ojasiewicz inequality. In this case
and by (\ref{Eq:Lyapunov-fun-conjugate}), $F$ can be rewritten as
follows
\begin{align*}
F({\cal W},\Gamma,{\cal G})=\alpha\left(\eL(W,\Gamma)+\frac{1}{2\nu}\|W-\Gamma\|^{2}\right)+\Omega(\Gamma)+\Omega^{*}(g)-\langle \Gamma,g\rangle.
\end{align*}
Because $\ell$ and $\sigma_{i}$'s are definable by assumptions,
then $\eL(W,\Gamma)$ are definable as compositions of definable functions.
Moreover, according to \cite{Krantz2002-real-analytic}, $\|W-\Gamma\|^{2}$
and $\langle \Gamma,g\rangle$ are semi-algebraic and thus definable. Since
the group Lasso $\Omega(\Gamma)=\sum_{g}\|\Gamma\|_{2}$ is the composition
of $\ell_{2}$ and $\ell_{1}$ norms, and the conjugate of group Lasso
penalty is the maximum of group $\ell_{2}$-norm, \emph{i.e.} $\Omega^{*}(\Gamma)=\max_{g}\|\Gamma_{g}\|_{2}$,
where the $\ell_{2}$, $\ell_{1}$, and $\ell_{\infty}$ norms are
definable, hence the group Lasso and its conjugate are definable as
compositions of definable functions. 
Therefore, $F$ is definable and hence satisfies Kurdyka-{\L }ojasiewicz
inequality by \cite[Theorem 1]{Kurdyka-KL1998}.

The verifications of other cases listed in assumptions can
be found in the proof of \cite[Proposition 1]{Zeng2019}. This finishes
the proof of this corollary. \end{proof}


\subsection{Proof of Theorem \ref{Thm:conv-SLBI+}}

Our analysis is mainly motivated by a recent paper \cite{Benning2017},
as well as the influential work \cite{Attouch2013}. According to Lemma 2.6 in
\cite{Attouch2013}, there are mainly four ingredients
in the analysis, that is, the \textit{sufficient descent property},
\textit{relative error property}, \textit{continuity property} of
the generated sequence and the \textit{Kurdyka-{\L }ojasiewicz property}
of the function. More specifically, we first establish the \textit{sufficient
descent property} of the generated sequence via exploiting the Lyapunov
function $F$ (see, (\ref{Eq:Lyapunov-fun})) in Lemma \ref{Lemma:sufficient-descent}
in Section \ref{sc:sufficient-descent}, and then show the \textit{relative
error property} of the sequence in Lemma \ref{Lemma:relative-error}
in Section \ref{sc:relative-error}. The \textit{continuity property}
is guaranteed by the continuity of $\bar{\calL}(W,\Gamma)$ and the
relation $\lim_{k\rightarrow\infty}B_{\Omega}^{g_{k}}(\Gamma_{k+1},\Gamma_{k})=0$
established in Lemma \ref{Lemma:convergence-funcvalue}(i) in Section
\ref{sc:sufficient-descent}. Thus, together with the Kurdyka-{\L }ojasiewicz
assumption of $F$, we establish the global convergence of SLBI following
by \cite[Lemma 2.6]{Attouch2013}.

Let $(\bar{W},\bar{\Gamma},\bar{g})$ be a critical point of $F$,
then the following holds
\begin{align}
 & \partial_{W}F(\bar{W},\bar{\Gamma},\bar{g})=\alpha(\nabla\eL(\bar{W})+\nu^{-1}(\bar{W}-\bar{\Gamma}))=0,\nonumber \\
 & \partial_{\Gamma}F(\bar{W},\bar{\Gamma},\bar{g})=\alpha\nu^{-1}(\bar{\Gamma}-\bar{W})+\partial\Omega(\bar{\Gamma})-\bar{g}\ni0,\label{Eq:critpoint-F}\\
 & \partial_{g}F(\bar{W},\bar{\Gamma},\bar{g})=\bar{\Gamma}-\partial\Omega^{*}(\bar{g})\ni0.\nonumber
\end{align}
By the final inclusion and the convexity of $\Omega$, it implies
$\bar{g}\in\partial\Omega(\bar{\Gamma})$. Plugging this inclusion
into the second inclusion yields $\alpha\nu^{-1}(\bar{\Gamma}-\bar{W})=0$.
Together with the first equality imples
\[
\nabla\bar{\calL}(\bar{W},\bar{\Gamma})=0,\quad\nabla\eL(\bar{W})=0.
\]
This finishes the proof of this theorem.

\subsection{Sufficient Descent Property along Lyapunov Function}

\label{sc:sufficient-descent}

Let $P_{k}:=(W_{k},\Gamma_{k})$, and $Q_{k}:=(P_{k},g_{k-1}),k\in\mathbb{N}$.
In the following, we present the sufficient descent property of $Q_{k}$
along the Lyapunov function $F$.

\noindent \textbf{Lemma.} \label{Lemma:sufficient-descent} Suppose
that $\eL$ is continuously differentiable and $\nabla\eL$ is Lipschitz
continuous with a constant $Lip>0$. Let $\{Q_{k}\}$ be a sequence
generated by SLBI with a finite initialization. If $0<\alpha<\frac{2}{\kappa(Lip+\nu^{-1})}$,
then
\[
F(Q_{k+1})\leq F(Q_{k})-\rho\|Q_{k+1}-Q_{k}\|_{2}^{2},
\]
where $\rho:=\frac{1}{\kappa}-\frac{\alpha(Lip+\nu^{-1})}{2}$.

\begin{proof} By the optimality condition of (\ref{Eq:SLBI-reform2-iter1})
and also the inclusion $p_{k}=[0,g_{k}]^{T}\in\partial R(P_{k})$,
there holds
\begin{align*}
\kappa(\alpha\nabla\bar{\calL}(P_{k})+p_{k+1}-p_{k})+P_{k+1}-P_{k}=0,
\end{align*}
which implies
\begin{align}
-\langle\alpha\nabla\bar{\calL}(P_{k}),P_{k+1}-P_{k}\rangle=\kappa^{-1}\|P_{k+1}-P_{k}\|_{2}^{2}+D(\Gamma_{k+1},\Gamma_{k})\label{Eq:innerproduct-term}
\end{align}
where
\[
D(\Gamma_{k+1},\Gamma_{k}):=\langle g_{k+1}-g_{k},\Gamma_{k+1}-\Gamma_{k}\rangle.
\]
Noting that $\bar{\calL}(P)=\eL(W)+\frac{1}{2\nu}\|W-\Gamma\|_{2}^{2}$
and by the Lipschitz continuity of $\nabla\eL(W)$ with a constant
$Lip>0$ implies $\nabla\bar{\calL}$ is Lipschitz continuous with
a constant $Lip+\nu^{-1}$. This implies
\begin{align*}
\bar{\calL}(P_{k+1})\leq\bar{\calL}(P_{k})+\langle\nabla\bar{\calL}(P_{k}),P_{k+1}-P_{k}\rangle+\frac{Lip+\nu^{-1}}{2}\|P_{k+1}-P_{k}\|_{2}^{2}.
\end{align*}
Substituting the above inequality into (\ref{Eq:innerproduct-term})
yields
\begin{align}
\alpha\bar{\calL}(P_{k+1})+D(\Gamma_{k+1},\Gamma_{k})+\rho\|P_{k+1}-P_{k}\|_{2}^{2}\leq\alpha\bar{\calL}(P_{k}).\label{Eq:sufficientdescent-barL}
\end{align}
Adding some terms in both sides of the above inequality and after
some reformulations implies
\begin{align}
 & \alpha\bar{\calL}(P_{k+1})+B_{\Omega}^{g_{k}}(\Gamma_{k+1},\Gamma_{k})\\
 & \leq\alpha\bar{\calL}(P_{k})+B_{\Omega}^{g_{k-1}}(\Gamma_{k},\Gamma_{k-1})-\rho\|P_{k+1}-P_{k}\|_{2}^{2}-\left(D(\Gamma_{k+1},\Gamma_{k})+B_{\Omega}^{g_{k-1}}(\Gamma_{k},\Gamma_{k-1})-B_{\Omega}^{g_{k}}(\Gamma_{k+1},\Gamma_{k})\right)\nonumber \\
 & =\alpha\bar{\calL}(P_{k})+B_{\Omega}^{g_{k-1}}(\Gamma_{k},\Gamma_{k-1})-\rho\|P_{k+1}-P_{k}\|_{2}^{2}-B_{\Omega}^{g_{k+1}}(\Gamma_{k},\Gamma_{k-1})-B_{\Omega}^{g_{k-1}}(\Gamma_{k},\Gamma_{k-1}),\nonumber
\end{align}
where the final equality holds for $D(\Gamma_{k+1},\Gamma_{k})-B_{\Omega}^{g_{k}}(\Gamma_{k+1},\Gamma_{k})=B_{\Omega}^{g_{k+1}}(\Gamma_{k},\Gamma_{k-1}).$
That is,
\begin{align}
F(Q_{k+1}) & \leq F(Q_{k})-\rho\|P_{k+1}-P_{k}\|_{2}^{2}-B_{\Omega}^{g_{k+1}}(\Gamma_{k},\Gamma_{k-1})-B_{\Omega}^{g_{k-1}}(\Gamma_{k},\Gamma_{k-1})\label{Eq:sufficientdescent-Breg}\\
 & \leq F(Q_{k})-\rho\|P_{k+1}-P_{k}\|_{2}^{2},\label{Eq:sufficientdescent}
\end{align}
where the final inequality holds for $B_{\Omega}^{g_{k+1}}(\Gamma_{k},\Gamma_{k-1})\geq0$
and $B_{\Omega}^{g_{k-1}}(\Gamma_{k},\Gamma_{k-1})\geq0.$ Thus, we
finish the proof of this lemma. \end{proof}

Based on Lemma \ref{Lemma:sufficient-descent}, we directly obtain
the following lemma.

\begin{lemma} \label{Lemma:convergence-funcvalue} Suppose that assumptions
of Lemma \ref{Lemma:sufficient-descent} hold. Suppose further that
Assumption \ref{Assumption} (b)-(d) hold. Then
\begin{enumerate}
\item[(i)] both $\alpha\{\bar{\calL}(P_{k})\}$ and $\{F(Q_{k})\}$ converge
to the same finite value, and $\lim_{k\rightarrow\infty}B_{\Omega}^{g_{k}}(\Gamma_{k+1},\Gamma_{k})=0.$
\item[(ii)] the sequence $\{(W_{k},\Gamma_{k},g_{k})\}$ is bounded,
\item[(iii)] $\lim_{k\rightarrow\infty}\|P_{k+1}-P_{k}\|_{2}^{2}=0$ and $\lim_{k\rightarrow\infty}D(\Gamma_{k+1},\Gamma_{k})=0,$
\item[(iv)] $\frac{1}{K}\sum_{k=0}^{K}\|P_{k+1}-P_{k}\|_{2}^{2}\rightarrow0$
at a rate of ${\cal O}(1/K)$.
\end{enumerate}
\end{lemma}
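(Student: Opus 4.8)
The plan is to derive all four conclusions from the single sufficient descent estimate of Lemma~\ref{Lemma:sufficient-descent}, namely $F(Q_{k+1})\le F(Q_k)-\rho\|P_{k+1}-P_k\|_2^2$ with $\rho=\tfrac{1}{\kappa}-\tfrac{\alpha(Lip+\nu^{-1})}{2}>0$ under the stated step-size bound, combined with the coercivity and subgradient hypotheses of Assumption~\ref{Assumption}(b)--(d). The first fact I would record is that $F$ is bounded below along the iterates: since $\eL\ge0$ by Assumption~\ref{Assumption}(c), the coupling term $\tfrac{1}{2\nu}\|W-\Gamma\|^2\ge0$, and $B_\Omega^{g_{k-1}}(\Gamma_k,\Gamma_{k-1})\ge0$ by convexity of $\Omega$, we have $F(Q_k)\ge0$ for all $k$. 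Thus $\{F(Q_k)\}$ is nonincreasing and bounded below, hence converges to a finite limit $F^\star$.

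For (i) and (iv), I would telescope. Summing the descent inequality over $k=0,\dots,K$ gives $\rho\sum_{k=0}^K\|P_{k+1}-P_k\|_2^2\le F(Q_0)-F^\star<\infty$, which at once yields the ${\cal O}(1/K)$ averaged rate of (iv) and, by summability, the first limit $\|P_{k+1}-P_k\|_2^2\to0$ of (iii). To finish (i) I would instead telescope the sharper inequality (\ref{Eq:sufficientdescent-Breg}), whose right-hand side subtracts two further nonnegative Bregman terms; summability then forces $B_\Omega^{g_{k-1}}(\Gamma_k,\Gamma_{k-1})\to0$, i.e. $B_\Omega^{g_k}(\Gamma_{k+1},\Gamma_k)\to0$ after reindexing. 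Since $F(Q_k)=\alpha\bar{\calL}(P_k)+B_\Omega^{g_{k-1}}(\Gamma_k,\Gamma_{k-1})$ and the Bregman term vanishes, $\alpha\bar{\calL}(P_k)$ converges to the same value $F^\star$.

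For (ii) I would run a boundedness chain. From $\alpha\bar{\calL}(P_k)\le F(Q_k)\le F(Q_0)$ I obtain $\eL(W_k)\le\bar{\calL}(P_k)\le F(Q_0)/\alpha$, so $\{W_k\}$ lies in a fixed sublevel set of $\eL$ and is bounded by Assumption~\ref{Assumption}(b). The same bound controls $\tfrac{1}{2\nu}\|W_k-\Gamma_k\|^2$, whence $\{\Gamma_k\}$ is bounded. Finally, the proximal update (\ref{Eq:SLBI-reform-iter2})--(\ref{Eq:SLBI-reform-iter3}) guarantees $g_k\in\partial\Omega(\Gamma_k)$; since $\{\Gamma_k\}$ stays in a compact set, Assumption~\ref{Assumption}(d) supplies a uniform bound $\|g_k\|\le C$, closing the chain.

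For the remaining limit $D(\Gamma_{k+1},\Gamma_k)\to0$ in (iii), I would use the identity (\ref{Eq:innerproduct-term}), $-\langle\alpha\nabla\bar{\calL}(P_k),P_{k+1}-P_k\rangle=\kappa^{-1}\|P_{k+1}-P_k\|_2^2+D(\Gamma_{k+1},\Gamma_k)$. Having established boundedness of $\{P_k\}$ in (ii), $\nabla\bar{\calL}(P_k)$ is bounded, so the left side is a bounded factor times $P_{k+1}-P_k\to0$ and vanishes; the quadratic term vanishes too, forcing $D(\Gamma_{k+1},\Gamma_k)\to0$ (a nonnegative quantity by monotonicity of $\partial\Omega$). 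I do not expect a single hard estimate here; the delicate point throughout is the index bookkeeping among $g_{k-1},g_k,g_{k+1}$ in the Bregman terms — in particular verifying the three-point relation $D(\Gamma_{k+1},\Gamma_k)-B_\Omega^{g_k}(\Gamma_{k+1},\Gamma_k)=B_\Omega^{g_{k+1}}(\Gamma_k,\Gamma_{k-1})$ used in passing from (\ref{Eq:sufficientdescent-barL}) to (\ref{Eq:sufficientdescent-Breg}). Once the descent inequality is in place, the four conclusions follow as standard consequences of monotone convergence, telescoping, and the coercivity/subgradient hypotheses.
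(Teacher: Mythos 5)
Your proposal is correct and follows essentially the same route as the paper's proof: sufficient descent of $F$, monotonicity plus lower-boundedness giving convergence, telescoping for the summability and the ${\cal O}(1/K)$ rate, the level-set argument for $W_k$, the coupling term for $\Gamma_k$, and Assumption \ref{Assumption}(d) for $g_k$. The one minor deviation is the limit $D(\Gamma_{k+1},\Gamma_k)\to 0$: the paper gets it for free by telescoping the inequality (\ref{Eq:sufficientdescent-barL}), whose left-hand side already contains the $D$-terms, whereas you recover it separately from the identity (\ref{Eq:innerproduct-term}) combined with boundedness of $\nabla\bar{\calL}(P_k)$ from part (ii) — valid, but slightly more roundabout, since it makes this limit depend on the boundedness argument while the paper's version does not.
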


\begin{proof} By (\ref{Eq:sufficientdescent-barL}), $\bar{\calL}(P_{k})$
is monotonically decreasing due to $D(\Gamma_{k+1},\Gamma_{k})\geq0$.
Similarly, by (\ref{Eq:sufficientdescent}), $F(Q^{k})$ is also monotonically
decreasing. By the lower boundedness assumption of $\eL(W)$, both
$\bar{\calL}(P)$ and $F(Q)$ are lower bounded by their definitions,
i.e., (\ref{eq:sparse_loss}) and (\ref{Eq:Lyapunov-fun}), respectively.
Therefore, both $\{\bar{\calL}(P_{k})\}$ and $\{F(Q_{k})\}$ converge,
and it is obvious that $\lim_{k\rightarrow\infty}F(Q_{k})\geq\lim_{k\rightarrow\infty}\alpha\bar{\calL}(P_{k})$.
By (\ref{Eq:sufficientdescent-Breg}),
\begin{align*}
B_{\Omega}^{g_{k-1}}(\Gamma_{k},\Gamma_{k-1})\leq F(Q_{k})-F(Q_{k+1}),\ k=1,\ldots.
\end{align*}
By the convergence of $F(Q_{k})$ and the nonegativeness of $B_{\Omega}^{g_{k-1}}(\Gamma_{k},\Gamma_{k-1})$,
there holds
\[
\lim_{k\rightarrow\infty}B_{\Omega}^{g_{k-1}}(\Gamma_{k},\Gamma_{k-1})=0.
\]
By the definition of $F(Q_{k})=\alpha\bar{\calL}(P_{k})+B_{\Omega}^{g_{k-1}}(\Gamma_{k},\Gamma_{k-1})$
and the above equality, it yields
\[
\lim_{k\rightarrow\infty}F(Q_{k})=\lim_{k\rightarrow\infty}\alpha\bar{\calL}(P_{k}).
\]

Since $\eL(W)$ has bounded level sets, then $W_{k}$ is bounded.
By the definition of $\bar{\calL}(W,\Gamma)$ and the finiteness of
$\bar{\calL}(W_{k},\Gamma_{k})$, $\Gamma_{k}$ is also bounded due
to $W_{k}$ is bounded. The boundedness of $g_{k}$ is due to $g_{k}\in\partial\Omega(\Gamma_{k})$,
condition (d), and the boundedness of $\Gamma_{k}$.

By (\ref{Eq:sufficientdescent}), summing up (\ref{Eq:sufficientdescent})
over $k=0,1,\ldots,K$ yields
\begin{align}
\sum_{k=0}^{K}\left(\rho\|P_{k+1}-P_{k}\|^{2}+D(\Gamma_{k+1},\Gamma_{k})\right)<\alpha\bar{\calL}(P_{0})<\infty.\label{Eq:summable}
\end{align}
Letting $K\rightarrow\infty$ and noting that both $\|P_{k+1}-P_{k}\|^{2}$
and $D(\Gamma_{k+1},\Gamma_{k})$ are nonnegative, thus
\[
\lim_{k\rightarrow\infty}\|P_{k+1}-P_{k}\|^{2}=0,\quad\lim_{k\rightarrow\infty}D(\Gamma_{k+1},\Gamma_{k})=0.
\]
Again by (\ref{Eq:summable}),
\begin{align*}
\frac{1}{K}\sum_{k=0}^{K}\left(\rho\|P_{k+1}-P_{k}\|^{2}+D(\Gamma_{k+1},\Gamma_{k})\right)<K^{-1}\alpha\bar{\calL}(P_{0}),
\end{align*}
which implies $\frac{1}{K}\sum_{k=0}^{K}\|P_{k+1}-P_{k}\|^{2}\rightarrow0$
at a rate of ${\cal O}(1/K)$. \end{proof}

\subsection{Relative Error Property}

\label{sc:relative-error}

In this subsection, we provide the bound of subgradient by the discrepancy
of two successive iterates. By the definition of $F$ (\ref{Eq:Lyapunov-fun}),
\begin{align}
H_{k+1}:=\left(\begin{array}{c}
\alpha\nabla_{W}\bar{\calL}(W_{k+1},\Gamma_{k+1})\\
\alpha\nabla_{\Gamma}\bar{\calL}(W_{k+1},\Gamma_{k+1})+g_{k+1}-g_{k}\\
\Gamma_{k}-\Gamma_{k+1}
\end{array}\right)\in\partial F(Q_{k+1}),\ k\in\mathbb{N}.\label{Eq:subgradient}
\end{align}

\noindent \textbf{Lemma. } \label{Lemma:relative-error} Under assumptions
of Lemma \ref{Lemma:convergence-funcvalue}, then
\[
\|H_{k+1}\|\leq\rho_{1}\|Q_{k+1}-Q_{k}\|,\ \mathrm{for}\ H_{k+1}\in\partial F(Q_{k+1}),\ k\in\mathbb{N},
\]
where $\rho_{1}:=2\kappa^{-1}+1+\alpha(Lip+2\nu^{-1})$. Moreover,
$\frac{1}{K}\sum_{k=1}^{K}\|H_{k}\|^{2}\rightarrow0$ at a rate of
${\cal O}(1/K)$.

\begin{proof} Note that
\begin{align}
\nabla_{W}\bar{\calL}(W_{k+1},\Gamma_{k+1}) & =(\nabla_{W}\bar{\calL}(W_{k+1},\Gamma_{k+1})-\nabla_{W}\bar{\calL}(W_{k+1},\Gamma_{k}))\label{Eq:nabla-W-decomp}\\
 & +(\nabla_{W}\bar{\calL}(W_{k+1},\Gamma_{k})-\nabla_{W}\bar{\calL}(W_{k},\Gamma_{k}))+\nabla_{W}\bar{\calL}(W_{k},\Gamma_{k}).\nonumber
\end{align}
By the definition of $\bar{\calL}$ (see (\ref{eq:sparse_loss})),
\begin{align*}
\|\nabla_{W}\bar{\calL}(W_{k+1},\Gamma_{k+1})-\nabla_{W}\bar{\calL}(W_{k+1},\Gamma_{k})\| & =\nu^{-1}\|\Gamma_{k}-\Gamma_{k+1}\|,\\
\|\nabla_{W}\bar{\calL}(W_{k+1},\Gamma_{k})-\nabla_{W}\bar{\calL}(W_{k},\Gamma_{k})\| & =\|(\nabla\eL(W_{k+1})-\nabla\eL(W_{k}))+\nu^{-1}(W_{k+1}-W_{k})\|\\
 & \leq(Lip+\nu^{-1})\|W_{k+1}-W_{k}\|,
\end{align*}
where the last inequality holds for the Lipschitz continuity of $\nabla\eL$
with a constant $Lip>0$, and by (\ref{Eq:SLBI-reform-iter1}),
\begin{align*}
\|\nabla_{W}\bar{\calL}(W_{k},\Gamma_{k})\|=(\kappa\alpha)^{-1}\|W_{k+1}-W_{k}\|.
\end{align*}
Substituting the above (in)equalities into (\ref{Eq:nabla-W-decomp})
yields
\begin{align*}
\|\nabla_{W}\bar{\calL}(W_{k+1},\Gamma_{k+1})\|\leq\left[(\kappa\alpha)^{-1}+Lip+\nu^{-1}\right]\cdot\|W_{k+1}-W_{k}\|+\nu^{-1}\|\Gamma_{k+1}-\Gamma_{k}\|
\end{align*}
Thus,
\begin{align}
\|\alpha\nabla_{W}\bar{\calL}(W_{k+1},\Gamma_{k+1})\|\leq\left[\kappa^{-1}+\alpha(Lip+\nu^{-1})\right]\cdot\|W_{k+1}-W_{k}\|+\alpha\nu^{-1}\|\Gamma_{k+1}-\Gamma_{k}\|.\label{Eq:subgradbound-part1}
\end{align}

By (\ref{Eq:SLBI-reform-iter3}), it yields
\begin{align*}
g_{k+1}-g_{k}=\kappa^{-1}(\Gamma_{k}-\Gamma_{k+1})-\alpha\nabla_{\Gamma}\bar{\calL}(W_{k},\Gamma_{k}).
\end{align*}
Noting that $\nabla_{\Gamma}\bar{\calL}(W_{k},\Gamma_{k})=\nu^{-1}(\Gamma_{k}-W_{k})$,
and after some simplifications yields
\begin{align}
\label{Eq:subgradbound-part2}
\|\alpha\nabla_{\Gamma}\bar{\calL}(W_{k+1},\Gamma_{k+1})+g_{k+1}-g_{k}\| & =\|(\kappa^{-1}-\alpha\nu^{-1})\cdot(\Gamma_{k}-\Gamma_{k+1})+\alpha\nu^{-1}(W_{k}-W_{k+1})\|\nonumber \\
 & \leq\alpha\nu^{-1}\|W_{k}-W_{k+1}\|+(\kappa^{-1}-\alpha\nu^{-1})\|\Gamma_{k}-\Gamma_{k+1}\|,
\end{align}
where the last inequality holds for the triangle inequality and $\kappa^{-1}>\alpha\nu^{-1}$
by the assumption.

By (\ref{Eq:subgradbound-part1}), (\ref{Eq:subgradbound-part2}),
and the definition of $H_{k+1}$ (\ref{Eq:subgradient}), there holds
\begin{align}
\|H_{k+1}\| & \leq\left[\kappa^{-1}+\alpha(Lip+2\nu^{-1})\right]\cdot\|W_{k+1}-W_{k}\|+(\kappa^{-1}+1)\|\Gamma_{k+1}-\Gamma_{k}\|\nonumber \\
 & \leq\left[2\kappa^{-1}+1+\alpha(Lip+2\nu^{-1})\right]\cdot\|P_{k+1}-P_{k}\|\label{Eq:subgradbound-Pk}\\
 & \leq\left[2\kappa^{-1}+1+\alpha(Lip+2\nu^{-1})\right]\cdot\|Q_{k+1}-Q_{k}\|.\nonumber
\end{align}

By (\ref{Eq:subgradbound-Pk}) and Lemma \ref{Lemma:convergence-funcvalue}(iv),
$\frac{1}{K}\sum_{k=1}^{K}\|H_{k}\|^{2}\rightarrow0$ at a rate of
${\cal O}(1/K)$.

This finishes the proof of this lemma. \end{proof}
\section{Supplementary Experiments}

\subsection{Ablation Study on Image Classification}

\begin{table*}
\begin{centering}
{\footnotesize{}{}}{\footnotesize\par}
\par\end{centering}
\begin{centering}
 
\par\end{centering}
\begin{centering}
{\footnotesize{}{}}%
\begin{tabular}{c}
\begin{tabular}{c|c|cccc}
\hline 
\multicolumn{2}{c|}{{\small{}{}Dataset }} & \emph{\small{}{}{}}{\small{}{}MNIST }  & \multicolumn{1}{c}{{\small{}{}CIFAR10}} & \multicolumn{2}{c}{{\small{}{}ImageNet-2012}}\tabularnewline
\hline 
{\small{}{}Models }  & {\small{}{}Variants }  & {\small{}{}{}LeNet }  & {\small{}{}ResNet-20 }  & {\small{}{}AlexNet }  & {\small{}{}ResNet-18}\tabularnewline
\hline 
\multirow{5}{*}{\emph{\small{}{}SGD}{\small{}{} }} & {\small{}{}Naive}  & {\small{}{}{}98.87 }  & {\small{}{}86.46 }  & {\small{}{}{}--/-- }  & {\small{}{}{}60.76/79.18 }\tabularnewline
 & {\small{}{}$\mathit{l}_{1}$ }  & {\small{}{}98.52 }  & {\small{}{}67.60 }  & {\small{}{}46.49/65.45} & {\small{}{}51.49/72.45}\tabularnewline
 & {\small{}{}Mom }  & {\small{}{}99.16 }  & {\small{}{}89.44 }  & {\small{}{}55.14/78.09 }  & {\small{}{}66.98/86.97 }\tabularnewline
 & {\small{}{}Mom-W}\emph{\small{}{}d$^{\star}$}{\small{}{} }  & {\small{}{}}\textbf{\small{}99.23}{\small{} }  & {\small{}{}}\textbf{\small{}90.31}{\small{} }  & {\small{}{}56.55/79.09 }  & {\small{}{}69.76/89.18}\tabularnewline
 & {\small{}{}Nesterov }  & {\small{}{}}\textbf{\small{}99.23}{\small{} }  & {\small{}{}90.18 }  & {\small{}{}-/- }  & {\small{}{}70.19/89.30}\tabularnewline
\hline 
\multirow{5}{*}{\emph{\small{}{}Adam}{\small{}{} }} & {\small{}{}Naive}  & {\small{}{}{}99.19 }  & {\small{}{}{}89.14 }  & {\small{}{}--/-- }  & {\small{}{}59.66/83.28}\tabularnewline
 & {\small{}{}Adabound }  & {\small{}{}99.15}  & {\small{}{}87.89 }  & {\small{}{}--/-- }  & {\small{}{}--/--}\tabularnewline
 & {\small{}{}Adagrad }  & {\small{}{}99.02}  & {\small{}{}88.17 }  & {\small{}{}--/-- }  & {\small{}{}--/--}\tabularnewline
 & {\small{}{}Amsgrad }  & {\small{}{}99.14}  & {\small{}{}88.68 }  & {\small{}{}--/-- }  & {\small{}{}--/--}\tabularnewline
 & {\small{}{}Radam }  & {\small{}{}99.08}  & {\small{}{}88.44 }  & {\small{}{}--/-- }  & {\small{}{}--/--}\tabularnewline
\hline 
\multirow{3}{*}{\emph{\small{}{}}\textcolor{black}{\emph{\small{}DessiLBI}}{\small{}{} }} & {\small{}{}Naive}  & {\small{}{}99.02 }  & {\small{}{}89.26 }  & {\small{}{}55.06/77.69 }  & {\small{}{}65.26/86.57 }\tabularnewline
 & {\small{}{}Mom }  & {\small{}{}{}99.19 }  & {\small{}{}{}89.72 }  & {\small{}{}56.23/78.48 }  & {\small{}{}68.55/87.85}\tabularnewline
 & {\small{}{}Mom-Wd }  & {\small{}{}{}99.20 }  & {\small{}{}{}89.95 }  & \textbf{\small{}{}57.09/79.86 }{\small{} } & \textbf{\small{}{}{}{}70.55/89.56}\tabularnewline
\cline{1-6} \cline{3-6} \cline{4-6} \cline{5-6} \cline{6-6} 
\end{tabular}\tabularnewline
\end{tabular}
\par\end{centering}
\begin{centering}
 
\par\end{centering}
{\small{}{}\caption{\label{table:supervised_imagenet_mnist_cifar} Top-1/Top-5 accuracy(\%) on ImageNet-2012
and test accuracy on MNIST/CIFAR10. $^{\star}$: results from the
official pytorch website. We use the official pytorch codes to run
the competitors. All models are trained by 100 epochs. In this table,
we run the experiment by ourselves except for SGD Mom-Wd on ImageNet
which is reported in https://pytorch.org/docs/stable/torchvision/models.html. }
} 
\end{table*}

\textbf{Experimental Design}. We compare different variants of SGD and Adam
in the experiments. By default, the learning rate of competitors is
set as $0.1$ for SGD and its variant and $0.001$ for Adam and its
variants, and gradually decreased by 1/10 every 30 epochs. In particular,
we have,

SGD: (1) Naive SGD: the standard SGD with batch input. (2) SGD with
$\mathit{l}_{1}$ penalty (Lasso). The $\mathit{l}_{1}$ norm is applied to
penalize the weights of SGD by encouraging the sparsity of learned
model, with the regularization parameter of the $\mathit{l}_{1}$ penalty term being set as 
$1e^{-3}$ 
(3) SGD with momentum (Mom): we utilize momentum 0.9 in SGD. (4) SGD
with momentum and weight decay (Mom-Wd): we set the momentum 0.9 and
the standard $\mathit{l}_{2}$ weight decay with the coefficient weight
$1e^{-4}$. (5) SGD with Nesterov (Nesterov): the SGD uses nesterov
momentum 0.9.

Adam: (1) Naive Adam: it refers to the standard version of Adam. We
report the results of several recent variants of Adam, including (2) Adabound,
(3) Adagrad, (4) Amsgrad, and (5) Radam.

The results of image classification are shown in Tab.~ \ref{table:supervised_imagenet_mnist_cifar} . It  shows the experimental results on ImageNet-2012, CIFAR10, and MNIST of some classical networks -{}- LeNet, AlexNet and ResNet. 
 Our DessiLBI  variants may achieve comparable or even better performance than SGD variants in 100 epochs, indicating the efficacy in learning dense, over-parameterized models. The visualization of learned ResNet-18 on ImageNet-2012 is given in Figure \ref{fig:imagenet-vis}.

\subsection{Layer Selection\label{sec:ls}}
To push the performance of our network sparsification, we  alter the hyperparameters to get a much higher  selection by using \emph{pruning layers} as in Sec.~\ref{sec:net-sparsified}. 
The results are shown in Tab.~\ref{layer}. Here we use $\kappa=1$. We select $\nu=1200$ for VGGNet16 and $\nu=1500$ for ResNet56.
For ResNet56, our method drops 2 layers with the correponding FLOP count reduced to 44.89\%.
By using pruning layers in VGG16,  about 90\% parameters can be removed. Interestingly, the sparsified VGG16 network actually has the improved performance over the original VGG16. 
The detailed structure is shown in Figure~\ref{fig:layer_selection}.
For VGG16, most of the filters close to the input layer are selected by our network sparsification and much of the pruning exists near the output layer. 
By viewing the structure of VGGNet, it is clear that the redundancy existed in the layers close to the output layer.
It is in accordance with our results.
For ResNet56, we drop two layers in  the middle of the corresponding block. The whole structure shows an intense selection in the beginning and ending of channel alternating stage and a sparse selection inside each stage.
\begin{table} 
\scriptsize
\centering
\begin{tabular}{c|c|c|c|c|c}
\hline 

 &Acc. & Sparse Acc. &Sparsity & MFLOP & No. D-L\tabularnewline
\hline 

ResNet56 &  93.73  & 93.47 &0.4157  &56 (44.89\%)& 2 \tabularnewline
\hline 
VGG16  &93.47  &94.06 & 0.1095& 106(33.84\%) &6 \tabularnewline
\hline 
\end{tabular}
\vspace{3mm}
\caption{Results of pruning layers  on CIFAR10. MFLOP indicates the total number of floating-point operations executed in millions. No. D-L is short for the number of dropped layers. The percentage means the ratio between the current MFLOP count and the MFLOP COUNT of full model. \label{layer} }
\end{table}
\section{Computational Cost of DessiLBI}

We further compare the computational cost of
different optimizers: SGD (Mom), DessiLBI (Mom) and Adam (Naive). We
test each optimizer on one GPU, and all the experiments are done on
one GTX2080. For computational cost, we judge them from two aspects : GPU memory usage and time needed for one batch. 
The batch size here is 64, experiment is performed on VGG-16 as shown in Table \ref{table:ccost}.
\begin{table}[htb]
\centering
\begin{tabular}{c| ccc}
\hline
optimizer & SGD &DessiLBI &Adam\\
\hline
Mean Batch Time & 0.0197 &0.0221 &0.0210 \\
\hline
GPU Memory & 1161MB &1459MB &1267MB\\
\hline
\end{tabular}
\caption{Computational and Memory Costs.\label{table:ccost}} 
\end{table}

\begin{figure}[htb]
	\centering%
	\begin{tabular}{c}
		\hspace{-0.2in}\includegraphics[width=6in]{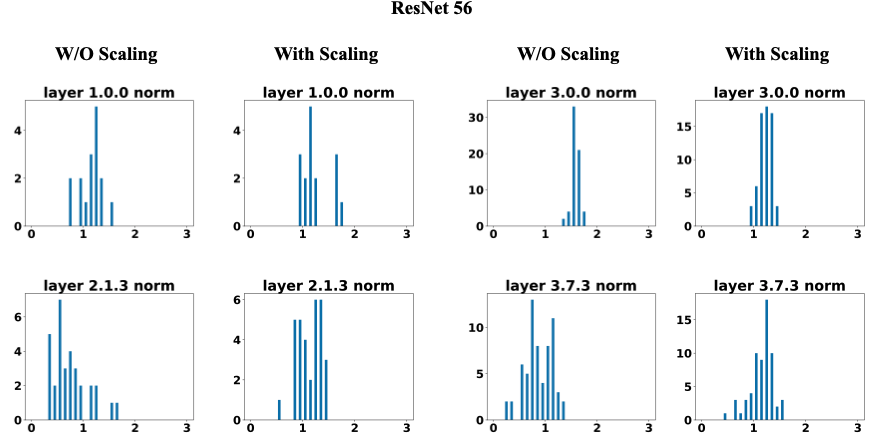}\tabularnewline
	\end{tabular}
		\caption{\label{fig:res56_gamma}\textcolor{black}{ Visualization of the filter norm of $V$ of ResNet56. "W/O Scaling" means not using scaling strategy and "With Scaling" denotes utilizing scaling strategy. Here Layer a.b.c means the c-th layer of b-th block in the a-th stage in the network. It is clear that the scaling strategy can stabilize the scale of the norm.}}
\end{figure}
\begin{figure}[htb]
	\centering%
	\begin{tabular}{c}
		\hspace{-0.2in}\includegraphics[width=6in]{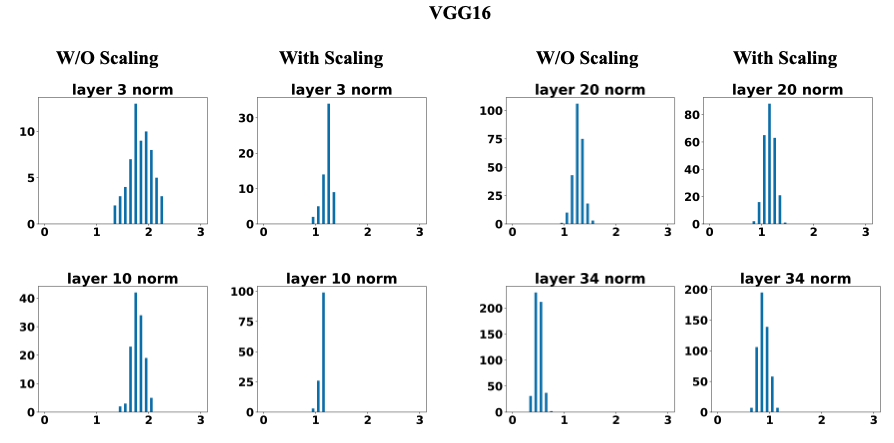}\tabularnewline
	\end{tabular}
	\caption{\label{fig:vgg16_gamma}\textcolor{black}{Visualization of the filter norm of $V$ of VGG16. "W/O Scaling" means not using scaling strategy and "With Scaling" denotes utilizing scaling strategy, Here Layer x means the x-th in the network, the number is in accordance with the name in VGG16 model written in Pytorch and is not the layer index in the range of 1 to 16. It is clear that the scaling strategy can stabilize the scale of the norm.} } 
\end{figure}

\begin{figure}[htb]
	\centering%
	\begin{tabular}{c}
		\hspace{-0.2in}\includegraphics[width=5in]{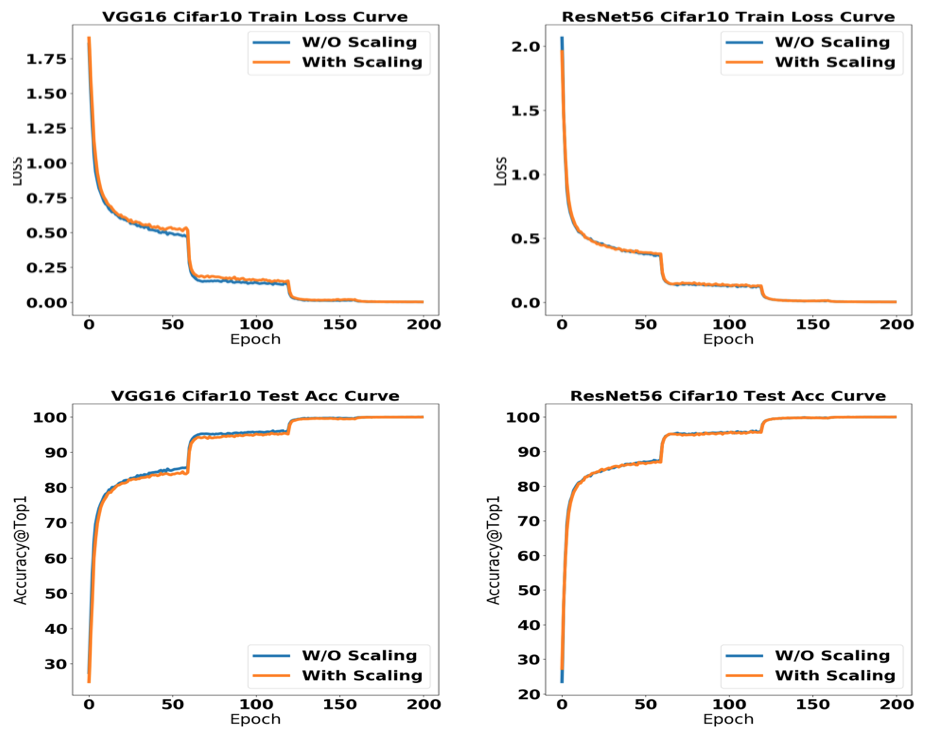}\tabularnewline
	\end{tabular}
	\caption{\label{fig:scale_curve}\textcolor{black}{ This group of figure illustrate the training comparison between using and not using scaling strategy. We can find that the scaling strategy does no harm to the convergence property. The left column shows the training loss curve and testing accuracy curve for VGG16 while the right column presents the training loss curve and testing accuracy curve for ResNet56.}}  
\end{figure}

%
\section{Visualization}
\subsection{Visualization of Filters}
 Filters learned by ImageNet prefer to non-semantic texture rather than shape and color as shown in Figure~\ref{fig:imagenet-vis}.
 The filters of high norms
mostly focus on the texture and shape information, while color information
is with the filters of small magnitudes. This phenomenon is in
accordance with observation of \cite{abbasi2017structural} that filters
mainly of color information can be pruned for saving computational
cost. Moreover, among the filters of high magnitudes, most of them
capture non-semantic textures while few pursue shapes. This shows
that the first convolutional layer of ResNet-18 trained on ImageNet
learned non-semantic textures rather than shape to do image classification
tasks, in accordance with recent studies \cite{TubingenICLR19}. How to enhance
the semantic shape invariance learning, is arguably a key to improve
the robustness of convolutional neural networks.
\begin{figure}[htb]
	\centering%
	\begin{tabular}{c}
		\hspace{-0.2in}\includegraphics[width=6in]{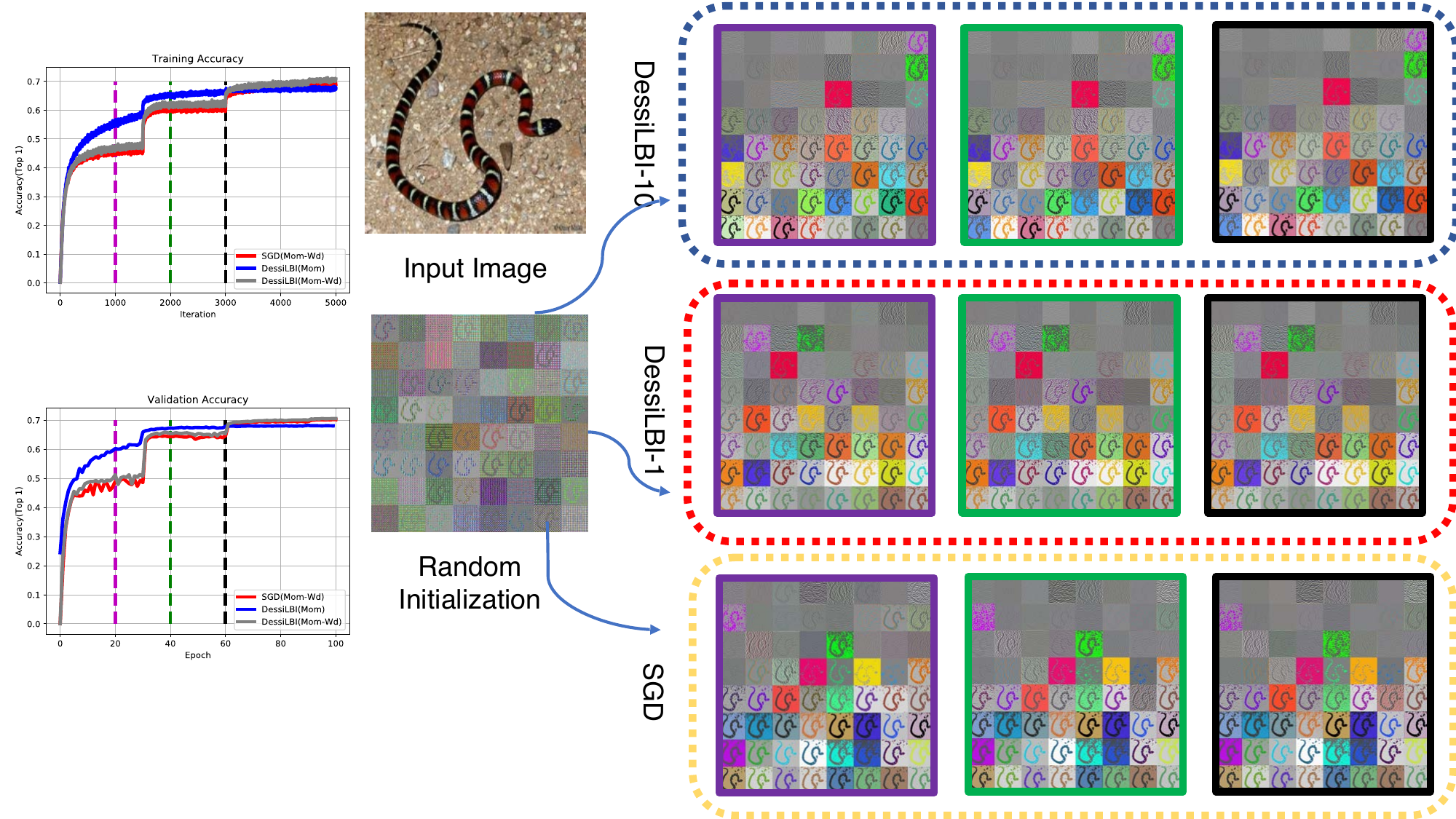}\tabularnewline
	\end{tabular}
	\caption{\label{fig:imagenet-vis} Visualization of the first convolutional layer filters of ResNet-18 trained on ImageNet-2012. Given the input
		image and initial weights visualized in the middle, filter response gradients
		at 20 (purple), 40 (green), and 60 (black) epochs are visualized by \cite{springenberg2014striving}. }  
\end{figure}
\subsection{\textcolor{black}{Visualization of Training and Testing Curves}}
\textcolor{black}{In}
\textcolor{black}{
To validate the convergence after our modification empirically, we give a visualization of the training loss curve and testing accuracy curve for ResNet56~\cite{he2016deep}  and VGG16~\cite{simonyan2014very} in Figure~\ref{fig:scale_curve}. 
It is clear that adding the scaling strategy still holds the convergence property as shown in the training and testing curve.
}

\subsection{\textcolor{black}{Visualization of Augmented Variable}}
\textcolor{black}{To validate the efficacy of our modification , we give a visualization of the filter distribution across different layers for ResNet56~\cite{he2016deep} in Figure~\ref{fig:res56_gamma} and VGG16~\cite{simonyan2014very} in Figure~\ref{fig:vgg16_gamma}. 
It can be found that after using the scaling strategy, the filter norms own more stable scale, which can be beneficial for finding the important structure.
}

\subsection{Visualization of Selected Weight}
The keep ratio for each layer is visualized in Figure~\ref{fig:weightratio}. For VGGNet16, most of the weight in the middle of the network can be pruned without hurting the performance.
For the input conv layer and output fc layer, a high percent of weights are kept and the whole number of weights for them is also smaller than other layers.
For ResNet50, we can find an interesting phenomenon, most layers inside the block can be pruned to a very sparse level.
Meanwhile, for the input and output of a block, a relatively high percent of weights should be kept.
\begin{figure} [htb]
\centering

\begin{tabular}{cc}
\includegraphics[width = 7.0cm]{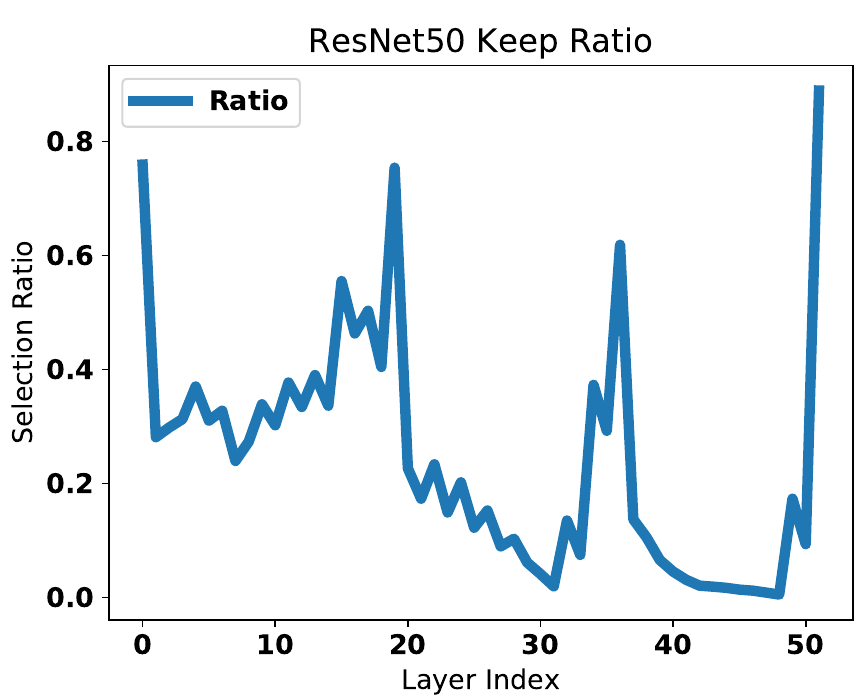} & 
\includegraphics[width = 7.0cm]{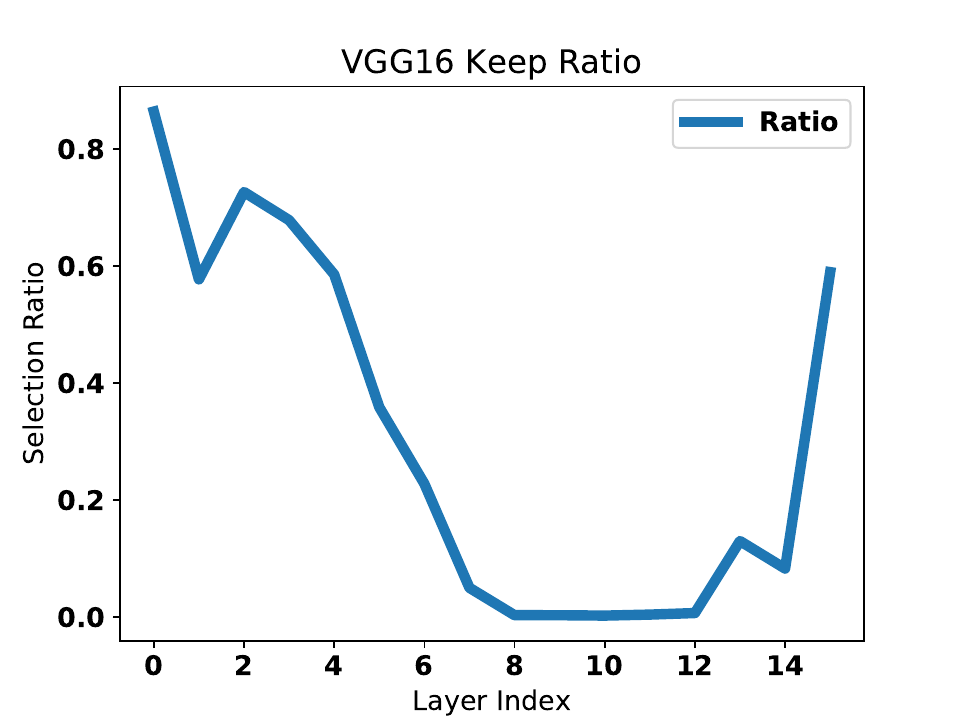}  \\

\end{tabular}

    \caption{The weight selection ratio of VGG16 and ResNet50 by DessiLBI in CIFAR10. }
    \label{fig:weightratio}
\end{figure}

\textcolor{black}{\section{Comparison with Other Growing Methods \label{extra_grow}}}
\textcolor{black}{\subsection{Performance and Computational Cost Comparison}}

\textcolor{black}{
We compare our growing algorithm against several recent methods. Particularly, 
we compare NASH~\cite{elsken2017simple}, Splitting~\cite{wu2019splitting}, Energy-aware Splitting~\cite{wang2019energy}, Firefly growing~\cite{wu2021firefly}. 
To give a complete comparison, we take the both the parameter of the grew model and the time to grow the network into consideration.
We conduct experiments on CIFAR10~\cite{cifar_krizhevsky2009learning}.
VGG19~\cite{simonyan2014very} is selected as the backbone.
The initial seed network are set the same as the splitting based growing methods~\cite{wu2019splitting,wang2019energy,wu2021firefly}.
For VGG19, the initial network is [1, 1, 'M', 2, 2, 'M', 4, 4, 4, 4, 'M', 8, 8, 8, 8, 'M', 8, 8, 8, 8, 'M'].
Here 'M' means the max pooling layer. We use the value 2 to denote 2 convolutional kernels for that layer.
For these algorithms, the models are grow for 6 times sequentially.
The methods of NASH, Splitting, Energy-aware Splitting, Firefly growing are trained on a single NVIDIA 3090ti GPU, as these methods demands relatively large GPU memory. In contrast, our growing algorithm is trained by a single NVIDIA 1080ti GPU. 
}

\textcolor{black}{As shown in Figure.~\ref{fig:splitting_comparison}, our growing method outperforms the splitting methods significantly along the growing path.
In this figure full model means training the VGG19 model with full size.
EA-Splitting and Firefly denotes Energy-aware Splitting~\cite{wang2019energy} and Firefly growing~\cite{wu2021firefly} respectively.
All the results except for our grow algorithm are obtained by run codes from the official open sourced github repo of Firefly growing~\cite{wu2021firefly}. Our algorithm is outperformed by EA-splitting at the initial process, but our method has significantly better performance for the following growing process.
}

\textcolor{black}{For more complete comparison, we illustrate the detailed results and the running time in Table.~\ref{tab:split_compare}.
In this table, we present the running time to get the model which is shown in the unit of hours, the number of parameters of the  model in millions and the test accuracy of the model. 
For these methods, we present the results of growing for 6 times.
We can conclude that our grow algorithm can have better performance in this group of comparison experiments with significantly less training cost.
In terms of running time, our method use only 5.31h to get a final model with 2.00M parameters and 0.9403 accuracy which outperforms others model obtained by running for more than 9 hours.
}

\begin{figure}[H]
	\centering%
	\begin{tabular}{c}
		\hspace{-0.2in}\includegraphics[width=4in]{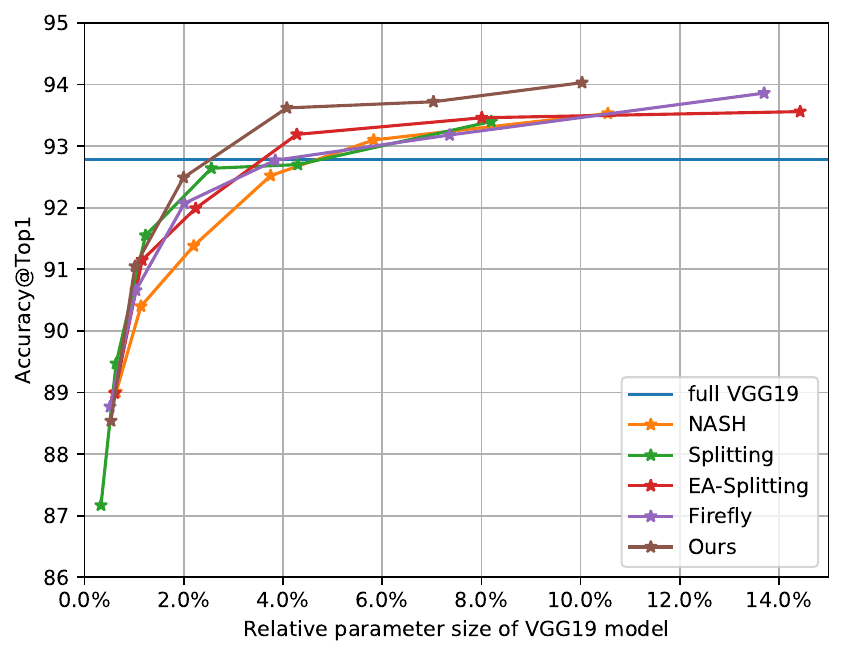}\tabularnewline
	\end{tabular}
	\caption{\label{fig:splitting_comparison}\textcolor{black}{This figure shows the comparison of different methods on growing network based on VGG19 structure. The y axis shows the test accuracy of final model and x axis shows the relative parameter size compared with a full VGG19 model with 20M parameters. }}  
\end{figure}

\begin{table}[H]
    \centering
    \scriptsize
\begin{tabular}{|cccc|c|cccc|}
\cline{1-4} \cline{2-4} \cline{3-4} \cline{4-4} \cline{6-9} \cline{7-9} \cline{8-9} \cline{9-9} 
\multicolumn{1}{|c|}{Methods} & \multicolumn{1}{c|}{Accuracy} & \multicolumn{1}{c|}{number of parameter} & running time(hour) &  & \multicolumn{1}{c|}{Methods} & \multicolumn{1}{c|}{accuracy} & \multicolumn{1}{c|}{number of parameter} & running time(hour)\tabularnewline
\hline 
Splitting ~\cite{wu2019splitting}& 87.17 & 0.07M & 3.26h &  & Splitting~\cite{wu2019splitting} & 92.64 & 0.51M & 8.45h\tabularnewline
EA-Splitting~\cite{wang2019energy} & 88.99 & 0.12M & 3.03h &  & EA-Splitting~\cite{wang2019energy} & 93.19 & 0.86M & 7.38h\tabularnewline
Firefly~\cite{wu2021firefly} & 88.77 & 0.10M & 2.78h &  & Firefly~\cite{wu2021firefly} & 92.77 & 0.77M & 6.53h\tabularnewline
Nash~\cite{elsken2017simple} & 89.01 & 0.13M & 4.13h &  & Nash~\cite{elsken2017simple} & 92.52 & 0.75M & 11.26h\tabularnewline
Ours & 88.54 & 0.11M & 0.72h &  & Ours & 93.62 & 0.82M & 2.44h\tabularnewline
\hline 
Splitting~\cite{wu2019splitting} & 89.47 & 0.13M & 4.80h &  & Splitting~\cite{wu2019splitting} & 92.70 & 0.86M & 10.06h\tabularnewline
EA-Splitting~\cite{wang2019energy} & 91.15 & 0.23M & 4.43h &  & EA-Splitting~\cite{wang2019energy} & 93.46 & 1.60M & 8.90h\tabularnewline
Firefly~\cite{wu2021firefly} & 90.66 & 0.21M & 4.02h &  & Firefly~\cite{wu2021firefly} & 93.18 & 1.47M & 7.96h\tabularnewline
Nash~\cite{elsken2017simple} & 90.40 & 0.23M & 6.53h &  & Nash~\cite{elsken2017simple} & 93.10 & 1.17M & 13.63h\tabularnewline
Ours & 91.05 & 0.20M & 1.03h &  & Ours & 93.72 & 1.41M & 4.33h\tabularnewline
\hline 
Splitting~\cite{wu2019splitting} & 91.55 & 0.25M & 6.55h &  & Splitting~\cite{wu2019splitting} & 93.40 & 1.64M & 13.3h\tabularnewline
EA-Splitting~\cite{wang2019energy} & 91.99 & 0.45M & 5.88h &  & EA-Splitting~\cite{wang2019energy} & 93.56 & 2.89M & 10.55h\tabularnewline
Firefly~\cite{wu2021firefly} & 92.07 & 0.40M & 5.21h &  & Firefly~\cite{wu2021firefly} & 93.86 & 2.74M & 9.55h\tabularnewline
Nash~\cite{elsken2017simple} & 91.38 & 0.44M & 8.96h &  & Nash~\cite{elsken2017simple} & 93.53 & 2.11M & 16.45h\tabularnewline
Ours & 92.49 & 0.40M & 1.51h &  & Ours & 94.03 & 2.00M & 5.31h\tabularnewline
\cline{1-4} \cline{2-4} \cline{3-4} \cline{4-4} \cline{6-9} \cline{7-9} \cline{8-9} \cline{9-9} 
\end{tabular}
    \caption{\textcolor{black}{This table shows the comparison between our methods and the selected growing  methods.Here accuracy denotes the testing accuracy, number of parameters means the number of parameters of the obtained model and running time means the training and growing time.}}
    \label{tab:split_compare}
\end{table}

\textcolor{black}{\subsection{Detailed Model Structure}}
\textcolor{black}{
To further compare our methods with the selected methods, we present the detailed model structure in this section.
Here for VGG19, the original version has 16 convolutional layers and 3 fully connected layers. 
For the growing experiments, we only consider the convolutional layers.
For convenience, we only show the comparison between Firefly and our method for the last 2 growing.
As shown in Table.~\ref{tab:structural_comp}, we can find some significant difference. 
First of all, our method tends to add more filters to the middle of the network.
Besides, our method adds only a few filters to the layers close to the output.
For Firefly~\cite{wu2021firefly}, it prefers more balanced structure compared with ours.
}

\begin{table}[htb]
    \centering
    
   \begin{tabular}{|c|c|c|c|c|}
\hline 

 & \multicolumn{1}{c}{Firefly} & Ours & \multicolumn{1}{c}{Firefly} & Ours\tabularnewline
\hline 
Layer1 & 68 & 17 & 73 & 17\tabularnewline
\hline 
Layer2 & 122 & 59 & 192 & 59\tabularnewline
\hline 
Layer3 & 128 & 102 & 193 & 102\tabularnewline
\hline 
Layer4 & 112 & 180 & 161 & 186\tabularnewline
\hline 
Layer5 & 142 & 220 & 207 & 308\tabularnewline
\hline 
Layer6 & 149 & 222 & 199 & 284\tabularnewline
\hline 
Layer7 & 134 & 122 & 180 & 128\tabularnewline
\hline 
Layer8 & 136 & 72 & 169 & 72\tabularnewline
\hline 
Layer9 & 115 & 48 & 147 & 48\tabularnewline
\hline 
Layer10 & 99 & 28 & 118 & 28\tabularnewline
\hline 
Layer11 & 73 & 20 & 90 & 20\tabularnewline
\hline 
Layer12 & 54 & 18 & 70 & 18\tabularnewline
\hline 
Layer13 & 48 & 12 & 69 & 12\tabularnewline
\hline 
Layer14 & 37 & 12 & 64 & 12\tabularnewline
\hline 
Layer15 & 62 & 12 & 90 & 12\tabularnewline
\hline 
Layer16 & 78 & 18 & 80 & 18\tabularnewline
\hline 
number of parameters & 1.47M & 1.41M & 2.74M & 2.00M\tabularnewline
\hline 
    \end{tabular}
    \caption{\textcolor{black}{This table shows the detailed structure comparison between our method and Firefly~\cite{wu2021firefly}. We present the last two model of the two methods. }}
    \label{tab:structural_comp}
\end{table}

\textcolor{black}{\section{Interpretation of $\alpha$ and $\kappa$ \label{interper}}}
\textcolor{black}{ 
In practice,  we donot apply the grid search for $\alpha$, and $\kappa$, while the defaulted configuration  for them would to make $\alpha$ small enough, and $\kappa$ large enough. 
Since parameter $\alpha$ is the step size in Euler discretization and $\kappa$ is the damping factor, the basic principle to configure these parameters is, to make $\alpha$ small enough and $\kappa$ large but not violating $\alpha\cdot\kappa < 2/(Lip+\nu-1)$, rather than the grid search for them. 
Particularly, we highlight several points about  the hyperparameter $\kappa$ and $\alpha$. 
 1) damping factor $\kappa>0$ is to make the path continuous while $\kappa\to \infty$ is to approximate the ISS dynamics exponentially; 2) step size $\alpha>0$ is small enough such that $\alpha\cdot\kappa < 2/(Lip+\nu-1)$ with convergence guarantee.\\
To see these points, consider the problem $y= X\beta^* +\epsilon$, $\gamma^* = D \beta^*$. Here $\gamma^*$ is sparse and $D$ denotes the transformation. The purpose is to estimate both $\beta^*$ and $\gamma^*$. Split LBI~\cite{huang18_acha} utilizes variable splitting to combine $L_2$-Boost of model parameter and LBI of split structure parameter to find structural sparsity.
Specifically, Split LBI~\cite{huang18_acha} derives the following updating equations,
\begin{subequations} 
	\begin{align}
	 \beta_{k+1} &= \beta_k-\kappa\alpha\nabla_{\beta}l(\beta_k, \gamma_k),\\
	 z_{k+1} &= z_k-\alpha\nabla_{\gamma}l(\beta_k, \gamma_k),\\
	 \gamma_{k+1} &= \kappa\cdot prox_{\|\cdot\|_1}(z_{k+1}),
	\end{align}
\end{subequations}
where the loss function is,
\begin{equation}
    l(\beta, \gamma) = \frac{1}{2n}\| y-X\beta\|_2^2 + \frac{1}{2\nu}\|\gamma -D \beta \|_2^2.
\end{equation}
Here the factor $\nu$ controls the relaxation of $\gamma$ in the neighborhood of $D\beta$ and
$prox_{\|\cdot\|_1}(\cdot)$ denotes the proximal mapping with $\ell_1$-norm. We can rewritten the equation, such that it can be viewed as a discretization of a differential inclusion. 
\begin{subequations} 
	\begin{align}
	 \beta_{k+1} / \kappa &= \beta_k / \kappa- \alpha\nabla_{\beta}l(\beta_k, \gamma_k),\\
	\rho_{k+1} + \gamma_{k+1}/\kappa  &= \rho_k + \gamma_k/\kappa -\alpha\nabla_{\gamma}l(\beta_k, \gamma_k),\\
	\rho_{k} &\in \|\gamma_k \|_1,
	\end{align}
\end{subequations}
Here $\alpha$ is the step size. And let step size $\alpha \to 0$, we can get the following dynamics,
\begin{subequations} 
	\begin{align}
	 \dot{\beta(t)}/\kappa &= -\nabla_{\beta}l(\beta(t), \gamma(t)),\\
\dot{\rho(t)} +	 \dot{\gamma(t)}/\kappa &= -\nabla_{\gamma}l(\beta(t), \gamma(t)),\\
	 \rho(t) &\in \partial\|\gamma(t)\|_1,
	\end{align}
\end{subequations}
Here $\kappa$ serves as the damping factor. When we let $\kappa \to \infty$, We can get the following dynamics,
\begin{subequations} 
	\begin{align}
	 0 &= -\nabla_{\beta}l(\beta(t), \gamma(t)),\\
\dot{\rho(t)}  &= -\nabla_{\gamma}l(\beta(t), \gamma(t)),\\
	 \rho(t) &\in \partial\|\gamma(t)\|_1 .
	\end{align}
\end{subequations}
It is called Split ISS~\cite{huang18_acha}. So $\kappa$ is to approximate the Split ISS dynamics exponentially. One can see that $\alpha$ and $\kappa$ have their meanings and we have to set their values accordingly. First, as to the damping factor $\kappa$, we can increase $\kappa$ to approximate ISS dynamics which is piece-wise constant. With larger $\kappa$, the regularization path is expected to be closer to the path of ISS, as shown in Figure.~\ref{fig:SLBI2} using the simulation example from~\cite{osher2016sparse}. Second, the step size $\alpha$ should be small enough. The product of the damping factor $\kappa$ and step size $\alpha$ controls the update size for the weights $\beta$. And they are not totally free. We have to control $\alpha\cdot\kappa$ in an appropriate range ($\alpha\cdot\kappa < 2/(Lip+\nu-1)$) for convergence. In Figure.~\ref{fig:SLBI}, we show an example of fixing $\kappa=1$ using the simulation dataset from~\cite{osher2016sparse}. With a small $\alpha=0.05$, we can get a stable path, while setting a large $\alpha=0.515$ causes the divergence of training loss and the oscillation of path. In our practice, we fix $\kappa$ and try to find suitably small $\alpha$ when using Split LBI. 
When extending Split LBI to train deep networks, we prefer to set $\kappa=1$ and find a suitable $\alpha$ so that the model can be trained in a stable way. We have updated the paper, and include the explanation above into the Appendix of our paper. \\
Finally we make a note on ISS history. For the Inverse Scale Space (ISS) method~\cite{burger2005nonlinear}, it was firstly used for image restoration. And the name comes from the observation that large-scale features are recovered faster than small-scale feature.
In ~\cite{osher2016sparse}, this kind of dynamics is imported to recover sparse signals from noisy measurements.
~\cite{osher2016sparse} utilizes differential inclusion to formalize two dynamics, Bregman ISS and Linearized Bregman ISS.
And the Linearized Bregman ISS is the damping version of Bregman ISS with damping factor $\kappa$. 
When $\kappa \to \infty$, it is reduced to Bregman ISS.
The path of Linearized Bregman ISS can approximate the ISS dynamic exponentially as $\kappa$ increases. 
And Linearized Bregman Iteration(LBI) is the discretization of Linearized Bregman ISS with step size $\alpha$. }



\begin{figure}[htb]
    \centering
\begin{tabular}{cc}
\includegraphics[width=7.0cm]{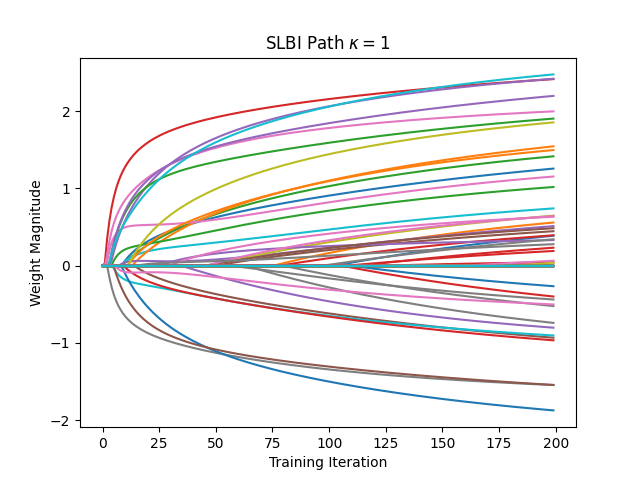}  
\includegraphics[width=7.0cm]{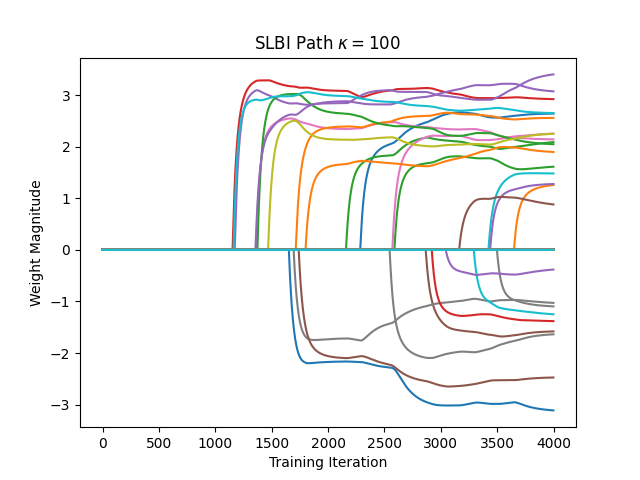} 
\end{tabular}
    \caption{\textcolor{black}{This figure shows the training curve of Split LBI with different damping factor $\kappa$. Here we select $\nu=30$, and conduct experiments on $\kappa=1$ and $\kappa=100$. The step size for $\kappa=1$ is 0.1 and the step size for $\kappa=100$ is 0.005. }}
    \label{fig:SLBI2}
\end{figure}

\begin{figure}[htb]
    \centering
\begin{tabular}{ccc}
\includegraphics[width=5.0cm]{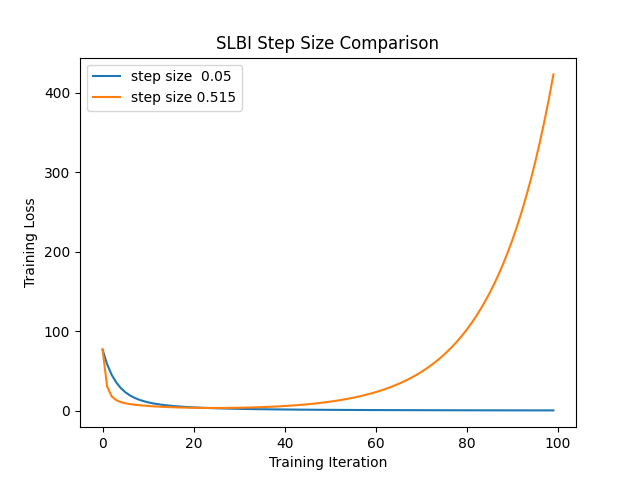} & 
\includegraphics[width=5.0cm]{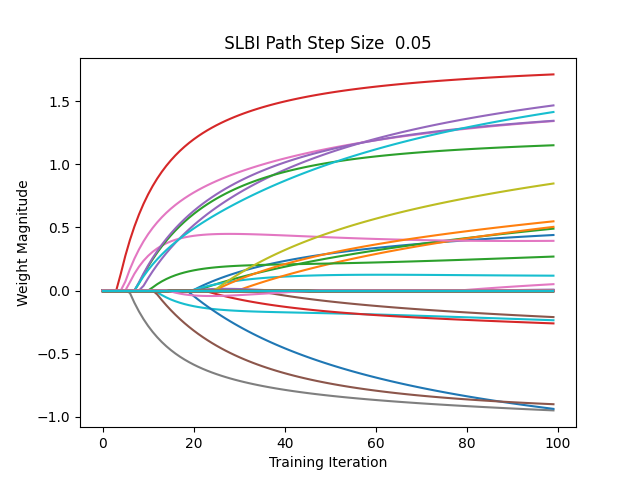}  
\includegraphics[width=5.0cm]{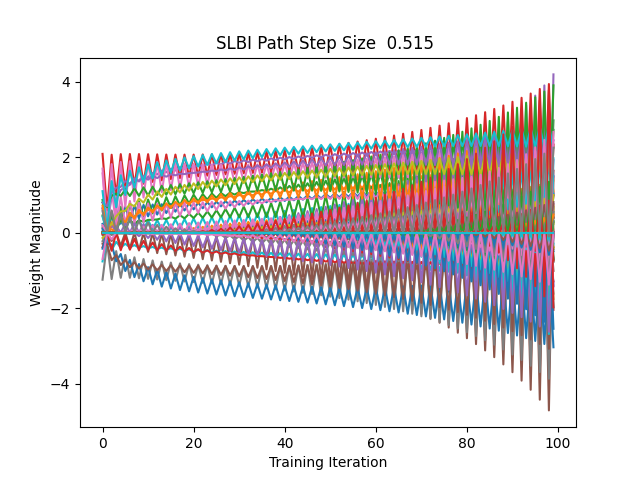} 
\end{tabular}
    \caption{\textcolor{black}{This figure shows the training curve of Split LBI with different step size $\alpha$. Here we set $\kappa=1$ and $\nu=100$, with large $\alpha$, 0.515 in this example,the training can diverge. }}
    \label{fig:SLBI}
\end{figure}

\end{document}